\title{Convergence and Sample Complexity of First-Order Methods \\ for Agnostic Reinforcement Learning}
\author{
Uri Sherman%
\thanks{Blavatnik School of Computer Science, Tel Aviv University; \texttt{urisherman@mail.tau.ac.il}.
}
\and
Tomer Koren%
\thanks{Blavatnik School of Computer Science, Tel Aviv University, and Google Research; \texttt{tkoren@tauex.tau.ac.il}.}
\and
Yishay Mansour%
\thanks{Blavatnik School of Computer Science, Tel Aviv University, and Google Research; \texttt{mansour.yishay@gmail.com}. 
}
}
\begin{document}
\maketitle

\begin{abstract}
    We study reinforcement learning (RL) in the agnostic policy learning setting, 
    where the goal is to find a policy whose performance is competitive with the best policy in a given class of interest $\Pi$---crucially, without assuming that $\Pi$ contains the optimal policy.
    We propose a general policy learning framework that reduces this problem to first-order optimization in a non-Euclidean space, leading to new algorithms as well as shedding light on the convergence properties of existing ones.
    Specifically, under the assumption that $\Pi$ is convex and satisfies a variational gradient dominance (VGD) condition---an assumption known to be strictly weaker than more standard completeness and coverability conditions---we obtain sample complexity upper bounds for three policy learning algorithms:
    \emph{(i)} Steepest Descent Policy Optimization, derived from a constrained steepest descent method for non-convex optimization;
    \emph{(ii)} the classical Conservative Policy Iteration algorithm \citep{kakade2002approximately} reinterpreted through the lens of the Frank-Wolfe method, which leads to improved convergence results;
    and \emph{(iii)} an on-policy instantiation of the well-studied Policy Mirror Descent algorithm.
    Finally, we empirically evaluate the VGD condition across several standard environments, demonstrating the practical relevance of our key assumption.
\end{abstract}

\section{Introduction}
Policy Optimization (PO) algorithms are a class of methods in Reinforcement Learning (RL; \citealp{sutton2018reinforcement, mmt2022rlbook}) in which an agent's policy is iteratively updated to minimize long-term cost, as defined by the environment's value functions.
Modern applications of PO methods \citep[e.g.,][]{lillicrap2015continuous, schulman2015high, akkaya2019solving, ouyang2022training} often involve large-scale environments that lack well-defined structure, and by that require function approximation techniques
in order to learn efficiently.
Typically, PO algorithms represent the agent’s policy using neural network models—commonly referred to as \emph{actor} networks. Notably, these setups are inherently agnostic: the learner searches for an assignment of network parameters that is competitive with the best achievable under the model, without any guarantee that the optimal policy is expressible by the actor architecture.

Motivated by this, we consider the problem of \emph{agnostic policy learning} in the general function approximation setup \citep{kakade2003sample,krishnamurthy2025role},
where the learner is given optimization oracle access to a policy class $\Pi$ and is required to find a policy that performs nearly as well as the best in-class policy. 
It is well known that $\Pi$-completeness and coverage conditions allow for sample efficient policy learning \citep{agarwal2019reinforcement,agarwal2021theory,bhandari2024global},\footnote{Roughly speaking, $\Pi$-completeness is defined as closure to policy improvement steps, and coverage as a constant upper bound on the worst case ratio between the initial state distribution and the optimal policy occupancy measure.}
however, completeness implies realizability, and both conditions are generally deemed too strong to hold in practice. Furthermore, the extent to which they hold or not is hard to measure empirically. 

\begin{wrapfigure}{r}{0.45\textwidth}
  \centering
  \begin{tikzpicture}[scale=0.5, line width=0.1pt, font=\sffamily\footnotesize]

\begin{scope}[shift={(1cm,0cm)}]
\node[font=\ttfamily, text=gray] at (-3,2.8){Realizable RL};
\node[font=\ttfamily, text=gray] at (2.6,2.73){Agnostic RL};

  \fill[blue!5] (-2.5,0) circle (2.5);
  \fill[purple!5] (1.8,0) circle (2.5);
  \fill[green!10, opacity=0.7] (0,-1) circle (2.5);

%
  \begin{scope}
    \clip (-2.5,0) circle (2.5);
    \clip (1.8,0) circle (2.5);
    \fill[bluegreen!40, opacity=0.8] (0,-1) circle (2.7);
  \end{scope}

  
  \node at (-4.5, 1) {Completeness};
  \node at (3.5, 1) {Coverage};
  \node at (0, -3.3) {VGD};
  
 \draw[densely dotted, line width=.5pt] (0,-2.8) -- (0,2.8);
\end{scope}
\end{tikzpicture}
  \caption{
        \protect\tikz[baseline=0.4ex]
		\protect\fill[bluegreen, opacity=0.5] (0,0) rectangle (2ex,2ex);
		 Completeness + coverage allows for sample efficient policy learning \citep{kakade2002approximately}. These conditions imply in particular, realizability.
    	\protect\tikz[baseline=0.4ex]
		\protect\fill[bluegreen, opacity=0.5] (0,0) rectangle (2ex,2ex);
		\protect\tikz[baseline=0.4ex]
		\protect\fill[green!20, opacity=1] (0,0) rectangle (2ex,2ex);
		VGD allows for sample efficient learning, and in particular accommodates agnostic (non-realizable) setups.
  }
  \label{fig:vgd_completeness_coverage}
\end{wrapfigure}
In this work, we adopt instead the assumption that $\Pi$ satisfies a variational gradient dominance condition \citep{agarwal2021theory,xiao2022convergence, bhandari2024global}, which is known to be strictly weaker than completeness and coverage, and in particular, may accommodate non-realizable setups \citep{bhandari2024global,sherman2025convergence} (see \cref{fig:vgd_completeness_coverage}, and \cref{sec:comp_prior_art} for further details). Furthermore, the VGD parameters are to a degree measurable in practice, and appear better suited to characterize convergence of first-order policy learning algorithms. Indeed, the empirically observed parameters are reasonable compared to the theoretical, hard to measure ones associated completeness and coverage; and the VGD assumption pinpoints the precise properties required in convergence analysis under completeness and coverage.

\subsection{Our contributions}
In this work, we make the following contributions.
\paragraph{Policy learning framework.}
We introduce a natural policy learning framework that reduces Agnostic RL to first order optimization in a constrained non-convex non-Euclidean setup.
Consequently, we obtain practical, sample efficient%
\footnote{By sample efficient, we refer to methods with convergence bounds that scale with the log-covering number of the policy class, but not with the cardinality of the state space.} 
policy learning algorithms, and along the way also improvements in state-of-the-art iteration complexity upper bounds. Importantly, our framework and reduction are completely independent of the choice of policy class parametrization.
In the function approximation policy learning setup, the primary way by which policies are produced is by constructing an objective function $\smash{\widehat \Phi_k} \colon \Pi \to \R$, and invoking an optimization oracle to compute an approximate minimizer:
\begin{align}\label{def:hat_phi_k_0}
    \pi^{k+1}
    \gets \argmin\nolimits_{\pi \in \Pi}
        \widehat \Phi_k (\pi)
    .
\end{align}
Roughly speaking, our reduction makes use of the policy gradient theorem \citep{sutton1999policy} in the direct parametrization case, along with an on-policy estimation scheme and a standard concentration argument to yield that with a suitable choice of $\smash{\widehat \Phi_k}$, \cref{def:hat_phi_k_0} produces a gradient step w.r.t.~the value function in policy (i.e., state-action, functional) space. This, combined with the local-smoothness property of the value function \citep{sherman2025convergence}, implies the algorithm may be cast as a first-order method taking gradient steps on a smooth objective. Crucially, unlike Euclidean smoothness used in prior work \citep[e.g.,][]{agarwal2021theory}, this leads to rates that are independent of the size of the state space. Furthermore, it is substantially different than operating in the parameter space of the policy parametrization \citep[e.g., ][]{mei2020global, yuan2022general, bhandari2024global}.

\paragraph{Non-Euclidean smooth constrained optimization.} 
We highlight smooth constrained non-convex optimization in a non-Euclidean space as the principal setting to which agnostic RL reduces to. In this context, we provide novel analyses that, to our knowledge, have not appeared in the literature previously, including (i) a steepest descent method for smooth non-convex constrained optimization, and (ii) an analysis for an approximate Frank-Wolfe that holds for VGD objectives (a weaker condition compared to convexity).
The constrained steepest descent method which we analyze here is a natural generalization of gradient descent to objectives smooth w.r.t.~a non-Euclidean norm. \citet{kelner2014almost} appear to be the first to analyze the method for convex functions in the unconstrained setting, while \citet{xiao2022convergence} provides an analysis for VGD objectives in the constrained Euclidean setup. Our work is the first to further extend the method to the constrained, non-Euclidean setup, using a non-standard notion of steepest descent direction w.r.t.~a constrained set of potential gradient mappings.
The optimization setup and our analyses naturally accommodate also \emph{local smoothness} of the objective function, which is crucial for the interesting cases of the reduction mentioned in the preceding paragraph.

\paragraph{Iteration and sample complexity for policy learning algorithms.}
Combining the elements above, and assuming the policy class satisfies a VGD condition and is convex, we obtain upper bounds on the sample complexity of several algorithms within our proposed framework. 
In particular, we propose a sample efficient \textbf{Steepest Descent Policy Optimization (SDPO)} method, based on a constrained steepest descent method for non-convex, non-Euclidean optimization which we analyze in this work for the first time.
We then revisit the classic \textbf{Conservative Policy Iteration} (CPI; \citealp{kakade2002approximately}) algorithm, and cast it as an instance of the well-known Frank-Wolfe \citep{frank1956algorithm} algorithm, leading to (i) improved iteration complexity, and (ii) a variant of CPI---Doubly Approximate CPI (DA-CPI)---which is sample efficient \emph{and} more practical, as we explain shortly in the discussion that follows.
Finally, we establish polynomial sample complexity for the well studied \textbf{Policy Mirror Descent} \citep{tomar2020mirror,  xiao2022convergence, lan2023policy} algorithm. To the best of our knowledge, our work is the first to obtain sample complexity upper bounds for PMD that are independent of the policy class parametrization. 
\begin{table}[t]
\centering
\small
\renewcommand{\arraystretch}{2.2}
\begin{tabular}{
	@{} >{\centering\arraybackslash}p{4.5cm} 
		>{\centering\arraybackslash}p{5.5cm} 
    	>{\centering\arraybackslash}p{0.6cm} 	
            >{\centering\arraybackslash}p{0.6cm} 
		>{\centering\arraybackslash}p{0.6cm} 
		>{\centering\arraybackslash}p{0.6cm} 
	}
\toprule
\textbf{Method} &
\textbf{Rate} &
\textbf{GP} &
\textbf{Ag} &
\textbf{NC} &
\textbf{AOE} \\
\midrule

\makecell{CPI$^{\,\rm a}$ \\ \citep{kakade2002approximately}} &
\(\displaystyle 
\tfrac{D_\infty}{\sqrt{K}}
\;+\;
\mathcal{E}(\Pi)\,D_\infty\) &
\cmark & \xmark & \cmark & \xmark 
\\

\makecell{Log-linear NPG$^{\,\rm b}$ \\ \citep{agarwal2021theory}} &
\(\displaystyle
\tfrac{1}{\sqrt K}
\;+\;
\sqrt{ \kappa\varepsilon + D_\infty \varepsilon_{\rm approx}}\) &
\xmark & \xmark & \cmark &  \cmark 
\\
\makecell{Log-linear NPG$^{\,\rm c}$ \\ \citep{yuan2023loglinear}} &
\(\displaystyle
\br{1-\tfrac{1}{D_\infty}}^K
\;+\;
D_\infty\sqrt{C_v (\varepsilon +  \varepsilon_{\rm approx})}\) 
&
\xmark & \xmark & \cmark & \cmark 
\\
\makecell{PMD$^{\,\rm d}$ \\  \citep{alfano2023novel} }&
\(\displaystyle
(1-\tfrac1D_\infty)^K
\;+\;
 D_\infty \sqrt{C_v\,\varepsilon_{\mathrm{pmdc}}}\) &
\cxmark & \xmark & \cmark & \cmark 
\\

\makecell{PMD \\ \citep{sherman2025convergence}} &
\(\displaystyle
\tfrac{\nu^{2}}{K^{2/3}}
\;+\;
(\cvgd + K^{1/6})\,\varepsilon^{1/4}
\;+\;
\varepsilon_{\mathrm{vgd}}\) &
\cmark & \cmark & \cxmark & \cmark 
\\

\midrule 

\makecell{SDPO \\ \textbf{(This Work)}} &
\(\displaystyle
\tfrac{\cvgd^{2}}{K^{2/3}}
\;+\;
\cvgd K^{1/6}\,\sqrt{\varepsilon}
\;+\;
\varepsilon_{\mathrm{vgd}}\) &
\cmark & \cmark & \xmark & \cmark 
\\

\makecell{CPI$^{\,\rm e}$ \\ \textbf{(This Work)}} &
\(\displaystyle
\tfrac{\cvgd^{2}}{K}
\;+\;
\cvgd\,\varepsilon
\;+\;
\varepsilon_{\mathrm{vgd}}\) &
\cmark & \cmark & \cmark & \xmark 
\\

\makecell{DA-CPI \\ \textbf{(This Work)}} &
    \(\displaystyle
    {\tfrac{\cvgd^2}{K^{2/3}} 
    	+ \cvgd^2\varepsilon^{2/3}  K^{2/3}}
        +\epsvgd
    \) 
&
\cmark & \cmark & \xmark & \cmark 
\\
\bottomrule
\\
\end{tabular}
\vspace{-3ex}
\caption{
Comparison of different policy optimization algorithms.
\textbf{Rate:} Gives the suboptimality after $K$ iterations, as a function of error terms.
$\varepsilon$ denotes the value fitting error under the relevant sampling distribution, and $D_\infty$ the distribution mismatch coefficient. $\cbr{\cE(\Pi), \varepsilon_{\rm approx}, \varepsilon_{\rm pmdc}}$ all measure some form of ``completeness error''; refer to \cref{sec:comp_prior_art} for further details.
\textbf{GP:} Whether the method applies for general parameterizations of $\Pi$;
\textbf{Ag:} Whether it gives meaningful guarantees in an agnostic setting under the VGD assumption (replace $\nu \to D_\infty$ to compare with non-agnostic bounds, error floors are similar);
\textbf{NC:} Whether it applies for Non-Convex $\Pi$;
\textbf{AOE:} Whether it is actor-oracle efficient.
\textbf{[a, b, c, d, e]:} [a] The dependence on the completeness error is given in \citep{scherrer2014local}. [b] $\kappa$ relates to an eigenvalue condition of the state-action features covariance under the learning distribution. [b+c] Both works also provide bounds in terms of a bias error which we do not include here. [c+d] These rates require geometrically increasing step sizes, for constant step sizes sub-linear rates exist.
[e] Our version of CPI makes a different choice of step sizes.
}
\vspace{-3ex}
\label{tab:po_algs_comparison}
\end{table}

\cref{tab:po_algs_comparison} gives a detailed comparison between several algorithms of interest.
In particular, our SDPO bound provides a substantial improvement over PMD in the agnostic setting \citep{sherman2025convergence}, by obtaining $\sqrt \varepsilon$ error dependence rather than $\smash{\varepsilon^{1/4}}$. When converting the result to a sample complexity upper bound this becomes significant. Furthermore, SDPO obtains better dependence on the action set cardinality $A$ when applied with the $L^1$ action norm, thereby lifting one of the two barriers left in \citet{sherman2025convergence} to obtain rates for large action spaces (rates that scale at most logarithmically with $A$). The same is true for DA-CPI, which also improves upon the guarantee of PMD in the agnostic setting in a similar fashion.

While our new bound for CPI provides an even sharper improvement, it is important to note that CPI in its original form is not as practical as the other algorithms we consider, in the following sense.
Let us call a policy obtained by an invocation to the optimization oracle, such as $\pi^{k+1}$ in \cref{def:hat_phi_k_0}, an \emph{actor}. 
We say a policy learning algorithm is actor-oracle efficient (AOE) if the following two conditions hold. (i) The objective functions $\smash{\widehat \Phi_k}$ given to the optimization oracle can be evaluated in time that is independent of the size of the state space, and polynomial in other problem parameters.\footnote{The intention is that for any fixed $\pi$, $\widehat \Phi_k(\pi)$ is computable efficiently, that is, independently of the state space cardinality.} (ii) The actor space complexity---i.e., the maximal number of actors the algorithm requires to maintain in memory at any given time---is $O(1)$. Actor-oracle inefficient algorithms are generally not feasible (at least at the present time) for practical applications; for example, actor-memory linear in the number of iterations requires maintaining a prohibitively large amount of separate neural network models in memory. As we discuss further in \cref{sec:cpi}, CPI requires linear actor memory, while PMD, SDPO, and DA-CPI are all actor efficient, requiring at most two actor models at any given time.

\subsection{Related work}
There is a rich line of work that studies the PMD algorithm \citep{agarwal2021theory,xiao2022convergence,lan2023policy,alfano2023novel,ju2022policy}, however these, including \cite{sherman2025convergence}, either focus only on the optimization setup, or consider sample complexity subject to specific parametrization choices. The CPI algorithm was originally introduced by \citet{kakade2002approximately}, and its guarantees in terms of a completeness error was derived in \citet{scherrer2014local} (see also \citealp{agarwal2019reinforcement}).
Our work is directly inspired by \citet{sherman2025convergence}, 
where the connection of PMD to a constrained non-Euclidean optimization setup was recently established. Here, we take a more problem-centric view of agnostic RL and establish a broader connection between policy learning and optimization. Below, we survey related lines of work in more detail.


\paragraph{Policy learning in the tabular or function approximation setups.}
Prototypical policy optimization methods in the tabular setup include variants of the PMD algorithm \citep{tomar2020mirror,xiao2022convergence,lan2023policy}, most commonly negative-entropy regularized PMD which is also known to be equivalent to the Natural Policy Gradient (NPG; \citealp{kakade2001natural}). Additional works that study PMD in the tabular setup include \citet{geist2019theory,lan2023policy,johnson2023optimal,zhan2023policy}. The modern analyses of PMD build on the early work of \citet{even2009online} and online mirror descent \citep{beck2003mirror,nemirovskij1983problem} and as such require some form of completeness of the policy class. An exception is the recent work of \citet{sherman2025convergence} where PMD is instead cast as a Bregman proximal point method, thus relaxing completeness conditions by relying instead on the VGD assumption.

The majority of recent works into policy optimization with function approximation focus on PMD variants \citep[e.g., ][]{ju2022policy,grudzien2022mirror,alfano2023novel,yuan2023loglinear} 
or policy gradients in parameter space \citep[e.g., ][]{zhang2020global,mei2020global,mei2021leveraging,yuan2022general,mu2024second}.
The influential works of \citet{bhandari2024global,agarwal2021theory} set the stage for modern research works both into PMD and policy gradients.
The notion of variational gradient dominance (or variants thereof) has appeared in several works in the context of policy learning, mostly in relation to policy gradient methods in parameter space or in the tabular setup \citep{mei2020global, agarwal2021theory,xiao2022convergence,bhandari2024global}.

The recent works of \citet{jia2023agnostic, krishnamurthy2025role} study the boundaries of PAC learnability of policy learning, focusing on forms of environment access, refined policy class conditions, and / or specific structural environment models such as the Block MDP \citep{krishnamurthy2016pac}. Notably, works on agnostic policy learning are comparatively scarce, and mostly focus on specific environment structures \citep{sekhari2021agnostic}.
More generally, there exist a myriad of works that study RL with function approximation (some of which may be classified as studies of policy learning) subject to particular environment structure, which we briefly review below.

\paragraph{Approximate policy iteration methods.}
Another class of prototypical policy learning algorithms directly to related our work are approximate policy iteration methods, which in particular include CPI \citep{kakade2002approximately}, as well as, for example, API \citep{bertsekas1996neuro} and PSDP \citep{bagnell2003policy}.
\citet{scherrer2014local} provide performance bounds for CPI subject to the completeness error (see also \citealp{agarwal2019reinforcement}).
To our knowledge, all results for approximate policy iteration methods require completeness of the policy class either directly or indirectly (by quantifying the error w.r.t.~to completeness).
In particular, this is true for the algorithms presented in the work of \citet{scherrer2014approximate}, which provides a thorough comparison of different approximate policy iteration schemes, as well as additional convergence bounds. These include an infinite horizon version of PSDP, and a faster rate for a variant of CPI based on line search and/or subject to stronger concentrability assumptions.
A bounded distribution mismatch coefficient $D_\infty$ is the weakest among forms of concentrability \citep{munos2003error,munos2005error,chen2019information}, and it too, as mentioned priorly, is deemed too strong to hold in large scale problems.
 Notably, the infinite horizon version of PSDP requires non-stationary policies and therefore does not fit into the policy learning model we consider here. In addition, the improved rates obtained for CPI require stronger concentrability assumptions and are therefore in applicable in the setting we consider here.

\paragraph{RL with function approximation more generally.}
There is a rich literature on RL with function approximation that focuses on setups where the environment exhibits some form of inherent structure \citep{jiang2017contextual,dong2020root,jin2020provably,jin2021bellman,du2021bilinear}.
One popular variant for which statistical and computational efficient policy learning is possible is the Linear MDP \citet{yang2019sample,yang2020reinforcement,jin2020provably}, and more generally the low-rank MDP \citep{jiang2017contextual,agarwal2020flambe} where the state-action feature are not given to the learner.
Our line of inquiry in this work aims at having no explicit structural assumptions on the environment. We adopt instead an optimization flavored assumption on the relation of the policy class to the landscape of the objective function, which turns out to be weaker than the standard completeness and coverage. As such, our work is better understood as extending the lines of work mentioned in the two preceding paragraphs.

\section{Preliminaries}
\paragraph{Discounted MDPs.}
A discounted MDP $\cM$ is defined by a tuple
	$\cM = (\cS, \cA, \P, r, \gamma, \rho_0)$,
where $\cS$ denotes the state-space, $\cA$ the action set, $\P\colon \cS \times \cA \to \Delta(\cS)$ the transition dynamics, $r \colon \cS \times \cA \to [0, 1]$ the regret (i.e., cost) function, $0<\gamma< 1$ the discount factor, and $\rho_0\in \Delta(\cS)$ the initial state distribution.
We assume the action set is finite with $A\eqq |\cA|$, and identify $\R^A$ with $\R^\cA$. We additionally assume, for clarity of exposition and in favor of simplified technical arguments, that the state space is finite with $S\eqq |\cS|$, and identify $\R^\cS$ with $\R^S$. We further denote the effective horizon by $H\eqq \frac{1}{1-\gamma}$.
We emphasize that all our arguments may be extended to the infinite state-space setting with additional technical work.
An agent interacting with the MDP is modeled by a policy $\pi \colon \cS \to \Delta(\cA)$, for which we let $\pi_s \in \Delta(\cA)\subset\R^A$ denote the action probability vector at $s$ and $\pi_{s, a}\in [0,1]$ denote the probability of taking action $a$ at $s$. We denote the \emph{value} and Q-function by:
\begin{align*}
    V(\pi) \eqq  \E\sbr{\sum_{t=0}^\infty \gamma^t r(s_t, a_t) \mid s_0 \sim \rho_0, \pi}
    ; \quad 
    Q^\pi_{s, a} \eqq \E\sbr{\sum_{t=0}^\infty \gamma^t r(s_t, a_t) \mid s_0 = s, a_0=a, \pi}.
\end{align*}
We further denote the discounted state-occupancy measure of $\pi$ by $\mu^\pi$:
\begin{align*}
    \mu^\pi(s) \eqq 
    \br{1-\gamma}\sum_{t=0}^\infty \gamma^t \Pr(s_t = s \mid s_0 \sim \rho_0, \pi).
\end{align*}
It is easily verified that $\mu^\pi\in \Delta(\cS)$ is indeed a state probability measure.

\paragraph{Learning objective.}
We consider the problem of learning an approximately optimal policy within a given policy class $\Pi\subset \Delta(\cA)^\cS$:
\begin{align}\label{def:rl_opt_problem}
    \argmin_{\pi \in \Pi} V(\pi).
\end{align}
To avoid ambiguity, we denote the optimal value attainable by an in-class policy (a solution to \cref{def:rl_opt_problem}) by $\VstarPi$, and the optimal value attainable by any policy by $V^\star$:
\begin{align}
    \VstarPi \eqq \argmin_{\pi^\star \in \Pi}V(\pi^\star);
    \quad
    V^\star \eqq \argmin_{\pi^\star \in \Delta(\cA)^\cS}V(\pi^\star).
\end{align}
Throughout this paper, we let $\pi^\star=\argmin_{\pi\in \Pi}V(\pi)$, and $\mu^\star \eqq \mu^{\pi^\star}$. The policy class $\Pi$ will always be clear from context. When we are in need to refer to the optimal policy / occupancy measure w.r.t. $\Pi_{\rm all}\eqq \Delta(\cA)^\cS$, we will say so explicitly.

\subsection{Problem setup}
We consider agnostic policy learning in the standard offline optimization oracle model. 
Given an objective function $\phi \colon \Pi \to \R$ an approximate minimizer may be produced by invoking the oracle. 
We will use the following notation for approximate minimization. For any set $\cX$ and objective $\phi \colon \cX \to \R$, we denote the set of $\epsilon$-approximate minimizers by:
\begin{align}
    \argmineps[\epsilon]_{x\in \cX}\cbr{\phi(x)}
    \eqq \cbr{
    x \in \cX \mid \phi(x) \leq \min_{x'}\phi(x') + \epsilon
    }.
\end{align}
When the agent decides to initiate a rollout episode in the environment, she starts at an initial state $s_0\sim \rho_0$, then for $t=0, \ldots, $ chooses action $a_t$ given $s_t$, incurs $r(s_t, a_t)$, and transitions to $s_{t+1} \sim \P(\cdot|s_t, a_t)$, until she decides to terminate the episode.
The sample complexity of a learning algorithm is the number of interaction time steps required to reach an $\epsilon$-approximate optimal policy, i.e., a policy $\hat \pi$ such that $V(\hat \pi) - V^\star(\Pi) \leq \epsilon$. 
As mentioned in the introduction, our key assumption is that $\Pi$ satisfies the following variational gradient dominance condition.
\begin{definition}[Variational Gradient Dominance]\label{def:vgd_mdp}
    We say that $\Pi$ satisfies a $(\cvgd, \epsvgd)$-variational gradient dominance (VGD) condition w.r.t.~a value function $V\colon \Delta(\cA)^\cS \to \R$, 
    if there exist constants $\cvgd, \epsvgd>0$, such that for any policy $\pi\in \Pi$:
    \begin{align}\label{eq:M_vgd}
        V(\pi) - \min_{\pi^\star\in \Pi} V(\pi^\star) 
        \leq 
        \cvgd \max_{\tilde \pi \in \Pi}\abr{\nabla V(\pi), \pi - \tilde \pi} + \epsvgd.
    \end{align}
\end{definition}

We conclude this section by repeating notations used throughout.
\begin{align*}
    \mu^k \eqq \mu^{\pi^k}, \quad 
    Q^k \eqq Q^{\pi^k}, \quad 
    S \eqq |\cS|, \quad
    A \eqq |\cA|, \quad
    H \eqq \frac{1}{1-\gamma}.
\end{align*}

\section{Policy learning via non-Euclidean constrained optimization}
\label{sec:framework_reduction}
In this section, we present our policy learning framework, and explain the reduction to first order optimization in a constrained, smooth non-Euclidean optimization setup. 
The reduction consists of the following three ingredients; 
\begin{enumerate}[label=(\roman*),leftmargin=!]
    \item Agnostic policy learning is cast as a first-order optimization problem over the policy (sometimes referred to as  ``functional'') space $\Pi$ that, crucially---is constrained and exhibits non-Euclidean geometry. 
    
    \item Smoothness of the value function function is established w.r.t.~a (non-Euclidean) norm that measures distance between policies in a manner that is independent of the size of the state space. 
    The choice of norm can be global (e.g., $\norm{\cdot}_{\infty, 1}$), or a local norm induced by the on-policy occupancy measure.

    \item By the policy gradient theorem and a standard uniform concentration argument,
    we have that a gradient step on the value function may be approximated through on-policy sampling.
    
\end{enumerate}
In more detail, consider that a standard template for first order optimization in policy space may be framed as iterating through minimization problems of the form:
\begin{align}\label{def:pi_grad_update}
    \pi^{k+1}
    \gets 
    \argmin_{\pi \in \Pi} 
            \abr[b]{\nabla V(\pi^k), \pi} 
                + \frac1{\eta} \Div_{\Pi}\br[b]{\pi ,\pi^k}
        ,
\end{align}
where $\pi^1\in \Pi$ is a given initialization and $\eta>0$ a learning rate.
When $\Pi$ is a convex set and $\Div_{\Pi}$ is a distance-like function that is compatible with the geometry of the objective function $V$ --- e.g., informally, when $V$ is smooth w.r.t.~$\Div_{\Pi}$ --- some form of convergence may be established.
In what follows, by a direct application of the policy gradient theorem \citep{sutton1999policy} and the recently established local smoothness property \citep{sherman2025convergence}, we will demonstrate how policy learning may be reduced to first order optimization in the form of \cref{def:pi_grad_update}.
Let $\pi^k$ is the agent's policy on iteration $k$, and let $\mu^k, Q^k$ be it's occupancy measure and action-value function.
By the policy gradient theorem \citep{sutton1999policy}, we have that for any policy $\pi$:
\begin{align*}
    \E_{s \sim \mu^k}\sbr{
            H\abr{Q^{k}_{s}, \pi_s} 
        }
    =\abr[b]{\nabla V(\pi^k), \pi} 
    .
\end{align*}
Thus, using a proper sampling mechanism while interacting with the environment, we may obtain a dataset $\cD_k$ 
consisting of states $s\sim \mu^k \eqq \mu^{\pi^k}$, and unbiased action-value estimates $\widehat Q^k_{s}$.
This gives an empirical version of the linearization 
$\frac HN \sum_{s\in \cD_k}
            \abr[b]{\widehat Q^{k}_{s}, \pi_s} 
        $ that concentrates about the true gradient as $N$ grows.
Conveniently, given a distance-like function $\Div\colon \R^\cA \times \R^\cA \to \R$ over the action space, taking an expectation w.r.t.~the on-policy distribution gives a policy distance-like function 
$\Div_k(\pi, \pi') \eqq \E_{s\sim \mu^k}\Div \br[b]{\pi_s ,\pi'_s}$ w.r.t.~which the value function is smooth \citep{sherman2025convergence}. Importantly, in order to enjoy this smoothness, we must incorporate $\epsexpl$-greedy exploration. To that end, we define the exploratory verion of $\Pi$ as follows:
\begin{align}\label{def:Pi_expl}
    \Pi^\epsexpl \eqq \cbr{(1-\epsexpl)\pi + \epsexpl u \mid 
    \pi \in \Pi, u_{s,a} \equiv 1/A\;\forall s, a}
\end{align}
        
With this in mind, we consider on-policy algorithms, 
all of which hinge on optimizing  empirical surrogates to the full gradient objective function. Concretely, the update step in each algorithm is of the form:
\begin{align}\label{def:hat_phi_k}
        \pi^{k+1}
        \gets 
        \argmin_{\pi\in \Pi^{\epsexpl}}\cbr{
        \widehat \Phi_k(\pi) \eqq
        \frac H N \sum_{s \in \Dst_k} {
            \abr{\widehat Q^{k}_{s}, \pi_s} 
            + \frac1{\eta} \Div\br[b]{\pi_s,\pi_s^k}
        }}.
   \end{align}
   The preceding discussion implies that, when $N$ is sufficiently large, \cref{def:hat_phi_k} is an approximate version of \cref{def:pi_grad_update} with a distance measure $\cD_\Pi$ that is adapted to local-smoothness of the objective, and as such \cref{def:hat_phi_k} is an instance of smooth non-convex optimization in a non-Euclidean space.
   Primarily, the algorithms we consider in \cref{sec:algs} differ in the choice of $\Div$.
    For the formal definition of the optimization setup our reduction leads to, we refer the reader to \cref{sec:opt}.
    Finally, we note that while our algorithms SDPO, DA-CPI, and PMD operate over the $\epsexpl$-exploratory version of $\Pi$, their output policies may be transformed back into $\Pi$ with negligible loss of the objective, hence they are in fact proper agnostic learning algorithms.

\section{Policy learning algorithms}
\label{sec:algs}
In this section, we present our policy learning algorithms in their idealized form. Given the discussion from \cref{sec:framework_reduction}, sample complexity of each algorithm follows through the same algorithmic template and argument; we provide the full details in \cref{sec:sc}.

\subsection{Steepest Descent Policy Optimization}
In this section, we present our first algorithm, which we derive  from a generalization of gradient descent to non-Euclidean norms. Given a differentiable objective $f\colon \R^d \to \R$, an unconstrained, Euclidean gradient descent step can be written as:
\begin{align*}
    x^+ = x - \eta \nabla f(x)
    = \argmin_{y\in \R^d} \abr{\nabla f(x), y} + \frac{1}{2\eta}\norm{y - x}_2^2.
\end{align*}
When the objective $f$ is smooth w.r.t.~$\norm{\cdot}_2$ and the step size is chosen appropriately, it is guaranteed that the step decreases the objective value, which can by harnessed to obtain convergence to a stationary point in a non-convex setting.
A natural generalization of the gradient descent step to accommodate non-Euclidean geometries consists of simply replacing the proximity term $\norm{y - x}_2^2$ with any other norm $\norm{\cdot}$:
\begin{align*}
    x^+ = \argmin_{y\in \R^d} \abr{\nabla f(x), y} + \frac{1}{2\eta}\norm{y - x}^2.
\end{align*}
As in the Euclidean case, when $f$ is smooth w.r.t.~$\norm{\cdot}$ and the step size is chosen appropriately, the step decreases the objective value, which leads to convergence when applied iteratively over $K$ steps. A relatively straightforward argument can be employed to establish $O(1/\sqrt K)$ convergence to an approximate stationary point.
The work of \citet{kelner2014almost} appears to be the first to prove convergence of $O(1/K)$ assuming convexity of $f$, at least in this general form, as noted by the authors. In this work, we analyze for the first time the constrained version of the steepest descent method, which requires some care and extra machinery to cope simultaneously with constraints and the non-Euclidean nature of the updates. We obtain a $O(1/K)$ upper bound for VGD functions (a condition weaker than convexity) and $O(1/\sqrt K)$ for general non-convex functions. To the best of our knowledge, this method was not considered by any prior work in the constrained setting, for any class of objective functions.

\begin{algorithm}[tb]
   \caption{Steepest Descent Policy Optimization (SDPO)}
   \label{alg:sdpo}
\begin{algorithmic}
   \STATE {\bfseries Input:} 
   $K\geq 1, \eta > 0, \varepsilon > 0, \epsexpl >0, \Pi\in \Delta(\cA)^\cS$, and action norm $\norm{\cdot}_\circ\colon\R^\cA \to \R$.
   \STATE Initialize $\pi^1 \in \Pi^\epsexpl$
   \FOR{$k=1$ {\bfseries to} $K$}
   \STATE Update
   \begin{aligni*}
        \pi^{k+1}
        \gets 
        \argmineps_{\pi\in \Pi^\epsexpl}
        \E_{s\sim \mu^k} \sbr{
            H\abr{Q^{k}_{s}, \pi_s} 
            + \frac1{2\eta} \norm{\pi_s - \pi_s^k}_\circ^2
        }
   \end{aligni*}
   \ENDFOR
   \RETURN $\hat \pi \eqq \pi^{K+1}$
\end{algorithmic}
\end{algorithm}

The iteration complexity given by \cref{thm:sdpo} below, 
follows by our reduction explained in \cref{sec:framework_reduction} (excluding the probabilistic part). The $\varepsilon$ parameter passed to \cref{alg:sdpo} should be understood as the sum of the generalization error, originating in the noisy estimates of the objective, and optimization error, originating from the error of the optimization oracle.
\begin{theorem}\label{thm:sdpo}
    Let $\Pi$ be a convex policy class that satisfies $(\cvgd, \epsvgd)$-VGD w.r.t.~$\cM$.
    Suppose that SDPO (\cref{alg:sdpo}) is executed with the $L^1$ action norm $\norm{\cdot}_1$.
    Then, after $K$ iterations, with appropriately tuned $\eta$ and $\epsexpl$, the output of SDPO satisfies:
    \begin{align*}
        V(\hat \pi) - V^\star(\Pi)
        &= O\br{
        \frac{ \cvgd^2 A H^{4}}{ K^{2/3}}
        + \cvgd H^3 \sqrt A K^{1/6}\sqrt{\varepsilon}
        + \epsvgd}.
    \end{align*}
\end{theorem}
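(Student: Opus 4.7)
The plan is to invoke the general non-Euclidean constrained steepest descent analysis alluded to in the introduction and the framework of \cref{sec:framework_reduction}, specialized to the value function $V$ restricted to $\Pi^\epsexpl$ with the local on-policy norm $\norm{\pi-\pi'}_{\mu^k,1}^2 \eqq \E_{s\sim\mu^k}\norm{\pi_s-\pi'_s}_1^2$. The key nontrivial inputs I would plug in are: (i) convexity of $\Pi^\epsexpl$ (immediate from convexity of $\Pi$); (ii) local smoothness of $V$ w.r.t.\ the on-policy $L^1$ norm, borrowed from \citet{sherman2025convergence}, which yields a smoothness constant $\beta$ that scales polynomially in $H$, $A$, and $1/\epsexpl$ (the $1/\epsexpl$ dependence comes from the standard lower bound on action probabilities enforced by the exploratory relaxation); and (iii) the $(\cvgd,\epsvgd)$-VGD hypothesis, which, because $\Pi$ is convex and the relaxation to $\Pi^\epsexpl$ is an affine contraction, transfers to $\Pi^\epsexpl$ with only mild changes in constants.

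Given these ingredients, the core of the proof is a standard two-lemma pattern adapted to the non-Euclidean constrained setting. First, a descent lemma: exploiting $\beta$-smoothness of $V$ w.r.t.\ $\norm{\cdot}_{\mu^k,1}$, the idealized SDPO update with $\eta = \Theta(1/\beta)$ produces per-iteration decrease
\[
V(\pi^{k+1}) \le V(\pi^k) - \tfrac{\eta}{2}\norm{G_k}_\star^2 + O(\varepsilon),
\]
where $G_k$ is the non-Euclidean gradient mapping associated with the constrained step and the $O(\varepsilon)$ term absorbs the $\argmineps$ oracle slack together with the sampling error hidden in $\widehat Q^k$ and $\mu^k$. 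Second, a first-order dominance lemma: the VGD assumption applied at $\pi^k$, combined with the optimality conditions of the proximal step, upper bounds $\max_{\tilde\pi\in\Pi}\langle\nabla V(\pi^k),\pi^k-\tilde\pi\rangle$ by $\norm{G_k}_\star$ times a constant depending only on the diameter of $\Pi^\epsexpl$ in the relevant norm, so that $V(\pi^k) - V^\star(\Pi^\epsexpl) \le \cvgd\norm{G_k}_\star + \epsvgd$. Chaining the two inequalities in the usual way for gradient descent under first-order dominance yields an $O(\cvgd^2\beta/K)$ rate on the best-iterate (or final-iterate) suboptimality against $V^\star(\Pi^\epsexpl)$.

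To translate this back into a bound against $V^\star(\Pi)$, I would use that every $\pi\in\Pi$ admits an exploratory counterpart $\pi^\epsexpl\in\Pi^\epsexpl$ with $\abs{V(\pi^\epsexpl)-V(\pi)}=O(\epsexpl H)$, so $V^\star(\Pi^\epsexpl)-V^\star(\Pi)=O(\epsexpl H)$; an analogous argument on the output side allows $\hat\pi$ to be mapped back into $\Pi$ (if desired) with only $O(\epsexpl H)$ loss. Combining everything gives the schematic bound
\[
V(\hat\pi)-V^\star(\Pi) = O\!\br{\tfrac{\cvgd^2\beta(\epsexpl)}{K}} + O(\epsexpl H) + (\text{error in }\varepsilon) + \epsvgd,
\]
after which $\eta$ is set to $\Theta(1/\beta(\epsexpl))$ and $\epsexpl$ is chosen to balance the smoothness term $\beta(\epsexpl)/K$ against the exploration penalty $\epsexpl H$, producing the $K^{-2/3}$ scaling and the advertised dependence on $\cvgd,A,H$.

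The main obstacle I expect is the careful bookkeeping in the last step: because $\beta$ grows like $1/\epsexpl$ (through the local-smoothness constant) while $\epsexpl$ must itself be tuned in $K$, the noise contribution from the $\varepsilon$-approximate oracle propagates through the descent-plus-dominance chain with an inverse-$\epsexpl$ amplification, which is what produces the seemingly odd $K^{1/6}\sqrt{\varepsilon}$ factor rather than a clean $\sqrt{\varepsilon}$. Verifying that the constrained, non-Euclidean steepest-descent direction is well-defined under the local on-policy norm (which varies with $k$), and that the resulting gradient mapping still pairs correctly against the VGD variational inequality over the global set $\Pi$, is the other technical point that needs careful attention.
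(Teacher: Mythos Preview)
Your high-level skeleton matches the paper's: cast SDPO as constrained steepest descent in policy space with the local on-policy $L^1$ norm, verify local smoothness and that VGD transfers to $\Pi^\epsexpl$, apply a generic steepest-descent convergence theorem, then tune $\eta=\Theta(1/\beta)$ and $\epsexpl\propto K^{-2/3}$ to balance the smoothness blow-up against the exploration penalty. (The latter enters the paper's argument via the perturbed VGD floor $\tildepsvgd=\epsvgd+O(\cvgd H^2 A\,\epsexpl)$ rather than a separate $O(\epsexpl H)$ value bias as you wrote, but this is cosmetic.)

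The gap is in your ``first-order dominance lemma.'' You assert that optimality conditions of the proximal step give $\max_{\tilde\pi}\langle\nabla V(\pi^k),\pi^k-\tilde\pi\rangle\lesssim D\,\norm{G_k}_\star$, but this is exactly where the standard gradient-mapping machinery breaks down in the non-Euclidean constrained setting: the proximal term $\tfrac{1}{2\eta}\norm{\cdot}_1^2$ is neither differentiable nor strongly convex, so there is no clean first-order optimality condition tying $G_k$ to the linear minimization over $\Pi^\epsexpl$. The paper does not use a gradient mapping at all. Instead it introduces a \emph{steepest descent magnitude} $\Des_x=\max_{g\in\cG_x}\{\langle\nabla f(x),g\rangle-\tfrac12\norm{g}_x^2\}$ over the set of feasible displacements $\cG_x=\tfrac{1}{\eta}(x-\cX)$, establishes the descent lemma directly in terms of $\Des_x$, and then proves (via a direction-by-direction case analysis distinguishing whether the unconstrained steepest direction along $u$ remains in $\cG_x$) that $\max_{y\in\cX}\langle\nabla f(x),x-y\rangle\le 2D\max\{\Des_x,\sqrt{\Des_x}\}$. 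The $\max\{\cdot,\sqrt{\cdot}\}$ forces an additional appeal to the local Lipschitz bound $\norm{\nabla V}^*\le H^2$ to close the quadratic recursion $\omega E_k^2\le E_k-E_{k+1}+\varepsilon$; this whole mechanism is what your ``standard two-lemma pattern'' is missing, and the paper explicitly flags the constrained non-Euclidean steepest-descent analysis as new. You do note at the end that the pairing of the steepest-descent direction with the VGD inequality ``needs careful attention,'' but it is the crux, not a detail.

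A smaller quantitative point: the local-smoothness constant from the cited lemma scales as $\sqrt{A}\,H^3/\sqrt{\epsexpl}$, not $1/\epsexpl$; the $\sqrt{\epsexpl}$ in the denominator is precisely what makes the final balancing yield $K^{-2/3}$ rather than $K^{-1/2}$.
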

As mentioned in the introduction, SDPO improves upon the previous guarantees of PMD \citep{sherman2025convergence} in two regards; (i) dependence on the error $\varepsilon^{1/4} \to \varepsilon^{1/2}$, and (ii) dependence on $A$. Further, for the case of Euclidean action geometry, the SDPO analysis can be seen as a tighter analysis for PMD with Euclidean action-regularization case (note that the space the algorithms operate in is still non-Euclidean, only the action space is).

\subsection{Conservative Policy Iteration}
\label{sec:cpi}
In this section, we present our results relating to the CPI algorithm (\citealp{kakade2002approximately}; see also \citealp{agarwal2019reinforcement}).
We first consider CPI in its original form, presented in \cref{alg:cpi}. 
\citet{kakade2002approximately} established, that when the step sizes $\eta_k$ are chosen ``greedily'' so as to maximize the observed advantage gain, an $O(1/\varepsilon^2)$ iteration complexity follows, as long as the policy class is complete and the distribution mismatch is bounded (see our introduction and \cref{sec:comp_prior_art}). 
\begin{algorithm}[ht!]
    \caption{Conservative Policy Iteration (CPI; \citealp{kakade2002approximately})} 
    \label{alg:cpi}
	\begin{algorithmic}
	    \STATE \textbf{input:} Initial policy $\pi^1\in\Pi$, Error tolerance $\varepsilon>0$
           \FOR{$k=1,2, \ldots,$}
           \STATE Update
           \begin{aligni*}
                \tilde \pi^{k+1}
                \gets 
                \argmineps_{\pi\in \Pi}
                \E_{s\sim \mu^k}
                    \abr{H Q^{k}_{s}, \pi_s - \pi^k_s} 
           \end{aligni*}
			\STATE Set
                $\pi^{k+1}  = (1-\eta_k) \pi^k + \eta_k \tilde \pi^{k+1}$
            \ENDFOR
	\end{algorithmic}
\end{algorithm}

However, as it turns out, this is not the optimal step size choice. The following two observations imply that CPI is (an approximate version of) the Frank-Wolfe \citep{frank1956algorithm} algorithm applied in state-action space, with policies as the optimization variables and the value function as the objective. Indeed, we have that (i) $\E_{s\sim \mu^k}\abr{Q^{k}_{s}, \pi_s - \pi^k_s} = \tfrac1H \abr{\nabla V(\pi^k), \pi - \pi^k}$, and further that (ii) the value function is $(2 H^3)$-smooth w.r.t.~the $\norm{\cdot}_{\infty,1}$ norm --- a property which we prove in \cref{sec:proofs_cpi_dacpi}. Given (i) and (ii), it follows that convergence of CPI may be established through a standard FW analysis, where indeed, greedily choosing the step sizes gives $O(1/\varepsilon^2)$ iteration complexity, while the choice of $\eta_k\approx\frac1k$ gives $O(1/\varepsilon)$ iteration complexity.
\begin{theorem}\label{thm:cpi}
	Let $\Pi$ be a policy class that satisfies $(\cvgd, \epsvgd)$-VGD w.r.t.~$\cM$.
    Suppose that CPI (\cref{alg:cpi}) is executed with the step size choices $\eta_k=\frac{2\cvgd}{k+2}$ for $k=1, \ldots, K$. Then, we have the guarantee that:
    \begin{align*}
    	V(\pi^K) - V^\star(\Pi)
    	\leq \frac{8(2\cvgd^2 + 1)H^3}{K} + 2\cvgd\varepsilon
        + \epsvgd
    \end{align*}
\end{theorem}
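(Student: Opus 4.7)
The plan is to interpret CPI as an approximate Frank--Wolfe method acting on $V$ viewed as a function on the product simplex, leveraging the two observations noted right before the theorem: the identity $\mathbb{E}_{s\sim\mu^k}\abr{HQ^k_s,\pi_s-\pi^k_s}=\abr{\nabla V(\pi^k),\pi-\pi^k}$ from the policy gradient theorem, and the $(2H^3)$-smoothness of $V$ with respect to $\norm{\cdot}_{\infty,1}$. The VGD condition will then play the role that convexity plays in the standard FW analysis, converting a linear gap $\min_{\pi\in\Pi}\abr{\nabla V(\pi^k),\pi-\pi^k}$ into an upper bound on the suboptimality $V(\pi^k)-V^\star(\Pi)$.

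Concretely, I would carry out the following steps. First, apply smoothness at the mixture update $\pi^{k+1}=\pi^k+\eta_k(\tilde\pi^{k+1}-\pi^k)$: since $\norm{\tilde\pi^{k+1}-\pi^k}_{\infty,1}\le 2$, smoothness gives
\begin{equation*}
V(\pi^{k+1})\le V(\pi^k)+\eta_k\abr{\nabla V(\pi^k),\tilde\pi^{k+1}-\pi^k}+4H^3\eta_k^2.
\end{equation*}
Second, use the policy gradient identity to re-express the CPI inner problem in terms of $\nabla V(\pi^k)$, so that the $\varepsilon$-approximate minimizer satisfies $\abr{\nabla V(\pi^k),\tilde\pi^{k+1}-\pi^k}\le\min_{\pi\in\Pi}\abr{\nabla V(\pi^k),\pi-\pi^k}+\varepsilon$. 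Third, apply VGD to the right-hand side: $\min_{\pi\in\Pi}\abr{\nabla V(\pi^k),\pi-\pi^k}=-\max_{\pi\in\Pi}\abr{\nabla V(\pi^k),\pi^k-\pi}\le -\tfrac1{\cvgd}(V(\pi^k)-V^\star(\Pi))+\tfrac{\epsvgd}{\cvgd}$. Writing $\Delta_k\eqq V(\pi^k)-V^\star(\Pi)$, combining these yields the one-step recursion
\begin{equation*}
\Delta_{k+1}\le\Bigl(1-\tfrac{\eta_k}{\cvgd}\Bigr)\Delta_k+\eta_k\bigl(\tfrac{\epsvgd}{\cvgd}+\varepsilon\bigr)+4H^3\eta_k^2.
\end{equation*}

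Finally, I would unroll this recursion with $\eta_k=\tfrac{2\cvgd}{k+2}$. Subtracting the noise floor $\Delta_\infty\eqq\epsvgd+\cvgd\varepsilon$ cancels the linear-in-$\eta_k$ error term and reduces the recursion to $\Delta_{k+1}-\Delta_\infty\le\tfrac{k}{k+2}(\Delta_k-\Delta_\infty)+\tfrac{16H^3\cvgd^2}{(k+2)^2}$, which is the textbook FW recurrence. A direct induction with ansatz $\Delta_k-\Delta_\infty\le A/(k+1)$ and $A\asymp H^3\cvgd^2$ (with the $+8H^3$ piece absorbing the base case $\Delta_1\le H$) closes out at $A=8(2\cvgd^2+1)H^3$, producing the claimed rate. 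The main obstacles are bookkeeping rather than conceptual: tracking the factor $\cvgd$ through the step-size choice so that noise floor and quadratic term balance exactly, and dealing with the subtlety that for non-convex $\Pi$ the iterates $\pi^k$ (being convex combinations) need not lie in $\Pi$ where VGD was stated---so one either adds an explicit convexity assumption on $\Pi$, or verifies that the VGD inequality extends to the FW iterates through the mixing structure. The smoothness constant $2H^3$ with respect to $\norm{\cdot}_{\infty,1}$ is invoked as a black box here and is proven separately in the referenced appendix.
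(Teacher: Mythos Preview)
Your proposal is correct and follows essentially the same route as the paper: the paper reduces CPI to an abstract approximate Frank--Wolfe theorem (proved separately as \cref{thm:fw}) by invoking the policy gradient identity, the $(2H^3)$-smoothness with respect to $\norm{\cdot}_{\infty,1}$, and the diameter bound $D=2$, then unrolls exactly the recursion you wrote with $\eta_k=\tfrac{2\cvgd}{k+2}$. Your subtraction of the noise floor $\Delta_\infty$ is a slightly cleaner variant of the paper's direct unrolling (and in fact gives $\cvgd\varepsilon$ rather than $2\cvgd\varepsilon$), but the argument is otherwise identical. Regarding your concern about applying VGD at iterates $\pi^k\in\mathrm{conv}(\Pi)\setminus\Pi$: the paper's proof of \cref{thm:fw} likewise invokes VGD at such points and the paper simply asserts that convexity of $\Pi$ is not required, so you should not impose an assumption the paper itself omits.
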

As mentioned, completeness and coverage imply VGD (see \cref{sec:comp_prior_art} for the full details), hence \cref{thm:cpi} is a strict improvement over the classically known $O(1/\sqrt K)$ guarantees established in \citet{kakade2002approximately}. Also in \cref{sec:comp_prior_art}, we show our bounds subject to VGD subsume those with the completeness error $\cE(\Pi)$ \citep{scherrer2014local}. Finally,  
we note that in this version of the algorithm, convexity of $\Pi$ is not required (see \citealp{scherrer2014local} for additional discussion), nor does our analysis require it.

\paragraph{Doubly Approximate CPI: An actor-oracle efficient algorithm.}
In the function approximation setup, the convex combination step \emph{cannot} be computed as is, and therefore CPI turns out as actor-oracle inefficient. Indeed, the algorihtm requires keeping all policies $\pi^1, \ldots, \pi^K$ throughout execution. A possible remedy for this issue is to approximate the convex combination step with a second oracle invocation, thereby allowing to dispose of actors computed in previous rounds.
However, a-priori, extending the analysis is far from immediate, as the convex combination step needs to be approximated through samples and it is unclear how to control the propagation of errors in the analysis.
Fortunately, the local-smoothness framework together with a local-norm based analysis of FW (which in itself is straightforward) allows to control on-policy estimation errors and arrive at a convergence rate upper bound.

\begin{algorithm}[ht!]
    \caption{Doubly-Approximate CPI (DA-CPI)} 
    \label{alg:da_cpi}
	\begin{algorithmic}
	    \STATE \textbf{input:} $
	    	\eta_1, \ldots, \eta_K > 0
	    $; $\varepsilon>0, \epsexpl>0$, action norm $\norm{\cdot}_\circ$.
            \FOR{$k=1, \ldots, K$}
            \STATE Update
            \begin{aligni*}
                \tilde \pi^{k+1}
                \gets 
                \argmineps_{\pi\in \Pi^\epsexpl}
                \E_{s \sim \mu^k} {
                    \abr{H Q^{k}_{s}, \pi_s - \pi^k_s} 
                }
           \end{aligni*}
           \STATE Update
                $\pi^{k+1} \gets 
                \argmineps_{\pi\in \Pi^\epsexpl} 
                    \E_{s \sim \mu^k} 
                    \norm{\pi_s - ((1-\eta_k) \pi^k_s + \eta_k \tilde \pi^{k+1}_s)}_\circ^2
                $
            \ENDFOR
            \RETURN $\hat \pi = \pi^{K+1}$
	\end{algorithmic}
\end{algorithm}

\begin{theorem}\label{thm:da_cpi}
    Let $\Pi$ be a convex policy class that satisfies $(\cvgd, \epsvgd)$-VGD w.r.t.~$\cM$.
    Suppose that DA-CPI (\cref{alg:da_cpi}) is executed with the $L^1$ action norm $\norm{\cdot}_1$.
    Then, for an appropriate setting of $\eta_1, \ldots, \eta_K$ and $\epsexpl$, we have that the DA-CPI output satisfies:
    \begin{align*}
        V(\hat \pi) - V^\star(\Pi)
        &= O\br{\cvgd^2 AH^3\br{
	   \frac{1}{K^{2/3}} 
            + \varepsilon^{1/3}
    	+ \varepsilon^{2/3}  K^{2/3}}
        }
    .
    \end{align*}
\end{theorem}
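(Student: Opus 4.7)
The plan is to treat DA-CPI as an approximate Frank--Wolfe (FW) scheme operating in policy space under the local on-policy norm $\|x\|_{\mu^k,1}^2 \eqq \E_{s\sim\mu^k}\|x_s\|_1^2$, and to combine three ingredients: (i) local smoothness of $V$ with respect to this norm, (ii) the FW linear-minimization step implemented by the direction oracle, and (iii) the VGD inequality to translate the FW gap into a suboptimality bound on $V$. The reduction philosophy is identical to the one laid out in \cref{sec:framework_reduction} and used for CPI and SDPO, but the additional projection oracle introduces two new error sources whose propagation the argument must control.

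First, I would invoke the same local smoothness lemma used in the SDPO analysis (originating from \citealp{sherman2025convergence}) to get
\begin{align*}
V(\pi^{k+1}) \leq V(\pi^k) + \abr{\nabla V(\pi^k),\, \pi^{k+1}-\pi^k} + L\,\E_{s\sim\mu^k}\|\pi_s^{k+1}-\pi_s^k\|_1^2,
\end{align*}
with $L=O(AH^3)$ owing to the $\epsexpl$-greedy floor. Writing $\bar\pi^{k+1} \eqq (1-\eta_k)\pi^k+\eta_k\tilde\pi^{k+1}$ for the ideal FW iterate, I would decompose $\pi^{k+1}-\pi^k = \eta_k(\tilde\pi^{k+1}-\pi^k) + (\pi^{k+1}-\bar\pi^{k+1})$. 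Convexity of $\Pi$ implies $\bar\pi^{k+1}\in\Pi^\epsexpl$, so the projection objective attains minimum zero and the oracle guarantee yields $\E_{s\sim\mu^k}\|\pi_s^{k+1}-\bar\pi_s^{k+1}\|_1^2 \leq \varepsilon$. A H\"older bound on the linear cross-term involving $\pi^{k+1}-\bar\pi^{k+1}$ then contributes $O(H\sqrt{\varepsilon})$ per iteration, while the quadratic piece contributes $O(L\eta_k^2+L\varepsilon)$. Meanwhile, by the direction oracle and the policy gradient theorem, $\abr{\nabla V(\pi^k),\tilde\pi^{k+1}-\pi^k} \leq -\max_{\tilde\pi\in\Pi^\epsexpl}\abr{\nabla V(\pi^k),\pi^k-\tilde\pi} + H\varepsilon$, and VGD on $\Pi^\epsexpl$ (up to an $O(\epsexpl H^2)$ cost for transferring the assumption from $\Pi$) lower-bounds this FW gap by $(V(\pi^k)-V^\star(\Pi)-\epsvgd)/\cvgd$.

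Combining the pieces yields the per-round recursion
\begin{align*}
\Delta_{k+1} \leq \br{1-\tfrac{\eta_k}{\cvgd}}\Delta_k + c_1 L\eta_k^2 + c_2 \eta_k H\varepsilon + c_3 H\sqrt{\varepsilon} + c_4 L\varepsilon + O(\epsvgd+\epsexpl H^2),
\end{align*}
for $\Delta_k \eqq V(\pi^k)-V^\star(\Pi)$. Unrolling this with a constant step size $\eta_k \equiv \eta$ gives $\Delta_K = O(\cvgd L\eta + \cvgd (H\sqrt{\varepsilon} + L\varepsilon)/\eta)$ modulo a $(1-\eta/\cvgd)^K$ initial-gap term. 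Balancing $\eta$ simultaneously against the $\sqrt{\varepsilon}$ floor and against the horizon $K$ leads to the choice $\eta\asymp \max\cbr{K^{-1/3},\,\varepsilon^{1/3}}$, which produces the three summands $\cvgd^2 A H^3/K^{2/3}$, $\cvgd^2 A H^3 \varepsilon^{1/3}$, and $\cvgd^2 A H^3 \varepsilon^{2/3} K^{2/3}$ of the stated bound; setting $\epsexpl=\Theta(1/(AH))$ as in SDPO keeps the exploration penalty subordinate.

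The principal obstacle is the propagation of the projection error through the \emph{linear} term: since $\pi^{k+1}$ lives in the constrained class and not exactly at the convex combination $\bar\pi^{k+1}$, the cross product $\abr{\nabla V(\pi^k),\pi^{k+1}-\bar\pi^{k+1}}$ only admits $\sqrt{\varepsilon}$ scaling via dual-norm inequality, not $\varepsilon$. This per-round residual cannot be absorbed into a contraction factor, which is precisely what forces a step size larger than the usual FW $\Theta(1/k)$ and hence the $\varepsilon$-amplified term growing with $K$ in the final guarantee. A secondary subtlety is keeping VGD applicable throughout: because the iterates lie in $\Pi^\epsexpl$ rather than $\Pi$, one must show that VGD on $\Pi$ is inherited by $\Pi^\epsexpl$ with a controllable $O(\epsexpl H^2)$ degradation of $\epsvgd$, which in turn dictates how small $\epsexpl$ may be taken without inflating the smoothness constant $L$.
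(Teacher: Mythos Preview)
Your high-level reduction---casting DA-CPI as a doubly-approximate Frank--Wolfe scheme on $V$ under the local on-policy norm $\|\cdot\|_{L^2(\mu^k),1}$, invoking local smoothness plus VGD, and deriving a one-step contraction inequality---is exactly the paper's route (it applies \cref{thm:da_fw} with the constants supplied by \cref{lem:value_local_smoothness,lem:epsgreedy_vgd,lem:value_local_lip}). The gap is in the parameter tuning, and it is not cosmetic.

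The local smoothness constant is $\beta = O(\sqrt{A}H^3/\sqrt{\epsexpl})$, which diverges as $\epsexpl\to 0$, while the VGD penalty for passing from $\Pi$ to $\Pi^\epsexpl$ is $O(\cvgd A H^2\epsexpl)$ (not $O(\epsexpl H^2)$ as you wrote). These two must be \emph{balanced}: the $K^{-2/3}$, $\varepsilon^{1/3}$, $\varepsilon^{2/3}K^{2/3}$ exponents in the theorem arise precisely from the $\epsexpl^{-1/2}$-versus-$\epsexpl$ trade-off. The paper keeps the standard FW schedule $\eta_k = 2\cvgd/(k+2)$, obtains a bound whose dominant terms are all proportional to $\epsexpl^{-1/2}$ plus the single floor $\cvgd AH^2\epsexpl$, and then sets $\epsexpl = (1/K + \sqrt{\varepsilon} + \varepsilon K)^{2/3}$. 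Your proposal instead fixes $\epsexpl = \Theta(1/(AH))$, which makes the floor $\Theta(\cvgd H)$---a constant that cannot vanish regardless of $K$ or $\varepsilon$. (Incidentally, SDPO does not use $\epsexpl = \Theta(1/(AH))$ either; its proof takes $\epsexpl = H^2/K^{2/3}$.)

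Even setting the floor aside, your unrolling does not produce the claimed terms. The per-round residual $c_3 H\sqrt{\varepsilon}$ from your H\"older bound on the linear cross-term becomes $\cvgd H\sqrt{\varepsilon}/\eta$ after summing the geometric series, and with $\eta\asymp\varepsilon^{1/3}$ this is $\cvgd H\varepsilon^{1/6}$, which dominates $\varepsilon^{1/3}$ for small $\varepsilon$; likewise, with $\eta\asymp K^{-1/3}$ and $\varepsilon=0$ you get $\cvgd L\eta \sim \cvgd AH^3 K^{-1/3}$, not $K^{-2/3}$. The constant-step-size knob alone cannot manufacture the $2/3$ exponent structure; that structure comes from tuning $\epsexpl$, which is the parameter the paper actually optimizes.
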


\subsection{Policy Mirror Descent}
Convergence of PMD in the optimization setting similar to the one we consider here was recently established in 
\cite{sherman2025convergence}. With moderate additional work, we obtain the following iteration complexity upper bound that may be directly translated to a sample complexity guarantee. Since there are no changes in the algorithm, we defer its presentation to \cref{sec:proofs_pmd}.
\begin{theorem}\label{thm:pmd}
    Let $\Pi$ be a convex policy class that satisfies $(\cvgd, \epsvgd)$-VGD w.r.t.~$\cM$.
    Suppose that PMD (see \cref{alg:pmd} in \cref{sec:proofs_pmd}) is 
    executed with the $L^2$ action regularizer.
    Then, with an appropriate tuning of  $\eta, \epsexpl$, we have that the output of PMD satisfies:
    \begin{align*}
        V(\hat \pi) - V^\star(\Pi)
        &= O\br{
            \frac{\cvgd^2 A^{3/2} H^3 }{ K^{2/3}}
            + \br{\cvgd + H^2 A K^{1/6}}\varepsilon^{1/4}
            }.
    \end{align*}
\end{theorem}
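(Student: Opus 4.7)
The plan is to combine the policy learning reduction of \cref{sec:framework_reduction} with the PMD optimization analysis of \citet{sherman2025convergence}, specialized to the $L^2$ action regularizer. Concretely, I would instantiate \cref{def:hat_phi_k} with action divergence $\Div(p,q) = \tfrac{1}{2}\norm{p-q}_2^2$, making the PMD update an approximate Bregman proximal step
\[
    \pi^{k+1} \gets \argmineps_{\pi \in \Pi^\epsexpl} \frac{H}{N}\sum_{s \in \cD_k}\Big[\abr{\widehat Q^{k}_{s}, \pi_s} + \frac{1}{2\eta}\norm{\pi_s - \pi_s^k}_2^2\Big].
\]
The two ingredients I need from the framework are (i) local smoothness of $V$ with respect to the on-policy quadratic divergence $\E_{s\sim \mu^k}\tfrac{1}{2}\norm{\pi_s - \pi_s^k}_2^2$, and (ii) uniform concentration of the empirical proximal objective over $\Pi^\epsexpl$.

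For (i), the local-smoothness lemma of \citet{sherman2025convergence} applies to any action-wise strongly convex divergence; specialized to squared-$\ell_2$, the smoothness constant picks up a factor of $A$ compared to the KL case, because $\norm{Q_s}_2 \le \sqrt{A}\,H$ and $\norm{\cdot}_1 \le \sqrt{A}\norm{\cdot}_2$ on $\R^A$. With this $O(H^2 A)$ local smoothness in hand, I would invoke the (largely divergence-agnostic) non-convex proximal-point analysis of \citet{sherman2025convergence}: under VGD, it yields an $O(K^{-2/3})$ convergence of a stationarity surrogate together with a noise-robustness term of order $(\cvgd + K^{1/6})\varepsilon^{1/4}$ when each subproblem is solved to accuracy $\varepsilon$. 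Converting stationarity to suboptimality via the VGD condition and propagating the $L^2$ constants then produces the leading $\cvgd^2 A^{3/2} H^3/K^{2/3}$ term and the $H^2 A$ multiplier on the noise term.

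For (ii), I would apply a standard covering-number / uniform-concentration argument, shared with SDPO and DA-CPI in \cref{sec:sc}: for each $k$, $\widehat\Phi_k$ is an empirical mean of bounded functions indexed by $\pi \in \Pi^\epsexpl$, so $\sup_\pi |\widehat\Phi_k(\pi) - \Phi_k(\pi)|$ is polynomial in $H, A, \log|\Pi^\epsexpl|, 1/\sqrt{N}$. An $(\varepsilon/2)$-approximate minimizer of $\widehat\Phi_k$ is then an $\varepsilon$-approximate minimizer of the population proximal objective, which closes the loop with the optimization guarantee. Finally, I would tune $\eta$ and $\epsexpl$ so that the bias from restricting to the exploratory class $\Pi^\epsexpl$ (which perturbs each gradient by $O(\epsexpl H^2)$ and enlarges the VGD gap by $O(\epsexpl \cvgd)$) balances against the rate, producing the stated bound.

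The main obstacle is verifying that the PMD analysis of \citet{sherman2025convergence}, which was written with a particular action Bregman divergence in mind, transports cleanly to the squared-$\ell_2$ regularizer with the constants claimed. Specifically, I expect to need to re-examine their descent / three-point inequality under $\ell_2^2$ regularization and to carefully track every $\ell_1 \leftrightarrow \ell_2$ norm conversion, since these conversions are the origin of the extra $A$ and $A^{3/2}$ factors. Once that bookkeeping is complete, the remaining steps---local smoothness, uniform concentration, and VGD conversion of stationarity to suboptimality---are direct instantiations of \cref{sec:framework_reduction}, which is consistent with the authors' description of the extension as ``moderate additional work''.
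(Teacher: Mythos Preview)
Your overall approach---verify local smoothness of $V$, confirm VGD for $\Pi^\epsexpl$, invoke the Bregman proximal-point bound from \citet{sherman2025convergence}, tune $\eta$ and $\epsexpl$---matches the paper's. Two points need sharpening.

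First, your ingredient (ii) is not part of \cref{thm:pmd}: the theorem concerns the idealized \cref{alg:pmd}, where $\varepsilon$ is the oracle function-value error, and concentration enters only in the separate sample-complexity section. More substantively, the Bregman proximal-point result you plan to invoke (restated in the paper as \cref{thm:opt_omd_0}) is phrased for iterates satisfying approximate \emph{first-order optimality conditions}, i.e., $x_{k+1}\in\cT_\eta^\epsilon(x_k)$, and carries a $\sqrt{\epsilon}$ noise term in that tolerance---not $\varepsilon^{1/4}$ for function-value error. The ``moderate additional work'' the authors allude to is exactly this translation: by $1$-strong convexity and $L$-smoothness of the proximal subproblem, an $\varepsilon$-suboptimal point satisfies the first-order condition with tolerance $O(LD\sqrt{\varepsilon})$ (\cref{lem:subopt_to_optcond}); substituting $\epsilon\to O(\sqrt{\varepsilon})$ into \cref{thm:opt_omd_0} yields \cref{thm:breg_prox} with the $\varepsilon^{1/4}$ term. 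You state this rate as if given, but supplying this lemma is the step you actually need. Relatedly, your worry about re-deriving the three-point inequality for $\ell_2^2$ is unnecessary: \cref{thm:opt_omd_0} is already stated for any $1$-strongly convex regularizer with $L$-Lipschitz gradient, and the Euclidean case simply has $L=1$.

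Second, your constants are off. From \cref{lem:value_local_smoothness} and the fact that $\min_{s,a}\pi_{s,a}\geq\epsexpl/A$ for $\pi\in\Pi^\epsexpl$, the $L^2$-action smoothness is $\beta=2A^{3/2}H^3/\sqrt{\epsexpl}$, not $O(H^2A)$; this is where the $A^{3/2}$ in the leading term originates. The local Lipschitz constant is $M=\sqrt{A}H^2$ (from $\|Q^\pi_s\|_2\leq\sqrt{A}H$ via \cref{lem:value_local_lip}), the diameter is $D=2$, and with these, setting $\eta=\sqrt{\epsexpl}/(2H^3A^{3/2})$ and $\epsexpl=K^{-2/3}$ in \cref{thm:breg_prox} gives the stated bound.
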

\cref{thm:pmd} above, along with our reduction (see \cref{sec:framework_reduction} and \cref{sec:sc}), to our best knowledge lead to the first sample complexity upper bounds for PMD in the agnostic setting, which are completely independent of the policy class parametrization.

\section{Experimental evaluation of the VGD condition}
\label{sec:vgd_experiments}
In this section, we present proof-of-concept experiments (\cref{fig:vgd_exps}), demonstrating the empirically observed parameters of $L^2$-SDPO (equivalently, $L^2$-PMD) executed in four environments: Cartpole-v1 and Acrobot-v1  
\citep{brockman2016openai}, and SpaceInvaders-MinAtar and Breakout-MinAtar \citep{young19minatar}. Code was written on top of the Gymnax framework \citep{gymnax2022github}, and parts of it were based off purejaxrl \citep{lu2022discovered}.

\begin{figure}[ht]
    \centering
    \includegraphics[width=0.95\textwidth]{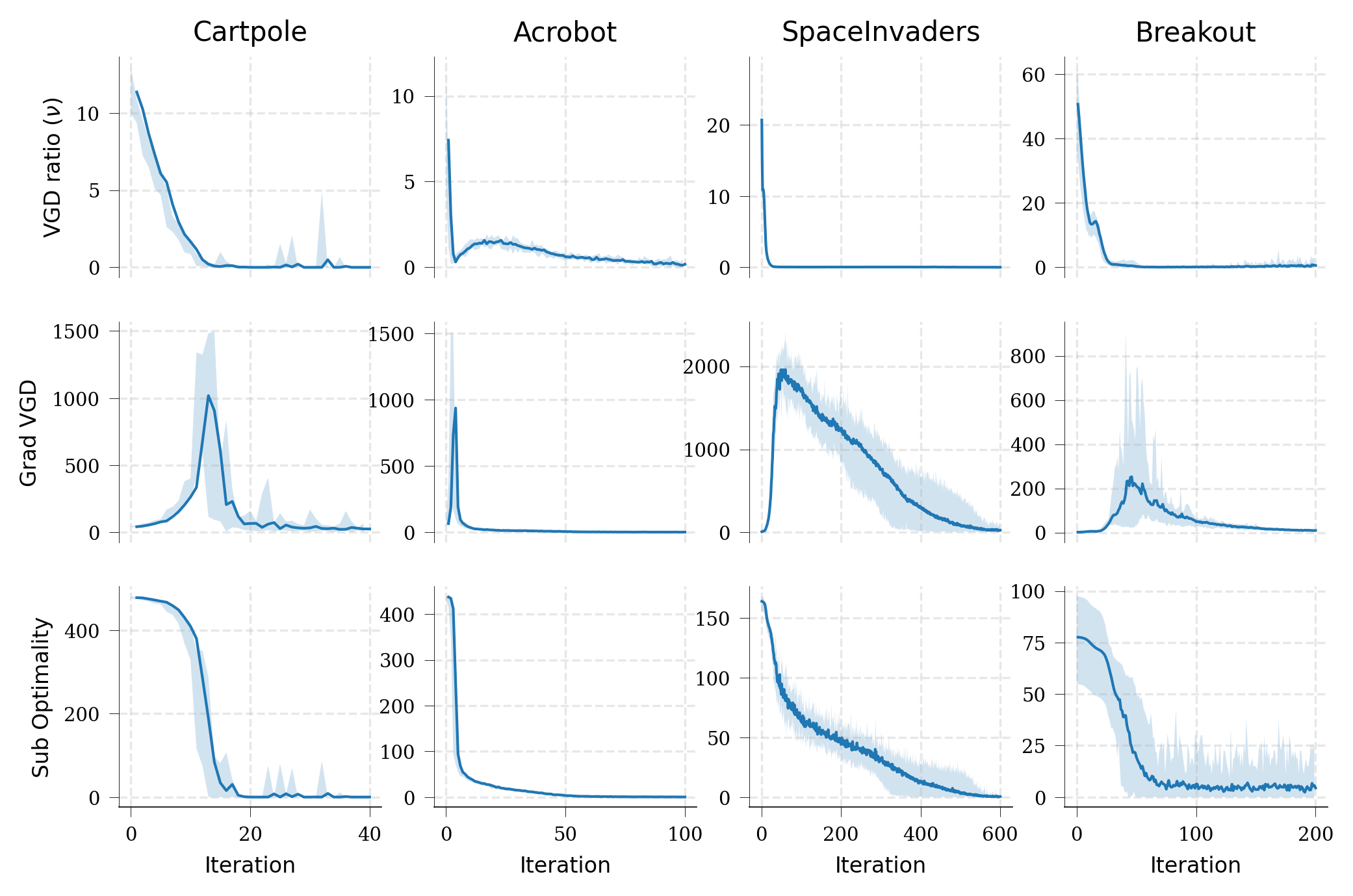}
    \caption{
        Training plots for all environments. In each experiment, a single set of minimally tuned hyper-parameters was run with 10 difference seeds. Error bars indicate maximum and minimum values. 
        \textbf{VGD Ratio ($\nu$):} An estimate of the $\cvgd$ parameter observed at each iteration $k\in [K]$ of the algorithm.
        \textbf{Grad VGD:} An estimate of $\max_{\tilde \pi\in \Pi}\abr{\nabla V(\pi^k), \pi^k - \tilde \pi}$.
        \textbf{Sub Optimality:} 
        Sub optimality of iteration $k$ w.r.t.~the minimum value the algorithm converged to. This should be interpreted as $\smash{V(\pi^k) - \br[s]{V^\star(\Pi) + \epsvgd}}$.
    }
    \label{fig:vgd_exps}
\end{figure}
As can be seen in \cref{fig:vgd_exps}, the estimates of the $\cvgd$-VGD parameter coefficient remain moderate throughout execution, and in fact decrease to around $1$ or below as the algorithm approaches convergence.

We maintain two neural network models in the algorithm implementation, one for the original SDPO actor model, the other (a ``VGD actor'') we use to estimate the VGD parameter.
In each iteration, we compute the SDPO step to obtain $\pi^{k+1}$, and 
in addition optimize the advantage function with the VGD actor in order to estimate $\max_{\tilde \pi\in \Pi}\abr{\nabla V(\pi^k), \pi^k - \tilde \pi}$. 
%
Since the algorithm may converge to a local optimal (which the $\epsvgd$ parameter accounts for) we take the minimum value of each execution as the error floor $V^\star(\Pi) + \epsvgd$, and define the sub optimality as $V(\pi^k)- \br{V^\star(\Pi) + \epsvgd}$. We then report an estimate of $\cvgd$ by computing:
\begin{align}\label{eq:vgd_nu_exps}
    \frac{V(\pi^k) - \br{V^\star(\Pi) + \epsvgd}}
    {\max_{\tilde \pi\in \Pi}\abr{\nabla V(\pi^k), \pi^k - \tilde \pi}}
    = \nu_k .
\end{align}
In practice, for Cartpole and Acrobot the algorithm always converged to the global minimum; local optima only plays a role in the two more challenging environments. Given we believe the error floor, our reported $\cvgd_k$ are overestimates of the true parameter at $\pi^k$, as we find an actual policy $\tilde \pi$ that gives an upper bound on the LHS of \cref{eq:vgd_nu_exps}.
We provide further details on the experimental setup in \cref{sec:vgd_experiments_details}.

\section*{Acknowledgements}
The authors would like to thank Tal Lancewicki and Alon Cohen for fruitful discussions and for their participation in the initial ideation process of this work.
This project has received funding from the European Research Council (ERC) under the European Union’s Horizon 2020 research and innovation program (grant agreements No.~101078075; 882396).
Views and opinions expressed are however those of the author(s) only and do not necessarily reflect those of the European Union or the European Research Council. Neither the European Union nor the granting authority can be held responsible for them.
This work received additional support from the Israel Science Foundation (ISF, grant numbers 3174/23; 1357/24), and a grant from the Tel Aviv University Center for AI and Data Science (TAD). This work was partially supported by the Deutsch Foundation.

\bibliographystyle{abbrvnat}
\bibliography{main}

\appendix

\section{Comparison with prior art}
\label{sec:comp_prior_art}
In this section, we provide additional details regarding related work and relation between different assumptions mentioned in the introduction.
We begin with formal definitions for $\Pi$-\emph{completeness} and \emph{coverage}.
Completeness is also sometimes referred to as closure \citep{bhandari2024global,sherman2025convergence}, and coverage is a general term we use here to refer to bounded distribution-mismatch coefficient \citep{agarwal2019reinforcement}.
\begin{definition}[Completness]
	We say a policy class $\Pi$ is complete if for any $\pi\in \Pi$, there exists a policy $\pi^+\in \Pi$ such that for all $s\in \cS$, $\pi^+_s= e_a$, where $a\in\argmax_{a\in \cA}Q^\pi_{s, a}$. In words, $\Pi$ contains a policy $\pi^+$ that acts greedily w.r.t.~the $Q$-function of $\pi$.
\end{definition}
\begin{definition}[Coverage / counded distribution-mismatch coefficient]
	We say the environment satisfies coverage if 
	\begin{align}
		D_\infty \eqq \norm{\frac{\mu^\star}{\rho_0}}_\infty = \max_{s\in \cS}\frac{\mu^\star(s)}{\rho_0(s)} < \infty,
	\end{align}
	where $\mu^\star = \mu^{\pi^\star}$ and $\pi^\star=\argmin_{\pi\in \Delta(\cA)^\cS} V(\pi)$.
\end{definition}
We note that while it makes sense to consider coverage subject to the best-in-class policy, it is usually considered in conjunction with completeness and therefore in the realizable setting.
The following is a notion of approximate completeness that has appeared in e.g., \citet{scherrer2014local, bhandari2024global}.
\begin{definition}[Approximate completeness / approximate closure]
	The completeness error of a policy class $\Pi$ is defined by:
	\begin{align}
	\cE(\Pi)
	&\eqq 
	\max_{\pi\in \Pi}
	\cbr{
	\max_{\pi^+ \in \Delta(\cA)^\cS}\E_{s \sim \mu^\pi}\abr{Q^\pi_s, \pi_s - \pi^+_s}
	- 
	\max_{\tilde \pi \in \Pi}\E_{s \sim \mu^\pi}\abr{Q^\pi_s, \pi_s - \tilde \pi_s}
	}.
	\end{align}
\end{definition}
We note that as long as we consider strictly stochastic policies with occupancy measures that have full support, $\cE(\Pi)=0$ implies $\Pi$ is complete.
We have the following relation between (approximate) completeness, coverage, and the VGD condition. For proof see \citet{sherman2025convergence}, which in itself is based on \citep{agarwal2021theory,bhandari2024global}.
\begin{lemma}
	Let $\Pi$ be a policy class and suppose coverage holds, i.e., $D_\infty<\infty$. Then $\Pi$ satisfies $(\cvgd, \epsvgd)$-VGD with $\cvgd = H D_\infty$ and $\epsvgd = \cE(\Pi) H^2 D_\infty$. In particular, if $\Pi$ is complete it satisfies $(H D_\infty, 0)$-VGD.
\end{lemma}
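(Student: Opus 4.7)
My plan is to chain three ingredients: the performance difference lemma, a change of measure based on the coverage assumption, and the definition of $\cE(\Pi)$ to pass from a per-state greedy gap to the class-wide linearization objective.

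First, I would invoke the performance difference lemma to write
\begin{align*}
V(\pi) - V(\pi^\star) = H\, \E_{s\sim \mu^\star}\abr{Q^\pi_s, \pi_s - \pi^\star_s},
\end{align*}
and bound the inner product pointwise by $V^\pi(s) - \min_a Q^\pi_{s,a} = \max_{\tilde\pi_s \in \Delta(\cA)}\abr{Q^\pi_s, \pi_s - \tilde\pi_s}$. This upper bound is crucially nonnegative at every $s$ (being the gap between a mean and a minimum), which enables the change of measure in the next step to preserve the inequality.

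Next I would change measure from $\mu^\star$ to $\mu^\pi$. Coverage gives $\mu^\star(s) \leq D_\infty\, \rho_0(s)$, while the elementary lower bound $\mu^\pi(s) \geq (1-\gamma)\rho_0(s)$ yields $\rho_0(s) \leq H\,\mu^\pi(s)$; composing them produces $\mu^\star(s)/\mu^\pi(s) \leq H D_\infty$ pointwise, which combined with the nonnegative integrand above delivers
\begin{align*}
V(\pi) - V(\pi^\star) \leq H D_\infty \cdot H\, \E_{s \sim \mu^\pi}\sbr{V^\pi(s) - \min_a Q^\pi_{s,a}}.
\end{align*}

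To finish, I would apply the definition of $\cE(\Pi)$: since each $\pi^+_s \in \Delta(\cA)$ can be chosen independently, $\max_{\pi^+ \in \Delta(\cA)^\cS}\E_{s\sim \mu^\pi}\abr{Q^\pi_s, \pi_s - \pi^+_s} = \E_{s\sim \mu^\pi}[V^\pi(s)-\min_a Q^\pi_{s,a}]$, and the definition then bounds this by $\max_{\tilde\pi\in\Pi}\E_{s\sim \mu^\pi}\abr{Q^\pi_s, \pi_s - \tilde\pi_s} + \cE(\Pi)$. Multiplying by $H$ and recognizing $H\max_{\tilde\pi\in\Pi}\E_{s\sim\mu^\pi}\abr{Q^\pi_s, \pi_s - \tilde\pi_s} = \max_{\tilde\pi\in\Pi}\abr{\nabla V(\pi), \pi-\tilde\pi}$ via the policy gradient identity establishes VGD with $\cvgd = HD_\infty$ and $\epsvgd = H^2 D_\infty\,\cE(\Pi)$; the complete case is immediate from $\cE(\Pi)=0$. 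The only subtle point is the change of measure: the factor $H D_\infty$ (rather than $D_\infty$) arises because $\mu^\pi$ dominates $\rho_0$ only up to the $(1-\gamma)$ factor, and passing through the pointwise nonnegative quantity \emph{before} changing measure is essential, since the raw integrand $\abr{Q^\pi_s, \pi_s - \pi^\star_s}$ can be negative at individual states and the pointwise coverage bound would otherwise fail.
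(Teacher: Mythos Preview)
Your proof is correct and follows precisely the standard argument that the paper defers to (it cites \cite{sherman2025convergence}, itself building on \cite{agarwal2021theory,bhandari2024global}, rather than giving a self-contained proof). The chain of performance difference lemma, pointwise greedy-gap bound to obtain a nonnegative integrand, change of measure via $\mu^\star(s)\leq D_\infty\rho_0(s)\leq H D_\infty\,\mu^\pi(s)$, and finally the $\cE(\Pi)$ definition to pass to the in-class maximum is exactly the route those references take, and your constants match.
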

We note that the VGD error floor in the above lemma is identical to the error floor in the convergence guarantees of CPI subject to approximate completeness (\citealp{scherrer2014local}, see also \citealp{agarwal2019reinforcement}).
(Recall that $H\eqq\frac{1}{1-\gamma}$ denotes the effective horizon.)
\paragraph{Comparison with algorithms of interest.}
In what follows, we discuss some features of the algorithms listed in \cref{tab:po_algs_comparison}.
\begin{itemize}
	\item \textbf{CPI} \citep{kakade2002approximately}. As discussed in the introduction and \cref{sec:cpi}, CPI is not actor-oracle efficient. The bound with the completeness-error error floor does not capture non-realizable setups where best-in-class convergence is possible, subject to the VGD condition (see \citealp{sherman2025convergence} for a simple example). 
	\item  \textbf{Log-linear NPG} \citep{agarwal2021theory}. In the log-linear NPG setup the policy class is parametrized by softmax-over-linear functions, where linear is w.r.t.~given state-action features. When the linear function class can represent  all action-value functions with zero-error, the policy class is essentially complete (formally, it is approximately complete with any desired non-zero error.). $\varepsilon_{\rm approx}$ stands for a bound on the least-squares error of the  approximation.
	The relative condition number $\kappa$ is defined in Assumption 6.2.
	Theorem 20 in their work gives a sample complexity upper bound for Q-NPG, which is a version of PMD with linear function approximation suited for the approximately-complete $\Pi$ learning setup.
	\item \textbf{Log-linear NPG} \citep{yuan2022general}. $\varepsilon_{\rm approx}$ is as in the work of \citet{agarwal2021theory}, $C_v$ is a concentrability coefficient that is generally stronger than $D_\infty$, and relates to how well are occupancy measures of policies chosen by the algorithm supported on the optimal policy. Their sample complexity for Q-NPG requires an additional relative condition number assumption similar to $\kappa$ that is not present in the iteration complexity upper bound.
	\item  \textbf{PMD} \citep{alfano2022linear}. Here, the policy class is composed from a general parametrization of the PMD dual variables, and an exact mirror-and-project function. $\varepsilon_{\rm pmdc}$ quantifies the least-square error in approximating the dual variables of policies chosen by the exact PMD step. Roughly speaking, this is a generalization of the $\varepsilon_{\rm approx}$ of \citet{agarwal2019reinforcement} to the more general parametrization setup they consider. A sample complexity result is given for the specific case of a shallow neural network parametrization.
	\item  \textbf{PMD} \citep{sherman2025convergence}. Here, the policy class parametrization is completely general. Sample complexity follows by arguments we present in \cref{sec:sc}. The analysis of \citet{sherman2025convergence} applies for non-convex policy classes, but only subject to PMD completeness (i.e., with dependence on $\varepsilon_{\rm pmdc}$).
\end{itemize}


\section{Additional preliminaries}
\paragraph{Discounted MDPs.}
A discounted Markov Decision Process (MDP; \cite{puterman1994markov}) $\cM$ is defined by a tuple
	$\cM = (\cS, \cA, \P, r, \gamma, \rho_0)$.
For notational convenience, for $s,a\in \cS\times\cA$ we let $\P_{s, a} \eqq \P(\cdot \mid s, a) \in \Delta(\cS)$ denote the next state probability measure.
We denote the \emph{value} of $\pi$ when starting from a state $s\in \cS$ by $V_s(\pi)$:
\begin{align*}
    V_s(\pi) \eqq \E\sbr{\sum_{t=0}^\infty \gamma^t r(s_t, a_t) \mid s_0 = s, \pi},
\end{align*}
and more generally for any $\rho\in \Delta(\cS)$,
$V_\rho(\pi) \eqq \E_{s \sim \rho}V_{s}(\pi)$. When the subscript is omitted, $V(\pi)$ denotes value of $\pi$ when starting from the initial state distribution $\rho_0$:
\begin{align*}
    V(\pi) \eqq V_{\rho_0}(\pi) =  \E\sbr{\sum_{t=0}^\infty \gamma^t r(s_t, a_t) \mid s_0 \sim \rho_0, \pi}.
\end{align*}
For any state action pair $s,a\in \cS\times \cA$, the action-value function of $\pi$, or $Q$-function, measures the value of $\pi$ when starting from $s$, taking action $a$, and then following $\pi$ for the reset of the interaction:
\begin{align*}
    Q^\pi_{s, a} \eqq \E\sbr{\sum_{t=0}^\infty \gamma^t r(s_t, a_t) \mid s_0 = s, a_0=a, \pi}
\end{align*}
We further denote the discounted state-occupancy measure of $\pi$ induced by any start state distribution $\rho\in \Delta(\cS)$ by $\mu^\pi_\rho$:
\begin{align*}
    \mu^\pi_{\rho}(s) \eqq 
    \br{1-\gamma}\sum_{t=0}^\infty \gamma^t \Pr(s_t = s \mid s_0 \sim \rho, \pi).
\end{align*}
It is easily verified that $\mu^\pi\in \Delta(\cS)$ is indeed a state probability measure.
In the sake of brevity, we take the MDP true start state distribution $\rho_0$ as the default in case one is not specified:
\begin{align}\label{def:occmeas_rho0}
    \mu^\pi \eqq \mu^\pi_{\rho_0}.
\end{align}

\paragraph{Problem setup.}
We consider an infinite horizon environment interaction model (see Protocol \ref{prot:infinite_horizon_rollout}) similar to that of, e.g., \citet{agarwal2021theory}.

\begin{protocol}[!ht]
    \caption{Infinite horizon environment rollout}
    \label{prot:infinite_horizon_rollout}
	\begin{algorithmic}
            \STATE Environment resets $s_0 \sim \rho_0$
            \FOR{$t = 0, \ldots:$}
                \STATE Agent observes $s_t$,
                chooses $a_t$, incurs $r(s_t, a_t)$ 
                
                \STATE Environment transitions $s_{t+1} \sim \P(\cdot|s_t, a_t)$
                \STATE Agent decides whether to terminate episode
            \ENDFOR
	\end{algorithmic}
\end{protocol}

\paragraph{Additional notation.}
For a given set $\cX$, we let $\cN(\epsilon, \cX, \norm{\cdot})$ denote the $\epsilon$-\emph{covering number} of $\cX$ w.r.t.~the norm $\norm{\cdot}$. The convex closure of $\cX$ is denoted by ${\rm conv}(\cX)$. We recall our definition for an $\epsexpl$-greedy exploratory version of a policy class $\Pi$ (see \cref{def:Pi_expl}):
\begin{align*}
    \Pi^\epsexpl \eqq \cbr{(1-\epsexpl)\pi + \epsexpl u \mid 
    \pi \in \Pi, u_{s,a} \equiv 1/A\;\forall s, a}
\end{align*}
We conclude by recalling the following notations used throughout:
\begin{align*}
    \mu^k \eqq \mu^{\pi^k}, \quad 
    Q^k \eqq Q^{\pi^k}, \quad 
    S \eqq |\cS|, \quad
    A \eqq |\cA|, \quad
    H \eqq \frac{1}{1-\gamma}.
\end{align*}

\section{First Order Methods for non-Euclidean Optimization}
\label{sec:opt}
In this section, we consider the smooth non-convex optimization problem:
\begin{align}\label{def:opt_main}
    \min_{x\in \cX} f(x),
\end{align}
where $f\colon \cX \to \R$ and 
$\cX \subset \R^d$ is a compact convex set. Our reduction detailed in \cref{sec:framework_reduction} leads to the optimization setup described next. All algorithms presented in \cref{sec:algs} are instances of the algorithms we analyze in this section.
Before introducing the problem setup, we give the following definitions.
\begin{definition}[Variational Gradient Dominance]\label{def:vgd}
    We say $f\colon \cX \to \R$ satisfies the variational gradient dominance condition with parameters $(\cvgd, \epsvgd)$, or that $f$ is $(\cvgd, \epsvgd)$-VGD, if
    here exist constants $\cvgd, \epsvgd>0$, such that for any $x \in \cX$, it holds that:
    \begin{align*}
        f(x) - \argmin_{x^\star\in \cX}f(x^\star)
        \leq 
        \cvgd \max_{\tilde x \in \cX}\abr{\nabla f(x), x - \tilde x} + \epsvgd.
    \end{align*}
\end{definition}


\begin{definition}[Local Norm]\label{def:local_norm}
    We define a \emph{local} norm over a set $\cX\subseteq \R^d$ by a mapping $x \mapsto \norm{\cdot}_x$ such that $\norm{\cdot}_x$ is a norm for all $x\in \cX$.
    We may denote a local norm by $\norm{\cdot}_{(\cdot)}$ or by $x \mapsto \norm{\cdot}_x$.
\end{definition}

\begin{definition}[Local Smoothness]\label{def:local_smoothness}
    We say $f\colon \cX \to \R$ is
    $\beta$-\emph{locally} smooth w.r.t.~a local norm $x \mapsto \norm{\cdot}_x$ if for all $x, y\in \cX$:
    \begin{align*}
    \av{f(y) - f(x) - \abr{\nabla f(x), y - x}} \leq 
    \frac\beta2\norm{y-x}_x^2 .
    \end{align*}
\end{definition}

\begin{definition}[Local Lipschitz continuity]\label{def:local_Lipschitz}
    We say $f\colon \cX \to \R$ is
    $M$-\emph{locally} Lipschitz w.r.t.~a local norm $x\mapsto \norm{\cdot}_x$ if for all $x\in \cX$:
    \begin{align*}
    \norm{\nabla f(x)}_x^* \leq M.
    \end{align*}
\end{definition}

\paragraph{Problem Setup.}
We consider the problem of computing an approximate global minimum of the objective \cref{def:opt_main},
under the following conditions.
\begin{assumption}\label{assm:opt_setup}
    The local norm $x \mapsto \norm{\cdot}_x$, decision set $\cX$, and objective $f$ satisfy:
    \begin{itemize}
    \item $f$ is $\beta$-locally smooth and $M$-locally Lipschitz w.r.t.~$x \mapsto \norm{\cdot}_{x}$ for $M\geq 1$,
    \item $f$ is $(\cvgd, \epsvgd)$-VGD over $\cX$ with $\nu \geq 1$,
    \item $f$ is bounded from below; $f^\star=\min_{x} f(x) > -\infty$,
    \item $\cX$ has a bounded diameter w.r.t.~$x\mapsto \norm{\cdot}$; $D \geq \max\{1, \max_{x,y,z\in \cX}\norm{z - y}_x\}$.
\end{itemize}
The assumption that $D, M, \nu \geq 1$ is solely for simplicity of presentation; if for example the objective $f$ is $M$-Lipschitz for $M<1$, our results still hold with $M \to \max\{1, M\}$.
\subsection{Constrained steepest descent method}
In this section, we consider the constrained steepest descent method; given an initialization $x_1\in \cX$, step size $\eta>0$, and step error $\epserr>0$:
    \begin{align}\label{def:csd}
        \forall k=1, \ldots, K,\;
        x_{k+1} \in \argmineps_{x\in \cX}
        \cbr{\abr{\nabla f(x_k), x} + \frac1{2\eta}\norm{x - x_k}^2_{x_k}}.
    \end{align}
For this method, we prove the following theorem.
\begin{theorem}\label{thm:csd}
Under \cref{assm:opt_setup},
    the constrained steepest descent method \cref{def:csd} guarantees, as long as $\eta\leq 1/\beta$:
    \begin{align*}
        f(x_{K+1}) - f^\star
        \leq
        \frac{8\br{ \cvgd M D }^{2}}{\eta K}
        + \frac{4  \cvgd M D }{\sqrt \eta}\sqrt{\epserr}
        + \epsvgd
    \end{align*}
\end{theorem}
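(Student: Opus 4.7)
The plan is to establish a one-step descent inequality, combine it with VGD to obtain a Polyak--\L{}ojasiewicz-type recursion on the suboptimality, and solve the recursion while carefully tracking the $\epserr$ error term.

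I first derive the descent inequality. Since $f$ is $\beta$-locally smooth w.r.t.~$\norm{\cdot}_{x_k}$ and $\eta\le 1/\beta$, the smoothness bound yields $f(x_{k+1}) - f(x_k) \le \abr{\nabla f(x_k), x_{k+1} - x_k} + \tfrac{1}{2\eta}\norm{x_{k+1}-x_k}_{x_k}^2$, whose right-hand side equals, up to the constant $\abr{\nabla f(x_k), x_k}$, the prox objective being minimized. The $\epserr$-approximate optimality of $x_{k+1}$ therefore gives, for every $x\in\cX$,
\begin{equation*}
f(x_{k+1}) - f(x_k) \le \abr{\nabla f(x_k), x - x_k} + \tfrac{1}{2\eta}\norm{x - x_k}_{x_k}^2 + \epserr.
\end{equation*}

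Next, I invoke VGD by specializing $x$ toward a VGD-maximizer. Set $\tilde x_k \in \arg\max_{\tilde x\in\cX}\abr{\nabla f(x_k), x_k - \tilde x}$, $g_k \eqq \abr{\nabla f(x_k), x_k - \tilde x_k}$, and $\tilde a_k \eqq f(x_k) - f^\star - \epsvgd$, so that $\tilde a_k \le \cvgd g_k$ whenever $\tilde a_k \ge 0$. Choosing $x = x_k + \alpha(\tilde x_k - x_k)\in \cX$ for $\alpha\in[0,1]$ (feasible by convexity of $\cX$), and using $\norm{\tilde x_k - x_k}_{x_k}\le D$, the descent inequality becomes
\begin{equation*}
\tilde a_{k+1} \le \tilde a_k - \alpha g_k + \tfrac{\alpha^2 D^2}{2\eta} + \epserr.
\end{equation*}
Picking the adaptive $\alpha = \min\{1, \eta g_k/D^2\}$ yields a decrease of at least $\tfrac{g_k}{2}\min\{1, \eta g_k/D^2\}$. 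In the regime $\eta g_k \le D^2$, combined with $g_k\ge \tilde a_k/\cvgd$, this produces the Polyak--\L{}ojasiewicz-type recursion
\begin{equation*}
\tilde a_{k+1}\le \tilde a_k - c\,\tilde a_k^2 + \epserr,\qquad c \eqq \tfrac{\eta}{2\cvgd^2 D^2}.
\end{equation*}
The opposite regime $\eta g_k > D^2$ uses $\alpha = 1$ and yields a per-iteration decrease of at least $D^2/(2\eta)$; since $g_k\le MD$ by local Lipschitzness, such iterations can persist for at most $O(\eta\cvgd M/D)$ steps and are absorbed into the multiplicative constants of the final rate.

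To solve the PL-type recursion I would split iterations around the threshold $\sqrt{2\epserr/c}$. Above the threshold, $c\tilde a_k^2 \ge 2\epserr$, so $\tilde a_{k+1} \le \tilde a_k - \tfrac{c}{2}\tilde a_k^2$; standard telescoping of $1/\tilde a_k$ gives $\tilde a_k \le 2/(cK)$. Below the threshold, $\tilde a_k$ can grow by at most $\epserr$ per step and hence remains $O(\sqrt{\epserr/c})$. Taking the maximum and substituting $c = \eta/(2\cvgd^2 D^2)$ back gives $\tilde a_{K+1} = O\br{\cvgd^2 D^2/(\eta K) + \cvgd D\sqrt{\epserr/\eta}}$, which, after absorbing $M$-factors as above and adding back $\epsvgd$, matches the claimed bound. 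The principal obstacle is controlling the error propagation: a naive telescoping of the per-step $\epserr$ would accumulate to $K\epserr$, far worse than the target $\sqrt{\epserr}$ scaling; the threshold argument is precisely what trades linear-in-$K$ error accumulation for the $\sqrt{\epserr}$ behavior. A secondary subtlety, specific to the non-Euclidean setting, is that the local norm $\norm{\cdot}_{x_k}$ varies across iterations, so the diameter bound $\norm{\tilde x_k - x_k}_{x_k}\le D$ and the smoothness inequality must both be evaluated at $x_k$ to be compatible.
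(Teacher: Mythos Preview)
Your approach is correct and reaches the stated bound, but takes a genuinely different route from the paper. The paper introduces a new quantity, the steepest descent magnitude $\Des_x \eqq \max_{g\in\cG_x}\{\abr{\nabla f(x),g} - \tfrac12\norm{g}_x^2\}$ over the constrained gradient-mapping set $\cG_x=\{(x-y)/\eta:y\in\cX\}$, and proves a key lemma (\cref{lem:csd_to_stationarity}) bounding the Frank--Wolfe gap by $2D\max\{\Des_x,\sqrt{\Des_x}\}$; local Lipschitzness ($\Des_k\le M^2$) then collapses the two branches into a single PL recursion with constant $\omega_0 = \eta/(2\cvgd MD)^2$. You instead combine smoothness with the $\epserr$-optimality of the prox step to compare $f(x_{k+1})$ directly against \emph{any} $x\in\cX$, then optimize along the segment toward the VGD maximizer---an argument closer in spirit to Frank--Wolfe that avoids the $\Des_x$ machinery entirely and, in the regime $\eta g_k\le D^2$, yields the sharper constant $c=\eta/(2\cvgd^2 D^2)$ (so your remark about ``absorbing $M$-factors'' is unnecessary; your bound is already at least as tight). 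The paper's route buys a reusable tool (also used for its approximate-stationary-point theorem); yours buys simplicity and a better constant. The threshold handling of $\epserr$ is the same in both.

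One step needs more care: your claim that case-(b) iterations ``persist for at most $O(\eta\cvgd M/D)$ steps'' implicitly assumes $\tilde a_k$ is monotone, but case-(a) iterations with $c\tilde a_k^2<\epserr$ can increase $\tilde a_k$, so case-(b) steps may recur and the counting argument breaks. A clean fix: since $g\mapsto \tfrac{g}{2}\min\{1,\eta g/D^2\}$ is nondecreasing and $g_k\ge \tilde a_k/\cvgd$, the per-step decrease is always at least $\tfrac{\tilde a_k}{2\cvgd}\min\{1,\eta\tilde a_k/(\cvgd D^2)\}$, giving a single recursion in $\tilde a_k$ with an initial linear phase (while $\tilde a_k > \cvgd D^2/\eta$, lasting $O(\cvgd\log(\eta M/D))$ steps) followed by your PL recursion.
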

\end{assumption}
To prove \cref{thm:csd}, we will first establish an analysis framework that connects the algorithm with a notion of constrained steepest descent magnitude.
As a general note, the fact that the norm used in \cref{def:csd} to compute the steepest descent direction is local makes only a syntactic difference in the analysis. Wherever it is convenient we make a claim about a general norm (which may be local and depend on some point), like in \cref{lem:csd_gt_step} below.

Let $\eta > 0$ be a fixed step size.
Given $x\in \cX$, we define the set of potential gradient mappings from $x$ by:
\begin{align}\label{def:csd_gm_set}
	\cG_x \eqq \cbr{\frac1\eta(x - y) \mid y \in \cX},
\end{align}
and the steepest descent magnitude by:
\begin{align}\label{def:csd_dm}
\Des_x \eqq \max_{g \in \cG_x} \cbr{ \abr{\nabla f(x), g} 
    - \frac{1}{2}\norm{g}_x^2}.
\end{align}

\begin{lemma}\label{lem:csd_gt_step}
     For any norm $\norm{\cdot}$, It holds that
    \begin{align*}
    x^+ \in \argmineps_{y\in \cX} \cbr{ \abr{\nabla f(x),y} 
		+ \frac{1}{2\eta}\norm{x - y}^2}&
       \\ \iff
\frac1\eta(x - x^+) \in &\argmaxeps[(\epserr/\eta)]_{g \in \cG_x} \cbr{ \abr{\nabla f(x), g} 
		- \frac{1}{2}\norm{g}^2}.
\end{align*}
\end{lemma}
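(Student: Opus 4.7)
The plan is to prove this equivalence by the natural change of variables $g = (x - y)/\eta$, which bijects $\cX$ with the set $\cG_x$ defined in \eqref{def:csd_gm_set}, and to track how the approximation error rescales by a factor of $1/\eta$ under the substitution.

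Concretely, I would first define
\begin{align*}
    \phi(y) \eqq \abr{\nabla f(x), y} + \frac1{2\eta}\norm{x - y}^2,
    \qquad
    \psi(g) \eqq \abr{\nabla f(x), g} - \frac12 \norm{g}^2,
\end{align*}
and then plug in $y = x - \eta g$ into $\phi$. Expanding yields
\begin{align*}
    \phi(x - \eta g)
    = \abr{\nabla f(x), x} - \eta\abr{\nabla f(x), g} + \frac{\eta}{2}\norm{g}^2
    = \abr{\nabla f(x), x} - \eta\, \psi(g),
\end{align*}
where $\abr{\nabla f(x), x}$ is a constant independent of $y$ (equivalently $g$). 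Since the map $y \mapsto (x-y)/\eta$ is a bijection from $\cX$ onto $\cG_x$ by the definition \eqref{def:csd_gm_set}, minimizing $\phi$ over $\cX$ is therefore equivalent to maximizing $\psi$ over $\cG_x$ (up to the additive constant and the positive scaling by $\eta$), and in particular
\begin{align*}
    \min_{y\in \cX} \phi(y) = \abr{\nabla f(x), x} - \eta \max_{g\in \cG_x}\psi(g).
\end{align*}

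To handle the approximate versions, I would take any $x^+\in \cX$ and set $g^+ \eqq (x - x^+)/\eta \in \cG_x$. Subtracting the preceding identity from the identity $\phi(x^+) = \abr{\nabla f(x), x} - \eta\, \psi(g^+)$ gives
\begin{align*}
    \phi(x^+) - \min_{y\in \cX}\phi(y)
    = \eta \br{\max_{g\in \cG_x}\psi(g) - \psi(g^+)}.
\end{align*}
Hence $\phi(x^+) \leq \min_y \phi(y) + \epserr$ holds if and only if $\psi(g^+) \geq \max_g \psi(g) - \epserr/\eta$, which is precisely the stated equivalence between approximate minimizers of the proximal objective and approximate maximizers of the steepest-descent-magnitude objective.

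There is no real obstacle here: the argument is a direct algebraic manipulation together with the observation that the affine substitution $y \leftrightarrow (x-y)/\eta$ preserves approximate optimality up to the $1/\eta$ rescaling of the error tolerance. The only point to be slightly careful about is that this substitution is a genuine bijection between $\cX$ and $\cG_x$, which is immediate from the definition of $\cG_x$.
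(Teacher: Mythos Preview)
Your proposal is correct and follows essentially the same approach as the paper: both arguments hinge on the bijection $y \leftrightarrow (x-y)/\eta$ between $\cX$ and $\cG_x$, the algebraic identity relating the two objectives up to an additive constant and scaling by $\eta$, and the resulting $1/\eta$ rescaling of the error tolerance. The paper phrases this as a chain of equalities between approximate argmin/argmax sets, while you compute the suboptimality gap directly; these are just presentational variants of the same computation.
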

\begin{proof}
    Observe:
\begin{align*}
\argmineps_{y\in \cX} \cbr{ \abr{\nabla f(x),y} 
		+ \frac{1}{2\eta}\norm{x - y}^2}
&=
\argmineps_{y\in \cX} \cbr{ \abr{\nabla f(x),y - x} 
		+ \frac{1}{2\eta}\norm{y - x}^2}
\\ &=
\argmineps[(\epserr/\eta)]_{y\in \cX} \cbr{ 
    \abr{\nabla f(x),\frac1\eta\br{y - x}} 
		+ \frac{1}{2}\norm{\frac1\eta(y - x)}^2}.
\end{align*}
Further, for any $y \in \cX$, letting $g =\frac1\eta(x - y)$, we have
\begin{align*}
\abr{\nabla f(x),\frac1\eta\br{x - y}} 
		+ \frac{1}{2}\norm{\frac1\eta(x - y)}^2
&=
\abr{\nabla f(x), - g} 
		+ \frac{1}{2}\norm{g}^2
.
\end{align*}
Hence, the gradient mapping $y \mapsto \frac1\eta(x - y)$ is a bijection between $\cX$ and $\cG_x$ that 
gives the same value for the LHS and RHS objectives in the above display. Thus, 
\begin{align*}
x^+ \in \argmineps_{y\in \cX} \cbr{ \abr{\nabla f(x), y} 
		+ \frac{1}{2\eta}\norm{x - y}^2}&
\\
\iff
\frac1\eta(x - x^+) \in &\argmineps[(\epserr/\eta)]_{g \in \cG_x} \cbr{ \abr{\nabla f(x), -g} 
		+ \frac{1}{2}\norm{g}^2}
.
\end{align*}
Finally, 
\begin{align*}
    \argmineps_{g \in \cG_x} \cbr{ \abr{\nabla f(x), -g} 
		+ \frac{1}{2}\norm{g}^2}
    =
    \argmaxeps[(\epserr/\eta)]_{g \in \cG_x} \cbr{ \abr{\nabla f(x), g} 
		- \frac{1}{2}\norm{g}^2},
\end{align*}
which completes the proof.
\end{proof}

\begin{lemma}\label{lem:csd_to_stationarity}
    For all $x\in \cX, \eta > 0$, we have:
$$
	\max_{y\in \cX}\abr{\nabla f(x), x - y}
	\leq 2\max\cbr{D, 1}\max\cbr{\Des_x, \sqrt \Des_x}.
$$
\end{lemma}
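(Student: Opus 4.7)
The plan is to lower-bound $\Des_x$ by evaluating its defining maximum at a carefully chosen feasible direction, namely the scaled segment from $x$ toward the linear-functional's maximizer, and then invert the resulting inequality. Let $y^\star \in \argmax_{y\in\cX}\abr{\nabla f(x),\,x - y}$, write $\Gamma := \abr{\nabla f(x),\,x - y^\star}$ (the LHS of the lemma), and set $r^\star := \norm{x - y^\star}_x \le D$. For any $\alpha \in [0,1]$, convexity of $\cX$ gives $y_\alpha := (1-\alpha)x + \alpha y^\star \in \cX$, so $g_\alpha := (x - y_\alpha)/\eta = \alpha(x-y^\star)/\eta$ is in $\cG_x$. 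Substituting $g_\alpha$ into the defining maximum of $\Des_x$ yields the one-parameter lower bound
\[
\Des_x \;\ge\; \frac{\alpha\,\Gamma}{\eta} \;-\; \frac{\alpha^2 (r^\star)^2}{2\eta^2}, \qquad \alpha \in [0,1].
\]

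Next I would tune $\alpha$ optimally. The unconstrained optimum is $\alpha^\star = \eta\Gamma/(r^\star)^2$, and two regimes arise depending on whether it is feasible. In the first regime $\alpha^\star \le 1$ (equivalently $\eta\Gamma \le (r^\star)^2$), substituting gives $\Des_x \ge \Gamma^2/(2(r^\star)^2)$, so $\Gamma \le r^\star\sqrt{2\Des_x} \le 2\max\{D,1\}\sqrt{\Des_x}$. In the second regime $\alpha^\star > 1$, I clamp $\alpha = 1$; the case condition $(r^\star)^2 < \eta\Gamma$ reduces the lower bound to $\Des_x > \Gamma/(2\eta)$, i.e.\ $\Gamma < 2\eta\Des_x$, which absorbs into $2\max\{D,1\}\Des_x$ under the setup's scaling. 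Combining the two regimes, $\Gamma$ is dominated by $2\max\{D,1\}\max\{\Des_x,\sqrt{\Des_x}\}$, with the $\sqrt{\Des_x}$ branch activating when $\Des_x \le 1$ and the $\Des_x$ branch when $\Des_x \ge 1$, as claimed.

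The main obstacle is Case~2, whose $\eta$-dependent conclusion has to be reconciled with the $\max\{D,1\}$ factor in the lemma's statement; this uses the algorithmic step-size regime $\eta \le 1/\beta$ together with the setup convention $D, M \ge 1$. A case-free alternative is to plug $\alpha = 1$ directly into the lower bound and rearrange to the AM–GM form $\Gamma \le \eta\Des_x + (r^\star)^2/(2\eta)$: at the balancing $\eta$ this collapses to $r^\star\sqrt{2\Des_x}$, and in imbalanced regimes it matches the $\Des_x$ branch, recovering the lemma without an explicit case split.
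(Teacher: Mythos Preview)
Your two-case argument is correct and arrives at the same place as the paper, but by a more direct route. The paper first rewrites the left-hand side as $\eta\max_{g\in\cG_x}\abr{\nabla f(x),g}$, then decomposes $\cG_x$ into rays $\cG_x(u)=\cG_x\cap\{\alpha u:\alpha\ge 0\}$ over all unit directions $u$, and invokes a separate helper lemma to analyze the one-dimensional problem along each ray (distinguishing whether the unconstrained maximizer $\abr{\nabla f(x),u}u$ lies in $\cG_x$ or not). It then takes a maximum over $u$ and uses $\Des_x=\max_u\Des_x(u)$. Your approach skips all of this by going straight to the single relevant direction, the segment toward the linear-functional maximizer $y^\star$, and performing the one-dimensional analysis there; this is more elementary and avoids the helper lemma entirely. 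The two case splits are in fact the same dichotomy (feasible versus clamped optimizer along the ray), and both proofs share the same hidden requirement $\eta\le\max\{D,1\}$ in the clamped case in order to absorb $2\eta\Des_x$ into $2\max\{D,1\}\Des_x$; you flagged this explicitly, and the paper's final inequality $\max\{2\Des_x,\tfrac{D}{\eta}\sqrt{2\Des_x}\}\le\tfrac{2}{\eta}\max\{D,1\}\max\{\Des_x,\sqrt{\Des_x}\}$ relies on the same thing.

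One caveat: your closing ``case-free alternative'' does not actually work as written. Plugging $\alpha=1$ gives $\Gamma\le\eta\Des_x+(r^\star)^2/(2\eta)$, but here $\eta$ is a fixed algorithmic parameter, not a free variable you can balance; you cannot choose $\eta$ to collapse this to $r^\star\sqrt{2\Des_x}$. The two-case split (or the paper's equivalent ray decomposition) is genuinely needed.
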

\begin{proof}
    Let $y\in\cX$, and note that
\begin{align*}
	\abr{\nabla f(x), x - y}
	=
	\eta  \abr{\nabla f(x), \frac1\eta(x - y)},
\end{align*}
hence
\begin{align}\label{eq:csd_stationarity_to_Dx}
	\max_{y\in \cX}\abr{\nabla f(x), x - y}
	= \eta \max_{g \in \cG_x}\abr{\nabla f(x), g}.
\end{align}
In what follows, we consider the set of gradient mappings restricted to direction $u$ and the corresponding descent quantity in direction $u$:
\begin{align*}
    \cG_x(u) \eqq \cG_x \cap \{\alpha u: \alpha\geq0\};
    \quad
    \Des_x(u) \eqq \max_{g\in \cG_x(u)}
    \abr{\nabla f(x), g} - \frac12\norm{g}_x^2.
\end{align*}
By \cref{lem:sd_to_grad}, we have:
\begin{align*}
    \begin{cases}
    \Des_x(u) = \frac12 \abr{\nabla f(x), u}^2
    & \abr{\nabla f(x), u} u\in \cG_x,
    \\
        \Des_x(u) \geq \frac12\max_{g \in \cG_x(u)}
  \abr{\nabla f(x), g}
    & \abr{\nabla f(x),  u} u\notin \cG_x.
    \end{cases}
\end{align*}
Now, since
\begin{align*}
    \Des_x
    = 
    \max_{g\in \cG_x}\cbr{\abr{\nabla f(x), g} - \frac12\norm{g}_x^2}
    =
    \max_{\norm{u}_x=1}\Des_x(u)
    ,
\end{align*}
it follows that:
\begin{align*}
    \max_{g \in \cG_x}\abr{\nabla f(x), g}
    &=
    \max_{u: \norm{u}_x=1}\max_{g \in \cG_x(u)}\abr{\nabla f(x), g}
    \\
    &\leq 
    \max_{u: \norm{u}_x=1}
    \max\cbr{2 \Des_x(u), \frac D\eta\sqrt{2 \Des_x(u)}}
    \\
    &=
    \max\cbr{2 \Des_x, \frac D\eta\sqrt{2 \Des_x}}
    \\
    &\leq
    \frac{2}{\eta}\max\{D, 1\}\max\cbr{\Des_x, \sqrt{\Des_x}}
    .
\end{align*}
Combining the above with \cref{eq:csd_stationarity_to_Dx}, the result follows.
\end{proof}

\begin{lemma}\label{lem:sd_to_grad}
    Let $x\in \cX$,  $u\in \R^d, \norm{u}_x=1$, and define:
\begin{align*}
    \cG_x(u) \eqq \cG_x \cap \{\alpha u: \alpha\geq0\};
    \quad
    \Des_x(u) \eqq \max_{g\in \cG_x(u)}
    \abr{\nabla f(x), g} - \frac12\norm{g}_x^2.
\end{align*}
Then,
\begin{align*}
    \begin{cases}
    \Des_x(u) = \frac12 \abr{\nabla f(x), u}^2
    & \abr{\nabla f(x), u} u\in \cG_x,
    \\
        \Des_x(u) \geq \frac12\max_{g \in \cG_x(u)}
  \abr{\nabla f(x), g}
    & \abr{\nabla f(x),  u} u\notin \cG_x.
    \end{cases}
\end{align*}

\end{lemma}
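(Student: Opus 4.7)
The plan is to reduce the maximization that defines $\Des_x(u)$ to a one-dimensional concave quadratic optimization, by parametrizing the ray $\cG_x(u)$ by a scalar. Concretely, set $A_u \eqq \{\alpha \geq 0 : \alpha u \in \cG_x\}$, so that every element of $\cG_x(u)$ is of the form $g = \alpha u$ for some $\alpha \in A_u$. Since $y = x$ lies in $\cX$, we have $0 \in A_u$, and since $\cX$ is convex, so is $\cG_x$, hence $A_u$ is an interval in $[0, \infty)$. Compactness of $\cX$ then gives $A_u = [0, \bar\alpha]$ for some $\bar\alpha \in [0, \infty)$. Using $\|u\|_x = 1$, the objective $g \mapsto \abr{\nabla f(x), g} - \tfrac12 \|g\|_x^2$ restricted to $\cG_x(u)$ becomes the one-variable concave quadratic $h(\alpha) = \alpha \abr{\nabla f(x), u} - \tfrac12 \alpha^2$, whose unconstrained maximizer on $\R$ is $\alpha^\star = \abr{\nabla f(x), u}$ with value $\tfrac12 \abr{\nabla f(x), u}^2$.

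From here, the two cases fall out from where $\alpha^\star$ lies relative to $A_u$. In the first case, the hypothesis $\abr{\nabla f(x), u} u \in \cG_x$ (with $\abr{\nabla f(x), u} \geq 0$, which is the relevant regime) says exactly $\alpha^\star \in A_u$, so the unconstrained maximum is attained and $\Des_x(u) = h(\alpha^\star) = \tfrac12 \abr{\nabla f(x), u}^2$. In the second case, $\abr{\nabla f(x), u} u \notin \cG_x$ forces $\alpha^\star \notin A_u$; when $\alpha^\star < 0$ both sides of the stated inequality are $0$ (the maximum of $h$ on $[0,\bar\alpha]$ is achieved at $0$, and $\max_{g \in \cG_x(u)}\abr{\nabla f(x), g} = 0$ as well), so the claim is immediate; when $\alpha^\star > \bar\alpha$, monotonicity of $h$ on $[0, \bar\alpha]$ places the maximum at the endpoint $\bar\alpha$, and the key algebraic step is to use the constraint $\bar\alpha < \abr{\nabla f(x), u}$ to bound $\tfrac12 \bar\alpha^2 \leq \tfrac12 \bar\alpha \abr{\nabla f(x), u}$, yielding
\[
\Des_x(u) = \bar\alpha \abr{\nabla f(x), u} - \tfrac12 \bar\alpha^2 \geq \tfrac12 \bar\alpha \abr{\nabla f(x), u} = \tfrac12 \max_{g \in \cG_x(u)}\abr{\nabla f(x), g},
\]
where the last equality uses that $\alpha \mapsto \alpha \abr{\nabla f(x), u}$ is linear and non-negatively sloped on $[0, \bar\alpha]$.

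The argument is essentially a routine analysis of a concave quadratic on a half-line, so there is no genuine obstacle. The only subtle points are (i) ensuring $A_u$ is a closed interval containing $0$, which rests on convexity of $\cG_x$ (inherited from $\cX$) and the presence of $g=0$, and (ii) being careful that ``$\abr{\nabla f(x), u} u \in \cG_x$'' is used in the natural regime $\abr{\nabla f(x), u} \geq 0$, for which the unconstrained maximizer $\alpha^\star$ actually lies in $[0,\infty)$ and hence in $\cG_x(u)$. Both are essentially bookkeeping and do not require new ideas.
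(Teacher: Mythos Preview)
Your proposal is correct and follows essentially the same route as the paper: both parametrize $\cG_x(u)$ by a scalar $\alpha$, reduce to the concave quadratic $h(\alpha)=\alpha\abr{\nabla f(x),u}-\tfrac12\alpha^2$ on the interval $A_u=[0,\bar\alpha]$, and split into the two cases depending on whether the unconstrained maximizer $\alpha^\star=\abr{\nabla f(x),u}$ lies in $A_u$. You are in fact slightly more careful than the paper, explicitly treating the sub-case $\alpha^\star<0$ (where both sides vanish) and justifying closedness/boundedness of $A_u$ via convexity and compactness of $\cX$.
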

\begin{proof}
We have, by definition of $\Des_x(u)$:
\begin{align*}
  \Des_x(u) =
  \max_{\alpha\geq0:\alpha u\in \cG_x}
  \cbr{\abr{\nabla f(x), \alpha u}
  - \frac12\norm{\alpha u}_x^2},
\end{align*}
and we note that without constraining $\alpha u\in \cG_x$, we have
\begin{align*}
    \argmax_{\alpha\geq0}
  \cbr{\abr{\nabla f(x), \alpha u}
  - \frac12\norm{\alpha u}_x^2}
  =
  \argmax_{\alpha\geq0}
  \cbr{ \alpha \abr{\nabla f(x), u}
  - \frac{\alpha^2}{2}}
  = 
  \abr{\nabla f(x), u}.
\end{align*}
Now, set 
$A \eqq \{\alpha : \alpha u \in \cG_x\}$, and 
$\alpha_0 \eqq \sup\{\alpha : \alpha u \in \cG_x\}$.
Observe that since $\cG_x$ is closed and convex, we have $\alpha_0=\infty \implies A=[0, \infty)$, and $\alpha_0 < \infty \implies A=[0, \alpha_0]$.
Proceeding, we now consider the two cases from the lemma statement. 
\paragraph{Assume first that $\abr{\nabla f(x), u} u\in \cG_x$.} Then $\alpha_0 \geq \abr{\nabla f(x), u}$, and by since $\alpha = \abr{\nabla f(x), u} \in A$ minimizes the unconstrained problem, it also minimizes the constrained one, hence
\begin{align*}
  \Des_x(u) =
  \frac12\abr{\nabla f(x),  u}^2,
\end{align*}
as required.
\paragraph{Assume now that $\abr{\nabla f(x), u} u\notin \cG_x$.}Then $\alpha_0 < \abr{\nabla f(x), u}$, and since
\begin{align*}
  \alpha \mapsto \cbr{\abr{\nabla f(x), \alpha u}
  - \frac12\norm{\alpha u}_x^2
  =
  \alpha \abr{\nabla f(x), u}
  - \frac{\alpha^2}{2}}
\end{align*}
is monotonically increasing for $\alpha\in[0, \abr{\nabla f(x), \alpha u}]$, we obtain
\begin{align*}
  \Des_x(u) =
  \abr{\nabla f(x), \alpha_0 u}
  - \frac12\norm{\alpha_0 u}_x^2
  &=
  \alpha_0 \br{\abr{\nabla f(x), u}
  - \frac{\alpha_0}2}
  \\
  &\geq
  \alpha_0 \br{\abr{\nabla f(x), u}
  - \frac{\abr{\nabla f(x), u}}2}
  \\
  &=
  \frac12 \abr{\nabla f(x), \alpha_0 u}
  \\
  &=
  \frac12 \max_{0\leq \alpha \in \cG_x}\abr{\nabla f(x), \alpha u}
  \\
  &=
  \frac12 \max_{g \in \cG_x(u)}\abr{\nabla f(x), g}
  .
\end{align*}
This completes the proof.
\end{proof}
We are now ready for the proof of our main theorem.

\begin{proof}[Proof of \cref{thm:csd}]
	For ease of presentation, we prove for the case that $\epsvgd=0$; the general case follows immediately by replacing $f^\star$ with the error floor $f^\star + \epsvgd$ everywhere in the proof.
    Throughout the proof we 
    denote $\cG_k \eqq \cG_{x_k}$ and $\Des_k \eqq \Des_{x_k}$. We recall these are the set of potential gradient mappings and steepest descent magnitude \cref{def:csd_gm_set,def:csd_dm}. For convenience:
\begin{align*}
	\cG_t = \cbr{\frac1\eta(x_k - y) \mid y \in \cX};\quad 
    \Des_{k} = \max_{g \in \cG_t} \cbr{ \abr{\nabla f(x_k), g} 
    - \frac{1}{2}\norm{g}_{x_k}^2}.
\end{align*}
    By \cref{lem:csd_gt_step}, we have that for all $k$, $g_k \eqq \frac1\eta(x_k - x_{k+1})$ satisfies
    \begin{align*}
        g_k \in \argmaxeps[(\epserr/\eta)]_{g \in \cG_k} \cbr{ \abr{\nabla f(x_k), g} 
		- \frac{1}{2}\norm{g}_{x_k}^2}.
    \end{align*}
    Hence, by smoothness of $f$:
\begin{align*}
	f(x_{k+1})
	&\leq f(x_k) + \abr{\nabla f(x_k), x_{t=k+1} - x_k} + \frac \beta2\norm{x_{k+1} - x_k}_{x_k}^2
	\\
	&= f(x_k) - \eta \abr{\nabla f(x_k), g_k} + \frac {\eta^2\beta}2\norm{g_k}_{x_k}^2
	\\
	&\leq f(x_k) - \eta \abr{\nabla f(x_k), g_k} + \frac {\eta}2\norm{g_k}_{x_k}^2
	\tag{$\eta\leq1/\beta$}
	\\
	&= f(x_k) - \eta \br{\abr{\nabla f(x_k), g_k} - \frac {1}2\norm{g_k}_{x_k}^2}
	\\
	&\leq f(x_k) - \eta \br{\Des_k - \epserr/\eta},
\end{align*}
which implies
\begin{align}\label{eq:csd_vgd_descent}
    \eta \Des_k \leq f(x_k) - f(x_{k+1}) + \epserr.
\end{align}
Further, by the VGD assumption and \cref{lem:csd_to_stationarity}:
\begin{align*}
    \frac{1}{\cvgd}\br{f(x_k) - f^\star}
    \leq 
    \max_{y\in \cX}\abr{\nabla f(x_k), x_k - y}
    \leq 2 D \max\cbr{\Des_k, \sqrt \Des_k},
\end{align*}
hence, with $E_k \eqq f(x_k) - f^\star$ the above implies
\begin{align}\label{eq:csd_vgd_1}
    \frac{1}{(2 D \cvgd)^2} E_k^2
    \leq\max\cbr{\Des_k^2, \Des_k}
    .
\end{align}
In addition, we have
\begin{align*}
    \Des_k &= \max_{g\in \cG_k} \cbr{
        \abr{\nabla f(x_k), g} - \frac12\norm{g}_{x_k}^2
    }
    \\
    &\leq 
    \max_{g\in \R^d} \cbr{
        \abr{\nabla f(x_k), g} - \frac12\norm{g}_{x_k}^2
    }
    = \max_{\norm{u}_{x_k}=1}\abr{\nabla f(x_k), u}^2
    = \br{\norm{\nabla f(x_k)}_{x_k}^*}^2
    \leq M^2
    ,
    \\
    \implies
    \max\cbr{\Des_k^2, \Des_k}
    &\leq \max\cbr{M^2 \Des_k, \Des_k}
    =
    M^2 \Des_k.
    \tag{$M\geq 1$}
\end{align*}
Combining with \cref{eq:csd_vgd_1} we obtain
\begin{align*}
    \frac{1}{(2 D \cvgd)^2} E_k^2
    \leq M^2 \Des_k
    \implies 
    \br{2 M \cvgd D}^{-2} E_k^2
    \leq \Des_k.
\end{align*}
Further combining the above display with \cref{eq:csd_vgd_descent}, we obtain:
\begin{align*}
    \omega_0 E_k^2
    &\leq \eta \Des_k
    \leq f(x_k) - f(x_{k+1}) + \epserr
    = E_k - E_{k+1} + \epserr
    \\
    \iff
    \omega_0 \br{E_k^2 - \epserr/\omega_0}
    &\leq E_k - E_{k+1}
    .
\end{align*}
for $\omega_0 \eqq \eta \br{2 \cvgd M D}^{-2}$.
We now consider two cases. In the first, the algorithm converges to the error floor determined by $\epserr$, in the second, $E_k^2 \geq 2\epserr/\omega_0$ for all $k$.
\paragraph{Case 1 (Convergence to error floor).} Suppose that
$E_{k_0}^2 \leq 2\epserr/\omega_0$ for some $k_0\in [K]$. By our previous display, it holds that for all $k$,
\begin{align*}
    E_{k+1}
    &\leq E_k - \omega_0 \br{E_k^2 - \epserr/\omega_0} 
    ,
\end{align*}
which implies that whenever $E_k^2 \geq 2\epserr/\omega_0$, $E_{k+1}\leq E_k$.
Further, since $E_k\geq 0$, we also have that in any case, $E_{k+1}\leq E_k + \epserr$.
Now suppose by contradiction that $E_{K+1} > \sqrt{2\epserr/\omega_0} + \epserr$.
This implies that the last iteration was a descent iteration, hence $E_K > \sqrt{2\epserr/\omega_0} + \epserr$. Proceeding with this argument inductively contradicts our assumption that $E_{k_0}^2 \leq 2\epserr/\omega_0$. Thus, we obtain
\begin{align*}
E_{K+1} \leq \sqrt{2\epserr/\omega_0} + \epserr \leq \frac{4  \cvgd M D }{\sqrt \eta}\sqrt{\epsilon}.	
\end{align*}
\paragraph{Case 2 (Descent throughout).} Suppose that ``Case 1'' does not occur, then
$E_k^2 \geq 2\epserr/\omega_0$ for all $k$, which implies $\eta\Des_k - \epsilon \geq \eta\Des_k/2$, and for $\omega\eqq\omega_0/2$:
\begin{align*}
    \omega E_k^2
    \leq E_t - E_{t+1}
    .
\end{align*}
Now, divide both sides of the previous display by $E_k E_{k+1}$ and use that $E_k\geq E_{k+1}$,
\begin{align*}
    \omega \leq \frac{\omega E_k}{E_{k+1}}
    \leq \frac1{E_{k+1}} - \frac1{E_{k}} 
    ,
\end{align*}
and sum over $k$, telescoping the RHS to obtain
\begin{align*}
    \omega K 
    \leq \frac1{E_{K +1}} - \frac1{E_{1}}
    &\implies
    \omega K (E_{K+1} E_1) 
    \leq E_1 - E_{K+1}
    \\
    &\implies
    E_{K+1} 
    \leq E_1 - \omega (E_{K+1} E_1)K.
\end{align*}
Finally,
\begin{align*}
    0\leq E_{K+1} 
    \leq E_1\br{1 - \omega E_{K+1}K},
\end{align*}
and dividing by $E_1$ (if $E_1=0$, there is nothing to prove), we obtain
\begin{align*}
    0\leq  1 - \omega E_{K+1}K
    \implies
    E_{K+1} \leq \frac{1}{\omega K}
    =
    \frac{2\br{2 \cvgd M D}^{2}}{\eta K},
\end{align*}
and which completes the proof.
\end{proof}

Next, we additionally provide a proof for convergence to an approximate stationary point without the VGD assumption. Here we prove for the error free case
\begin{theorem}
    Assume that $f\colon \cX \to \R$ is $\beta$-smooth w.r.t.~a norm $\norm{\cdot}$ over $\cX$ and attains a minimum $f^\star=\min_{x\in \cX} f(x)$.
    Then the constrained steepest descent method \cref{def:csd} with step size $\eta\leq 1/\beta$ guarantees that after $K \geq 1$ iterations, we have that for some $k\in [K]$, $x_k$ is an approximate stationary point:
    \begin{align*}
        \min_{y \in \cX} \abr{\nabla f(x_k), y - x_k}
        \geq - 2D
        \max\cbr{\frac{E_1}{\eta K} + \epserr/\eta, \sqrt {\frac{E_1}{\eta  K}+\epserr/\eta}},
    \end{align*}
    where $E_1\eqq f(x_1) - f^\star$.
\end{theorem}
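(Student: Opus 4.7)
The plan is to mimic the first part of the proof of \cref{thm:csd}, but without invoking the VGD condition, thereby establishing that the average value of the steepest descent magnitude $\Des_k$ is small, and then converting this into a stationarity bound via \cref{lem:csd_to_stationarity}. Concretely, I would first repeat the smoothness-plus-step-optimality calculation that underlies \cref{eq:csd_vgd_descent} in the proof of \cref{thm:csd}: by $\beta$-smoothness of $f$, the choice $\eta\leq 1/\beta$, and the fact that by \cref{lem:csd_gt_step} the iterate $g_k\eqq\frac1\eta(x_k-x_{k+1})$ is an $(\epserr/\eta)$-approximate maximizer of $g\mapsto \abr{\nabla f(x_k),g}-\tfrac12\norm{g}^2$ over $\cG_{x_k}$, one obtains the per-step descent inequality
\begin{align*}
\eta \Des_k \;\leq\; f(x_k)-f(x_{k+1})+\epserr.
\end{align*}

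The next step is a standard averaging: summing the inequality above over $k=1,\ldots,K$ telescopes the right-hand side to
\begin{align*}
\eta\sum_{k=1}^K \Des_k \;\leq\; f(x_1)-f(x_{K+1})+K\epserr \;\leq\; E_1 + K\epserr,
\end{align*}
where I used $f(x_{K+1})\geq f^\star$. Dividing by $K$ and picking $k\in[K]$ that achieves the minimum of $\Des_k$ along the trajectory yields
\begin{align*}
\Des_k \;\leq\; \frac{E_1}{\eta K}+\frac{\epserr}{\eta}.
\end{align*}

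Finally, I would invoke \cref{lem:csd_to_stationarity} (whose proof does not rely on VGD) at the iterate $x_k$ above. That lemma gives
\begin{align*}
\max_{y\in\cX}\abr{\nabla f(x_k),x_k-y} \;\leq\; 2D\,\max\!\cbr{\Des_k,\sqrt{\Des_k}},
\end{align*}
using $\max\{D,1\}=D$ from \cref{assm:opt_setup}. Substituting the averaging bound on $\Des_k$ into this inequality and rewriting $\min_{y\in\cX}\abr{\nabla f(x_k),y-x_k}=-\max_{y\in\cX}\abr{\nabla f(x_k),x_k-y}$ then yields the stated bound. There is no real obstacle in this proof beyond the reuse of the descent lemma and \cref{lem:csd_to_stationarity}; the only mild subtlety is correctly accounting for the optimization error $\epserr$ in both the descent step and the averaging, which appears additively as $\epserr/\eta$ inside the $\max\{\cdot,\sqrt{\cdot}\}$ expression.
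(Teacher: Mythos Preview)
Your proposal is correct and follows essentially the same approach as the paper: derive the per-step descent inequality $\eta\Des_k \le f(x_k)-f(x_{k+1})+\epserr$ exactly as in the proof of \cref{thm:csd}, telescope and average to bound $\min_k \Des_k$, and then apply \cref{lem:csd_to_stationarity} to convert this into a stationarity guarantee. The paper's proof is identical in structure and detail.
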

\begin{proof}
	Similar to the proof of \cref{thm:csd}, we obtain for all $k$:
	\begin{align}
	    \eta \Des_k \leq f(x_k) - f(x_{k+1}) + \epserr.
	\end{align}
Summing over $k$ and rearranging,
\begin{align*}
	\sum_{t=1}^T \Des_k
	\leq \frac{f(x_1) - f(x_{T+1})}{\eta}
	+ \frac{K \epserr}{\eta},
\end{align*}
which implies that for some $k$,
$$
	\Des_k \leq \frac{f(x_1) - f(x_\star)}{\eta K} + \frac{\epserr}{\eta}.
$$
By \cref{lem:csd_to_stationarity}, we have
$$
	\max_{y\in \cX}\abr{\nabla f(x_k), x_k - y}
	\leq 2 D \max\cbr{\Des_k, \sqrt \Des_k}
	\leq 2 D \max\cbr{\frac{E_1}{\eta K} + \epserr/\eta, \sqrt {\frac{E_1}{\eta  K}+\epserr/\eta}},
$$
which proves our claim.
\end{proof}

\subsection{Frank-Wolfe method}
We first present the guarantee of the standard FW \citep{frank1956algorithm} method subject to the VGD condition, without local norms and without a second approximation step. The analysis is fairly standard, but uses the VGD condition where convexity is normally used.
\begin{algorithm}[ht!]
    \caption{Approximate Frank-Wolfe} 
    \label{alg:fw}
	\begin{algorithmic}
	    \STATE \textbf{input:} $
	    	\eta_1, \ldots, \eta_K
	    $; error tolerance $\epsilon>0$.
            \FOR{$k=1, \ldots, K$}
                \STATE Compute
                $\tilde x_{k+1} \in \argmineps[\epsilon]_{x \in \cX} \abr{x, \nabla f(x_k)}$
                \STATE Set
                $x_{k+1} = (1-\eta_k) x_k + \eta_k \tilde x_{k+1}$.
            \ENDFOR
	\end{algorithmic}
\end{algorithm}

\begin{theorem}\label{thm:fw}
Let $\norm{\cdot}$ be a norm, and 
$\cX\subset \R^d$ be a set of bounded diameter $D\geq \max_{x, y\in \cX}\norm{x - y}$.
Assume that $f\colon \cX \to \R$ is $\beta$-smooth w.r.t.~$\norm{\cdot}$ over ${\rm conv} (\cX)$ and attains a minimum $f^\star=\min_{x\in \cX} f(x)$.
Assume further that $\cX, f$ satisfy a $(\cvgd, \epsvgd)$-VGD condition.
    Then the FW method (\cref{alg:fw}) guarantees for all $k$:
    \begin{align*}
	f(x_{k+1}) - f^\star
	&\leq \prod_{s=1}^k(1 - \eta_s/\cvgd)\br{f(x_1) - f^\star} 
+ \frac12\sum_{s=1}^k \prod_{s'=s+1}^k(1 - \eta_{s'}/\cvgd) \br{\eta_s^2 \beta D^2 
     + 2\eta_s \epsilon  
     }     +\epsvgd.
    \end{align*}
    Furthermore, with $\eta_k=\frac{2\cvgd}{k+2}$, we obtain after $K$ iterations:
    \begin{align*}
	f(x_{{K+1}}) - f^\star
	&\leq \frac{ f(x_1) - f^\star + 2 \cvgd^2\beta D^2}{K+2} 
      + 2\cvgd \epsilon 
     +\epsvgd   .
    \end{align*}
\end{theorem}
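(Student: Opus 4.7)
The plan is to combine three standard ingredients into a clean per-step descent inequality and then unroll. First, by $\beta$-smoothness of $f$ and using that $x_{k+1}-x_k=\eta_k(\tilde x_{k+1}-x_k)$ with $\norm{\tilde x_{k+1}-x_k}\leq D$, I obtain the one-step bound
\[
    f(x_{k+1}) \leq f(x_k) + \eta_k\abr{\nabla f(x_k),\tilde x_{k+1}-x_k} + \tfrac{\eta_k^2 \beta D^2}{2}.
\]
Note ${\rm conv}(\cX)$ is where I need smoothness, because $x_{k+1}$ generally only lies in ${\rm conv}(\cX)$, not in $\cX$ itself; this is exactly the reason the theorem states smoothness over ${\rm conv}(\cX)$.

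Second, since $\tilde x_{k+1}$ is an $\epsilon$-approximate linear minimizer,
\[
    \abr{\nabla f(x_k),\tilde x_{k+1}-x_k} \leq -\max_{x\in\cX}\abr{\nabla f(x_k),x_k-x} + \epsilon,
\]
and the VGD condition gives $\max_{x\in\cX}\abr{\nabla f(x_k),x_k-x}\geq (E_k-\epsvgd)/\cvgd$ with $E_k\eqq f(x_k)-f^\star$. Substituting, I get
\[
    E_{k+1} \leq \br{1-\tfrac{\eta_k}{\cvgd}} E_k + \tfrac{\eta_k}{\cvgd}\epsvgd + \eta_k\epsilon + \tfrac{\eta_k^2 \beta D^2}{2}.
\]
The key manipulation is to subtract $\epsvgd$ from both sides and regroup: writing $\tfrac{\eta_k}{\cvgd}\epsvgd - \epsvgd = -(1-\tfrac{\eta_k}{\cvgd})\epsvgd$ yields the clean \emph{shifted} recursion
\[
    E_{k+1}-\epsvgd \leq \br{1-\tfrac{\eta_k}{\cvgd}}\br{E_k-\epsvgd} + \eta_k \epsilon + \tfrac{\eta_k^2 \beta D^2}{2},
\]
so that the VGD floor stays additive and does not accumulate. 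Unrolling and bounding $E_1-\epsvgd\leq E_1$ (since $\epsvgd\geq 0$) produces exactly the first claim of the theorem.

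Finally, for the specific schedule $\eta_k=\tfrac{2\cvgd}{k+2}$, I have $1-\eta_k/\cvgd = \tfrac{k}{k+2}$, so the product telescopes as $\prod_{s=1}^{K}\tfrac{s}{s+2}=\tfrac{2}{(K+1)(K+2)}$, and more generally $\prod_{s'=s+1}^{K}\tfrac{s'}{s'+2}=\tfrac{(s+1)(s+2)}{(K+1)(K+2)}$. Substituting and using $\sum_{s=1}^{K}(s+1)\leq (K+1)(K+2)/2$ and $\sum_{s=1}^{K}\tfrac{s+1}{s+2}\leq K$, the three resulting terms simplify to $\tfrac{E_1}{K+2}$, $\tfrac{2\cvgd^2\beta D^2}{K+2}$, and at most $2\cvgd\epsilon$, giving the stated $O(1/K)$ bound plus $O(\cvgd\epsilon)+\epsvgd$ error floor.

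I do not expect any single step to be a serious obstacle; the only mildly non-obvious move is the shift by $\epsvgd$ that turns the recursion contractive in $E_k-\epsvgd$ and prevents an amplification of the VGD error through $1/(1-\eta_k/\cvgd)$-style factors. The rest is the bookkeeping standard in Frank-Wolfe analyses, modulo the cosmetic use of VGD in place of convexity when lower bounding $\max_x\abr{\nabla f(x_k),x_k-x}$.
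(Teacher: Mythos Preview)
Your proposal is correct and follows essentially the same route as the paper's proof: smoothness plus the $\epsilon$-approximate linear oracle plus VGD give the contractive recursion $E_{k+1}\leq(1-\eta_k/\cvgd)E_k+\eta_k\epsilon+\tfrac{\eta_k^2\beta D^2}{2}$ (modulo the $\epsvgd$ term), which is then unrolled and evaluated for $\eta_k=2\cvgd/(k+2)$ via the telescoping product $\prod_{s}\tfrac{s}{s+2}$. The only cosmetic difference is that the paper sets $\epsvgd=0$ up front and remarks that the general case follows by replacing $f^\star$ with $f^\star+\epsvgd$, whereas you carry $\epsvgd$ explicitly and absorb it via the shift $E_k\to E_k-\epsvgd$; these are exactly the same maneuver.
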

\begin{proof}
For ease of presentation, we prove for the case that $\epsvgd=0$; the general case follows immediately by replacing $f^\star$ with the error floor $f^\star + \epsvgd$ everywhere in the proof.
    Observe,
	\begin{align*}
		f(x_{k+1}) - f(x_k) 
		&\leq \nabla f(x_k)\T (x_{k+1} - x_k) 
		+ \frac{\beta}{2}\norm{x_{k+1} - x_k}^2
		\tag{$\beta$-smoothness}
		\\
		&= \eta_k \nabla f(x_k)\T (\tilde x_{k+1} - x_k) 
		+ \frac{\beta\eta_k^2}{2}\norm{\tilde x_{k+1} - x_k}^2
		\tag{$x_{k+1} - x_k = \eta_k(\tilde x_{k+1} - x_k)$}
		\\
		&\leq \eta_k \nabla f(x_k)\T (\tilde x_{k+1} - x_k) 
		+ \frac{\eta_k^2\beta D^2}{2}
		.
	\end{align*}
	Further by definition of the algorithm,
	\begin{align*}
		\eta_k \nabla f(x_k)\T (\tilde x_{k+1} - x_k)
		&\leq 
		\eta_k \min_{\tilde x\in \cX}
		\nabla f(x_k)\T (\tilde x - x_k)
		+ \epsilon \eta_k
		\\
		&=
		\frac{\eta_k}{\cvgd} \cvgd\min_{\tilde x\in \cX}
		\nabla f(x_k)\T (\tilde x - x_k)
		+ \epsilon \eta_k
		\\
		&\leq 
		\frac{\eta_k}{\cvgd} \br{f^\star - f(x_k)}
		+ \epsilon \eta_k,
	\end{align*}
	where the last inequality follows by the VGD assumption. Combining this with our previous display now yields,
	\begin{align*}
		f(x_{k+1}) - f(x_k) 
		&\leq \frac{\eta_k}{\cvgd} 
		\br{f^\star - f(x_k)}
		+ \eta_k \epsilon
		+ \beta \eta_k^2 D^2/2,
	\end{align*}
	hence, letting $E_k\eqq f(x_k) - f^\star$ we have,
	\begin{align*}
		E_{k+1}
		&\leq \br{1-\frac{\eta_k}{\cvgd}}E_k
		+ \eta_k^2 \beta D^2/2 
		+ \eta_k \epsilon.
	\end{align*}
    Now, apply the above inequality recursively to obtain
    \begin{align*}
    	E_{k+1} 
    	\leq \prod_{s=1}^k(1 - \eta_s/\cvgd)E_1 
        &+ \frac12\sum_{s=1}^k 
            \prod_{s'=s+1}^k(1 - \eta_{s'}/\cvgd)\br{\eta_s^2 \beta D^2}
        \\
        &\quad   + \sum_{s=1}^k 
            \prod_{s'=s+1}^k(1 - \eta_{s'}/\cvgd)\eta_s
            \epsilon
        ,
    \end{align*}
    which proves the first part.
    For the second part, note that
    choosing $\eta_t = \frac{2\cvgd}{t+2}$ gives
\begin{align*}
	\prod_{s=1}^k (1-\eta_s/\cvgd)=\prod_{s=1}^k \frac{s}{s+2}=\frac{1}{(k+1)(k+2)},    
\end{align*}
and,
\begin{align*}
    \prod_{s'=s+1}^k(1 - \eta_{s'}/\cvgd)\eta_s 
	&= \frac{(s+1)(s+2)}{(k+1)(k+2)}\frac{2\cvgd}{(s+2)}
	\leq \frac{2\cvgd}{k+1}
    \\
    \prod_{s'=s+1}^k(1 - \eta_{s'}/\cvgd)\eta_s^2 
	&= \frac{(s+1)(s+2)}{(k+1)(k+2)}\frac{4\cvgd^2}{(s+2)^2}
	\leq \frac{4\cvgd^2}{(k+1)(k+2)}
    .
\end{align*}
Plugging this back into our bound on $E_{k+1} = f(x_{k+1}) - f^\star$, we obtain:
\begin{align*}
    f(x_{k+1}) - f(x^\star) 
    &\leq
    \frac{f(x_1) - f^\star}{(k+1)(k+2)} 
    + \frac12\sum_{s=1}^k \frac{4\cvgd^2 \beta D^2}{(k+1)(k+2)}
    + \sum_{s=1}^k \frac{2\cvgd \epsilon}{k+1}
    \\
    &\leq
    \frac{f(x_1) - f^\star}{(k+1)(k+2)} 
      + \frac{2 \cvgd^2 \beta D^2}{k+2}
      + 2\cvgd \epsilon
    \\ 
    &\leq 
      \frac{f(x_1) - f^\star + 2 \cvgd^2 \beta D^2}{k+2}
      + 2\cvgd \epsilon,
\end{align*}
as claimed.
\end{proof}

Next, we present the guarantee for the doubly approximate version of FW with local norms.
\begin{algorithm}[ht!]
    \caption{Doubly Approximate Frank-Wolfe} 
    \label{alg:da_fw}
	\begin{algorithmic}
	    \STATE \textbf{input:} $
	    	\eta_1, \ldots, \eta_K
	    $; error tolerances $\epsilon, \tilde \epsilon>0$.
            \FOR{$k=1, \ldots, K$}
                \STATE Compute
                $\tilde x_{k+1} \in \argmineps[\epsilon]_{x \in \cX} \abr{x, \nabla f(x_k)}$
                \STATE Compute
                $x_{k+1} \in 
                \argmineps[\tilde \epsilon]_{x \in \cX} \cbr{\norm{x - ((1-\eta_k) x_k + \eta_k \tilde x_{k+1}))}_{x_k}^2}$.
            \ENDFOR
	\end{algorithmic}
\end{algorithm}

\begin{theorem}\label{thm:da_fw}
    Under \cref{assm:opt_setup}, the Doubly Approximate FW
    \cref{alg:da_fw} guarantees, for all $k$:
    \begin{align*}
	f(x_{k+1}) - f^\star
	&\leq \prod_{s=1}^k(1 - \eta_s/\cvgd)\br{f(x_1) - f^\star} 
	\\ 
     &\quad+ \frac12\sum_{s=1}^k \prod_{s'=s+1}^k(1 - \eta_{s'}/\cvgd) \br{\eta_s^2 \beta D^2 
     + 2\eta_s\br{\epsilon + \beta \sqrt {\tilde \epsilon}} 
     + \tilde \epsilon (2M + \beta)}
     + \epsvgd.
    \end{align*}
    Furthermore, with $\eta_k=\frac{2\cvgd}{k+2}$, we obtain after $K$ iterations:
    \begin{align*}
	f(x_{{K+1}}) - f^\star
	&\leq \frac{(2 \cvgd^2 +1)\beta D^2}{K+2} 
      + 2\cvgd \br{\epsilon + \beta \sqrt{\tilde \epsilon}}
      + \tilde \epsilon \br{M +\beta}K
      + \epsvgd .
    \end{align*}
\end{theorem}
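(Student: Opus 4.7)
My plan is to follow the standard Frank-Wolfe template from the proof of \cref{thm:fw}, but to carry an auxiliary ``ideal'' iterate through the analysis to cleanly absorb the second approximation. Specifically, define $y_{k+1} \eqq (1-\eta_k) x_k + \eta_k \tilde x_{k+1}$, which lies in $\cX$ by convexity (implicit in \cref{assm:opt_setup}). Since $y_{k+1}\in\cX$ realizes the zero value for $\min_{x\in\cX}\|x - y_{k+1}\|_{x_k}^2$, the approximate projection guarantee yields $\|x_{k+1} - y_{k+1}\|_{x_k}^2 \le \tilde\epsilon$, i.e.~$\|x_{k+1}-y_{k+1}\|_{x_k}\le\sqrt{\tilde\epsilon}$. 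This is the single inequality that converts the extra oracle error into a usable per-step quantity.

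Next I would apply $\beta$-local smoothness at $x_k$ to write
\begin{align*}
f(x_{k+1}) - f(x_k)
\le \abr{\nabla f(x_k), x_{k+1}-x_k} + \tfrac{\beta}{2}\norm{x_{k+1}-x_k}_{x_k}^2,
\end{align*}
and decompose $x_{k+1}-x_k = (y_{k+1}-x_k) + (x_{k+1}-y_{k+1}) = \eta_k(\tilde x_{k+1}-x_k) + (x_{k+1}-y_{k+1})$. The first summand feeds into the usual FW+VGD argument exactly as in the proof of \cref{thm:fw}: combining the $\epsilon$-approximate linear-minimization guarantee with the VGD inequality gives $\eta_k\abr{\nabla f(x_k), \tilde x_{k+1}-x_k} \le -\tfrac{\eta_k}{\cvgd}(f(x_k)-f^\star) + \eta_k\epsilon$ (reducing to the case $\epsvgd=0$ as in \cref{thm:fw}). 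The second summand is controlled via local Lipschitzness and Cauchy--Schwarz in the local norm: $|\abr{\nabla f(x_k), x_{k+1}-y_{k+1}}| \le \|\nabla f(x_k)\|_{x_k}^*\cdot \sqrt{\tilde\epsilon} \le M\sqrt{\tilde\epsilon}$. For the quadratic term, apply the triangle inequality and $2ab\le a^2+b^2$ to bound $\norm{x_{k+1}-x_k}_{x_k}^2 \le 2\eta_k^2 D^2 + 2\tilde\epsilon$ (or, keeping the cross term, $\eta_k^2 D^2 + 2\eta_k\sqrt{\tilde\epsilon}\cdot D + \tilde\epsilon$ and then merging the cross contribution into the $\eta_k\beta\sqrt{\tilde\epsilon}$ bucket).

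Assembling these pieces yields the per-step recursion
\begin{align*}
E_{k+1} \le \br{1-\tfrac{\eta_k}{\cvgd}} E_k + \tfrac{1}{2}\br{\eta_k^2\beta D^2 + 2\eta_k(\epsilon+\beta\sqrt{\tilde\epsilon}) + \tilde\epsilon(2M+\beta)},
\end{align*}
where $E_k \eqq f(x_k)-f^\star$. Unrolling this recursion is purely mechanical and reproduces verbatim the product/telescoping manipulations already used at the end of the proof of \cref{thm:fw}, immediately giving the first bound. For the second bound I substitute $\eta_k = 2\cvgd/(k+2)$ and reuse the two estimates $\prod_{s=1}^k(1-\eta_s/\cvgd) = \frac{1}{(k+1)(k+2)}$ and $\prod_{s'=s+1}^k(1-\eta_{s'}/\cvgd)\eta_s \le \frac{2\cvgd}{k+1}$ from the proof of \cref{thm:fw}; the $\eta_s^2$ terms contribute $O(\cvgd^2/K)$, the $\eta_s(\epsilon+\beta\sqrt{\tilde\epsilon})$ terms contribute $O(\cvgd(\epsilon+\beta\sqrt{\tilde\epsilon}))$, and the $\tilde\epsilon(2M+\beta)$ term (not multiplied by $\eta_s$) accumulates linearly in $K$. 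Finally, reintroduce $\epsvgd$ by the same ``replace $f^\star\mapsto f^\star+\epsvgd$'' remark used in \cref{thm:fw}.

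The main obstacle I anticipate is book-keeping around the second approximation: the term $x_{k+1}-y_{k+1}$ appears both inside a linear functional (costing $M\sqrt{\tilde\epsilon}$ via local Lipschitz/Cauchy--Schwarz) and inside a squared local norm (costing $\tilde\epsilon\beta$ directly plus a cross term), and both must be funneled into exactly the combination $2\eta_s\beta\sqrt{\tilde\epsilon}+\tilde\epsilon(2M+\beta)$ in the recursion. The key conceptual point enabling this is that the projection step is measured in the \emph{same} local norm $\norm{\cdot}_{x_k}$ at which smoothness is applied, so that no change-of-norm constants creep in; once one notices this, the rest is a direct adaptation of the proof of \cref{thm:fw}.
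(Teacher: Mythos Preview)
Your approach is essentially identical to the paper's: both define the ideal convex-combination iterate (your $y_{k+1}$, the paper's $x_{k+1}^\star$), apply local smoothness at $x_k$, decompose the linear and quadratic increments along $y_{k+1}$, bound the residual pieces via local Lipschitzness and $\|x_{k+1}-y_{k+1}\|_{x_k}\le\sqrt{\tilde\epsilon}$, invoke VGD on the main Frank--Wolfe term, and then unroll with the same product/telescoping computations and step-size substitution from \cref{thm:fw}. The paper also keeps the cross term explicitly (your second option for the quadratic), so even that detail matches.
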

\begin{proof}
For ease of presentation, we prove for the case that $\epsvgd=0$; the general case follows immediately by replacing $f^\star$ with the error floor $f^\star + \epsvgd$ everywhere in the proof.
    For all $k$, let
    \begin{align*}
        x_{k+1}^\star = (1-\eta_k) x_k + \eta_t \tilde x_{k+1}.
    \end{align*}
    We have,
    \begin{align*}
		\frac{\beta}{2}\norm{x_{k+1} - x_k}^2_{x_k}
		&= \frac{\beta}{2}\norm{x_{k+1}^\star - x_k + \br{x_{k+1} - x_{k+1}^\star}}_{x_k}^2
		\\
		&\leq \frac{\beta}{2}\norm{x_{k+1}^\star - x_k}_{x_k}^2 
		+ \beta\norm{x_{k+1}^\star - x_k}\norm{x_{k+1} - x_{k+1}^\star}_{x_k}
		+ \frac\beta2\norm{x_{k+1} - x_{k+1}^\star}^2_{x_k}
		\\
		&= \frac{\beta \eta_k^2}{2}
		\norm{\tilde x_{k+1} - x_k}^2_{x_k} 
		+ \beta\eta_k\norm{\tilde x_{k+1} - x_k}_{x_k}
		\norm{x_{k+1} - x_{k+1}^\star}_{x_k}
		+ \frac\beta2\norm{x_{k+1} - x_{k+1}^\star}^2_{x_k}
		\\
		&\leq \frac{\beta \eta_k^2}{2}D^2 
		+ \beta\eta_k\sqrt {\tilde\epsilon}
		+ \frac\beta2\tilde \epsilon.
	\end{align*}
	Hence,
	\begin{align*}
		f(x_{k+1}) - f(x_k) 
		&\leq \nabla f(x_k)\T (x_{k+1} - x_k) 
		+ \frac{\beta}{2}\norm{x_{k+1} - x_k}^2_{x_k}
		\\
		&\leq \nabla f(x_k)\T (x_{k+1}^\star - x_k) 
		+ \tilde \epsilon M 
		+ \frac{\beta}{2}\norm{x_{k+1} - x_k}^2_{x_k}
		\\
		&= \eta_k \nabla f(x_k)\T (\tilde x_{k+1} - x_k) 
		+ \tilde \epsilon M 
		+ \frac{\beta}{2}\norm{x_{k+1} - x_k}^2_{x_k}
		\\
		&\leq \eta_k \nabla f(x_k)\T (\tilde x_{k+1} - x_k) 
		+ \tilde \epsilon M 
		+ \beta \eta_k^2 D^2/2 
		+ \beta\eta_k\sqrt {\tilde \epsilon}
		+ \beta\tilde \epsilon/2
	\end{align*}
	Now, note that by definition of the algorithm,
	\begin{align*}
		\eta_k \nabla f(x_k)\T (\tilde x_{k+1} - x_k)
		&\leq 
		\eta_k \min_{\tilde x\in \cX}
		\nabla f(x_k)\T (\tilde x - x_k)
		+ \epsilon \eta_k
		\\
		&=
		\frac{\eta_k}{\cvgd} \cvgd\min_{\tilde x\in \cX}
		\nabla f(x_k)\T (\tilde x - x_k)
		+ \epsilon \eta_k
		\\
		&\leq 
		\frac{\eta_k}{\cvgd} \br{f^\star - f(x_k)}
		+ \epsilon \eta_k,
	\end{align*}
	where the last inequality follows by the VGD assumption. Combining this with our previous display now yields,
	\begin{align*}
		f(x_{k+1}) - f(x_k) 
		&\leq \frac{\eta_k}{\cvgd} 
		\br{f^\star - f(x_k)}
		+ \eta_k \epsilon
		+ \tilde \epsilon M 
		+ \beta \eta_k^2 D^2/2 
		+ \beta\eta_k\sqrt {\tilde \epsilon}
		+ \beta\tilde \epsilon/2
		\\
		&= \frac{\eta_k}{\cvgd} 
		\br{f^\star - f(x_k)}
		+ \eta_k^2 \beta D^2/2 
		+ \eta_k \br{\epsilon+ \beta\sqrt {\tilde \epsilon}}
		+ \tilde \epsilon \br{M 
		+ \beta/2},
	\end{align*}
	hence, letting $E_k\eqq f(x_k) - f^\star$ we have,
	\begin{align*}
		E_{k+1}
		&\leq \br{1-\frac{\eta_k}{\cvgd}}E_k
		+ \eta_k^2 \beta D^2/2 
		+ \eta_k \br{\epsilon+ \beta\sqrt {\tilde \epsilon}}
		+ \tilde \epsilon \br{M 
		+ \beta/2}.
	\end{align*}
    Now, apply the above inequality recursively to obtain
    \begin{align*}
    	E_{k+1} 
    	\leq \prod_{s=1}^k(1 - \eta_s/\cvgd)E_1 
        &+ \frac12\sum_{s=1}^k 
            \prod_{s'=s+1}^k(1 - \eta_{s'}/\cvgd)\br{\eta_s^2 \beta D^2}
        \\
        &\quad   + \sum_{s=1}^k 
            \prod_{s'=s+1}^k(1 - \eta_{s'}/\cvgd)\eta_s
            \br{\epsilon  + \beta \sqrt{\tilde\epsilon}}
        \\
        &\quad   + \sum_{s=1}^k 
            \prod_{s'=s+1}^k(1 - \eta_{s'}/\cvgd)\tilde \epsilon
            \br{M + \beta/2}
        ,
    \end{align*}
    which proves the first part.
    For the second part, note that
    choosing $\eta_t = \frac{2\cvgd}{t+2}$ gives
\begin{align*}
	\prod_{s=1}^k (1-\eta_s/\cvgd)=\prod_{s=1}^k \frac{s}{s+2}=\frac{1}{(k+1)(k+2)},    
\end{align*}
and,
\begin{align*}
    \prod_{s'=s+1}^k(1 - \eta_{s'}/\cvgd)
	&= \frac{(s+1)(s+2)}{(k+1)(k+2)},
    \\
    \prod_{s'=s+1}^k(1 - \eta_{s'}/\cvgd)\eta_s 
	&= \frac{(s+1)(s+2)}{(k+1)(k+2)}\frac{2\cvgd}{(s+2)}
	\leq \frac{2\cvgd}{k+1}
    \\
    \prod_{s'=s+1}^k(1 - \eta_{s'}/\cvgd)\eta_s^2 
	&= \frac{(s+1)(s+2)}{(k+1)(k+2)}\frac{4\cvgd^2}{(s+2)^2}
	\leq \frac{4\cvgd^2}{(k+1)(k+2)}
    .
\end{align*}
Plugging this back into our bound on $E_{k+1} = f(x_{k+1}) - f^\star$, we obtain, for any $x^\star\in \argmin_{x\in \cX} f(x)$,
\begin{align*}
    f(x_{k+1}) - f(x^\star) 
    &\leq
    \frac{f(x_1) - f(x^\star)}{(k+1)(k+2)} 
    + \frac12\sum_{s=1}^k \frac{4\cvgd^2 \beta D^2}{(k+1)(k+2)}
    + \sum_{s=1}^k \frac{2\cvgd\br{\epsilon + \beta\sqrt{\tilde \epsilon}}}{k+1}
    + \tilde \epsilon \br{M +\beta}k
    \\
    &\leq
    \frac{f(x_1) - f(x^\star)}{(k+1)(k+2)} 
      + \frac{2 \cvgd^2 \beta D^2}{k+2}
      + 2\cvgd \br{\epsilon + \beta \sqrt{\tilde \epsilon}}
    + \tilde \epsilon \br{M +\beta}k
    \\ &\leq 
    \frac{\beta\norm{x_1 - x^\star}_{x^\star}^2}{2(k+1)(k+2)} 
      + \frac{2 \cvgd^2 \beta D^2}{k+2}
      + 2\cvgd \br{\epsilon + \beta \sqrt{\tilde \epsilon}}
      + \tilde \epsilon \br{M +\beta}k
     \tag{$\abr{\nabla f(x^\star), x_1 - x_\star} \geq 0$}
     \\
      &\leq \frac{(2 \cvgd^2 +1)\beta D^2}{k+2} 
      + 2\cvgd \br{\epsilon + \beta \sqrt{\tilde \epsilon}}
      + \tilde \epsilon \br{M +\beta}k,
\end{align*}
as claimed.
\end{proof}

\subsection{Bregman proximal point method}
We first recall the algorithm as presented in \citet{sherman2025convergence}.
Given any convex regularizer $h\colon \R^d \to \R$, we define the set of $\epsilon$-approximate Bregman proximal point update solutions with step-size $\eta > 0$ by:
\begin{align}\label{def:bregman_prox_approx}
	\cT_{\eta}^{\epsilon}(x; h) 
        \eqq 
	\cbr{x^+ \in \cX \mid \forall z\in \cX: 
		\abr{\grad{f}{x} + \frac1\eta\nabla B_h(x^+, x), z - x^+} \geq -\epsilon}.
\end{align}
The approximate Bregman proximal point update of \citet{sherman2025convergence} is defined by:
\begin{align}\label{alg:opt_omd_0}
    k=1, \ldots, K: \quad 
    		x_{k+1} &\in \cT_{\eta}^{\epsilon}(x_k; R_{x_k}).
\end{align}
\begin{theorem}[\cite{sherman2025convergence}]\label{thm:opt_omd_0}
	Consider \cref{assm:opt_setup}, and suppose further that
	the local regularizer $R_x$ is $1$-strongly convex and has an $L$-Lipschitz gradient w.r.t.~$\norm{\cdot}_x$ for all $x\in \cX$.
    Then, for the Bregman proximal point update \cref{alg:opt_omd_0} we have following guarantee, for $\eta\leq 1/(2\beta)$:
	\begin{align*}
		f(x_{K+1}) - f^\star 
		&= O\br{\frac{\cvgd^2 L^2 c_1^2}{\eta K}
        + \cvgd \epsilon 
        + c_1 L \eta^{-\frac12}\sqrt{\epsilon}
            + \epsvgd
            }
        \end{align*}
	where $c_1 \eqq D + \eta M$.
\end{theorem}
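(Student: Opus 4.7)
The proof follows the template of the constrained steepest descent analysis (\cref{thm:csd}), adapted to the Bregman proximal-point setting. I would first exploit the approximate first-order optimality condition defining $x_{k+1} \in \cT_\eta^\epsilon(x_k; R_{x_k})$ in two complementary ways. Taking $z = x_k$ in the inclusion and using the identity $\abr{\nabla B_{R_{x_k}}(x_{k+1}, x_k), x_{k+1} - x_k} = B_{R_{x_k}}(x_{k+1}, x_k) + B_{R_{x_k}}(x_k, x_{k+1})$, together with $1$-strong convexity of $R_{x_k}$ w.r.t.~$\norm{\cdot}_{x_k}$, yields
\[
\abr{\nabla f(x_k), x_{k+1} - x_k} \leq -\frac{1}{\eta}\norm{x_{k+1} - x_k}_{x_k}^{2} + \epsilon.
\]
Combined with $\beta$-local smoothness and $\eta \leq 1/(2\beta)$, this delivers the descent inequality $f(x_{k+1}) \leq f(x_k) - \tfrac{1}{2\eta}\norm{x_{k+1} - x_k}_{x_k}^{2} + \epsilon$.

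Next, taking $z = \tilde x \in \cX$ arbitrary in the optimality condition and using the $L$-Lipschitz gradient of $R_{x_k}$ to bound $\norm{\nabla B_{R_{x_k}}(x_{k+1}, x_k)}_{x_k}^{*} \leq L\norm{x_{k+1} - x_k}_{x_k}$, then dualizing against $\tilde x - x_{k+1}$ with $\norm{\tilde x - x_{k+1}}_{x_k} \leq D$, and finally adding the contribution of $\abr{\nabla f(x_k), x_k - x_{k+1}} \leq M\norm{x_{k+1} - x_k}_{x_k}$, produces the ``stationarity'' bound
\[
\max_{\tilde x \in \cX} \abr{\nabla f(x_k), x_k - \tilde x} \leq \frac{L c_1}{\eta} \norm{x_{k+1} - x_k}_{x_k} + \epsilon,
\]
with $c_1 = D + \eta M$. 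Applying $(\nu, \epsvgd)$-VGD then translates this into the suboptimality bound $E_k \leq \nu \cdot \tfrac{L c_1}{\eta} \norm{x_{k+1} - x_k}_{x_k} + \nu \epsilon$, where $E_k \eqq f(x_k) - f^\star - \epsvgd$.

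Squaring the VGD bound and substituting the descent inequality gives a recursion of the form $\omega E_k^{2} \leq E_k - E_{k+1} + c_\epsilon$ with $\omega \asymp \eta/(\nu^2 L^2 c_1^2)$ and $c_\epsilon$ an error term linear in $\epsilon$. This is precisely the shape of the recursion appearing in the proof of \cref{thm:csd}, and I would solve it by the same two-case analysis: either $E_k^{2}$ drops below $O(c_\epsilon/\omega)$ at some iteration, after which a monotonicity argument shows $E_{K+1}$ stays within $O(\sqrt{c_\epsilon/\omega} + c_\epsilon)$; or strict quadratic descent persists for all $K$ iterations, in which case dividing by $E_k E_{k+1}$ and telescoping the resulting inverse-difference inequality gives $E_{K+1} \leq O(1/(\omega K))$. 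Translating back through $\omega \asymp \eta/(\nu^2 L^2 c_1^2)$ yields the $\nu^2 L^2 c_1^2/(\eta K)$ optimization term, the $c_1 L \eta^{-1/2} \sqrt{\epsilon}$ error-floor term, and an additive $\nu\epsilon$ from the VGD translation, plus the $\epsvgd$ floor.

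The main obstacle, in my view, is controlling the approximation error $\epsilon$ through both the descent and the stationarity-to-iterate bound simultaneously, so that the errors compose additively rather than multiplicatively inside the recursion. A second source of technical care is that all the norms and regularizers are \emph{local} — based at $x_k$ and changing each iteration — so one must verify that every inequality above (strong convexity, Lipschitz gradient, diameter bound) is applied at the correct base point. Apart from these bookkeeping issues, the argument is a fairly standard adaptation of the CSD/VGD template.
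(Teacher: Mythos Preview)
The paper does not actually prove this theorem; it is imported verbatim from \cite{sherman2025convergence} and used as a black box to derive \cref{thm:breg_prox}. So there is no in-paper proof to compare against. That said, your plan is the natural adaptation of the \cref{thm:csd} template to the Bregman setting, and each step checks out: the three-point identity plus strong convexity gives the descent lemma; the Lipschitz-gradient bound on $\nabla B_{R_{x_k}}(x_{k+1},x_k)$ together with the diameter and Lipschitz constants gives the stationarity-to-iterate bound; and the resulting quadratic recursion is exactly the one you solve by the two-case argument from the CSD proof.

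One small discrepancy worth flagging: carrying your recursion through literally gives an error-floor term of order $\nu L c_1 \eta^{-1/2}\sqrt{\epsilon}$ rather than the $L c_1 \eta^{-1/2}\sqrt{\epsilon}$ stated in the theorem, because the $\nu$ enters when you pass from the FW gap to $E_k$ before squaring. Since the result is stated only up to $O(\cdot)$ and the original proof is in the cited reference, this may simply reflect a slightly different bookkeeping in \cite{sherman2025convergence}; it does not affect the downstream application to \cref{thm:pmd}, where $\nu$ already multiplies every term. Your handling of the local norms (always based at $x_k$) and of the additive $\epsilon$ through both inequalities is correct.
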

We now consider the Bregman proximal point algorithm that operates with iterates that are approximate minimizers in terms of \emph{function values}:
\begin{align}\label{alg:opt_omd}
    k=1, \ldots, K: \quad 
    		x_{k+1} &\gets \argmineps_{x\in \cX}\cbr{
    		\abr{\nabla f(x_k), x} + \frac1\eta B_{R_{x_k}}(x, x_k)
    		}
\end{align}
We will use the following lemma to translate objective sub-optimality to approximate optimality conditions.
\begin{lemma}\label{lem:subopt_to_optcond}
    Let $\norm{\cdot}$ be a norm, and suppose $D\geq \max\cbr{1, \max_{x,y\in \cX}\norm{x - y}}$.
    Let $\phi\colon \cX \to \R$ be $1$-strongly convex and $L$-smooth w.r.t.$\norm{\cdot}$. Then, for any $\epsilon \leq 1$ if $\hat x\in \argmineps[\epsilon]_{x\in \cX}\phi(x)$, we have:
    \begin{align*}
        \abr{\phi(\hat x), y - \hat x}
        \geq - 2L D\sqrt{2\epsilon}.
    \end{align*}
\end{lemma}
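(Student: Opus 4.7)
The approach is to bypass strong convexity (using it only at the very end to absorb constants) and instead apply $L$-smoothness along a line segment from $\hat x$ to an arbitrary $y \in \cX$. Concretely, for $\alpha \in (0, 1]$ define $y_\alpha \eqq (1-\alpha)\hat x + \alpha y$, which lies in $\cX$ by convexity. Applying the $L$-smoothness upper bound at $\hat x$ to $y_\alpha - \hat x = \alpha(y-\hat x)$ gives
\[
\phi(y_\alpha) \leq \phi(\hat x) + \alpha\abr{\nabla\phi(\hat x), y - \hat x} + \tfrac{L\alpha^2}{2}\norm{y - \hat x}^2.
\]

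The key observation is that $\phi(y_\alpha) \geq \min_{x\in\cX}\phi(x) \geq \phi(\hat x) - \epsilon$, because $y_\alpha \in \cX$ and $\hat x$ is $\epsilon$-suboptimal. Rearranging the smoothness inequality and using $\norm{y-\hat x}\leq D$ yields
\[
\abr{\nabla\phi(\hat x), y-\hat x} \;\geq\; -\frac{\epsilon}{\alpha} - \frac{L\alpha D^2}{2}.
\]
I would then optimize in $\alpha$: the unconstrained minimum of the right-hand side is at $\alpha^\star = \sqrt{2\epsilon/(LD^2)}$, giving the sharper bound $-D\sqrt{2L\epsilon}$ whenever $\alpha^\star \leq 1$. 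In the complementary regime $2\epsilon > LD^2$, taking $\alpha=1$ gives $-\epsilon - LD^2/2 \geq -2\epsilon$.

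To match the stated form $-2LD\sqrt{2\epsilon}$, I would invoke $L \geq 1$, which follows from $\phi$ being simultaneously $1$-strongly convex and $L$-smooth (this is where strong convexity enters), together with $D\geq 1$ and $\epsilon \leq 1$, under which both cases above are dominated by $-2LD\sqrt{2\epsilon}$. The argument is essentially a routine application of the line-segment parameterization, so I do not anticipate a real obstacle; the only care required is handling the boundary case $\alpha^\star > 1$ and the mild slack between the sharper $-D\sqrt{2L\epsilon}$ and the weaker stated bound.
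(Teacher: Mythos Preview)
Your proof is correct and takes a genuinely different route from the paper's.

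The paper proceeds by first invoking strong convexity to bound $\norm{\hat x - x^\star}\leq\sqrt{2\epsilon}$ where $x^\star=\argmin_{x\in\cX}\phi(x)$, then decomposes $\abr{\nabla\phi(\hat x),\hat x-y}$ by adding and subtracting $\nabla\phi(x^\star)$: the Lipschitz-gradient form of $L$-smoothness controls $\abr{\nabla\phi(\hat x)-\nabla\phi(x^\star),\hat x-y}\leq LD\sqrt{2\epsilon}$, while first-order optimality at $x^\star$ plus convexity handles $\abr{\nabla\phi(x^\star),\hat x-y}\leq\epsilon$. In contrast, you never touch $x^\star$; you apply the descent-lemma form of smoothness along the segment $\hat x\to y$, combine it with $\phi(y_\alpha)\geq\phi(\hat x)-\epsilon$, and optimize in $\alpha$. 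Strong convexity enters only at the end to ensure $L\geq 1$. Your argument is slightly more elementary and in fact yields the sharper intermediate bound $-D\sqrt{2L\epsilon}$ (versus the paper's $-LD\sqrt{2\epsilon}-\epsilon$), better by roughly $\sqrt{L}$, and it uses only the second-order upper bound rather than the full Lipschitz-gradient property. The paper's route, on the other hand, makes the role of the minimizer's first-order conditions explicit, which aligns naturally with how the lemma is applied downstream (translating value suboptimality into the approximate optimality condition $\cT_\eta^{\tilde\epsilon}$).
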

\begin{proof}
    Let $x^\star=\argmin_{x\in \cX}\phi(x)$.
    By $1$-strong convexity and our assumption,
    \begin{align*}
        \frac12\norm{\hat x - x^\star}^2
        \leq \phi(\hat x) - \phi(x^\star)
        \leq \epsilon
        \implies
        \norm{\hat x - x^\star}
        \leq \sqrt{2\epsilon}.
    \end{align*}
    Hence, for any $y\in \cX$, by $L$-smoothness:
    \begin{align*}
        \abr{\phi(\hat x), \hat x - y}
        &=
        \abr{\phi(x^\star), \hat x - y}
        + \abr{\phi(\hat x) - \phi(x^\star), \hat x - y}
        \\
        &\leq
        \abr{\phi(x^\star), \hat x - y}
        + L D \sqrt{2\epsilon}
    \end{align*}
    Further by our assumption and optimality conditions at $x^\star$, 
    \begin{align*}
        \abr{\phi(x^\star), \hat x - y}
        &=
        \abr{\phi(x^\star), x^\star - y}
        + \abr{\phi(x^\star), \hat x - x^\star}
        \\
        &\leq \abr{\phi(x^\star), x^\star - y}
        + \phi(\hat x) - \phi(x^\star)
        \\
        &\leq \abr{\phi(x^\star), x^\star - y}
        + \epsilon
        \\
        &\leq
        + \epsilon
        .
    \end{align*}
    Hence, we have for all $y\in \cX$:
    \begin{align*}
        \abr{\phi(\hat x), \hat x - y}
        \leq L D\sqrt{2\epsilon}
        + \epsilon
        \leq 2 L D\sqrt{2\epsilon},
    \end{align*}
    which completes the proof.
\end{proof}
We are now in position to tprove the following.
\begin{theorem}\label{thm:breg_prox}
	In the same setting of \cref{thm:opt_omd_0}, we have that the Bregman proximal method \cref{alg:opt_omd} with $\epsilon\leq 1$ guarantees, for $\eta\leq 1/(2\beta)$:
	\begin{align*}
		f(x_{K+1}) - f^\star 
		&= O\br{\frac{\cvgd^2 L^2 c_1^2}{\eta K}
        + \cvgd L D \sqrt{ \epsilon }
        + \frac{c_1 \sqrt{L^{3}D}}{\sqrt \eta } \epsilon^{1/4}
            + \epsvgd
            }
        \end{align*}
	where $c_1 \eqq D + \eta M$.
\end{theorem}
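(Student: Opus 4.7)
The plan is to deduce the theorem by a direct reduction to the previously established guarantee in \cref{thm:opt_omd_0}. The update rule \cref{alg:opt_omd} produces $x_{k+1}$ as an $\epsilon$-approximate minimizer of the subproblem objective
\[
\phi_k(x) \;\eqq\; \abr{\nabla f(x_k), x} + \frac{1}{\eta} B_{R_{x_k}}(x, x_k),
\]
while \cref{thm:opt_omd_0} is stated for iterates lying in $\cT_{\eta}^{\tilde\epsilon}(x_k; R_{x_k})$, i.e., satisfying the first-order approximate optimality condition. The reduction consists of converting an $\epsilon$-function-value-approximate minimizer of $\phi_k$ into an $\tilde\epsilon$-approximate satisfier of the optimality conditions, and then invoking \cref{thm:opt_omd_0} with $\tilde\epsilon$ in place of $\epsilon$.

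First, I would observe that $\phi_k$ is $(1/\eta)$-strongly convex and $(L/\eta)$-gradient-Lipschitz with respect to the local norm $\norm{\cdot}_{x_k}$, since $R_{x_k}$ is assumed to be $1$-strongly convex and $L$-smooth, and the linear term contributes nothing to either curvature parameter. Rescaling by $\eta$ gives $\eta\phi_k$, which satisfies the hypotheses of \cref{lem:subopt_to_optcond} (namely $1$-strongly convex and $L$-smooth w.r.t.~$\norm{\cdot}_{x_k}$), with $D$ being an upper bound on the local diameter, provided by \cref{assm:opt_setup}. Since $x_{k+1}$ is an $\epsilon$-approximate minimizer of $\phi_k$ iff it is an $\eta\epsilon$-approximate minimizer of $\eta\phi_k$, applying \cref{lem:subopt_to_optcond} to $\eta\phi_k$ and dividing the conclusion by $\eta$ yields, for every $y \in \cX$,
\[
\abr{\nabla \phi_k(x_{k+1}), y - x_{k+1}} \;\geq\; -\,\tilde\epsilon,
\qquad
\tilde\epsilon \;=\; O\!\br{LD\sqrt{\epsilon/\eta}}.
\]
Since $\nabla \phi_k(x) = \nabla f(x_k) + \tfrac{1}{\eta}\nabla B_{R_{x_k}}(x, x_k)$, this is exactly the condition $x_{k+1}\in \cT_{\eta}^{\tilde\epsilon}(x_k; R_{x_k})$ appearing in \cref{def:bregman_prox_approx}.

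The final step is to plug this $\tilde\epsilon$ into the guarantee of \cref{thm:opt_omd_0}. That theorem gives
\[
f(x_{K+1}) - f^\star
\;=\; O\!\br{\frac{\cvgd^2 L^2 c_1^2}{\eta K} + \cvgd\,\tilde\epsilon + c_1 L\,\eta^{-1/2}\sqrt{\tilde\epsilon} + \epsvgd},
\]
and substituting $\tilde\epsilon = O(LD\sqrt{\epsilon/\eta})$ converts the second and third terms into the claimed dependencies $\cvgd LD\sqrt{\epsilon}$ and $c_1\sqrt{L^3 D}\,\eta^{-1/2}\epsilon^{1/4}$ (after absorbing constant powers of $\eta \leq 1/(2\beta)$ into the $O(\cdot)$).

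There is no serious obstacle here: the entire argument is a black-box reduction, and the only real work lies in carefully tracking how strong convexity/smoothness parameters of $\phi_k$ scale with $\eta$, since they enter the translation lemma. The one minor point of care is the rescaling by $\eta$ so that \cref{lem:subopt_to_optcond} is applied to a $1$-strongly convex, $L$-smooth function as it requires; beyond that the argument is routine.
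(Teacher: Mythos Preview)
Your overall plan is exactly the paper's: translate the function-value $\epsilon$-minimizer condition of \cref{alg:opt_omd} into the first-order condition \cref{def:bregman_prox_approx} via \cref{lem:subopt_to_optcond}, and then invoke \cref{thm:opt_omd_0} with the resulting $\tilde\epsilon$.

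There is, however, a gap in your final step. After your rescaling you correctly obtain $\tilde\epsilon = O\!\br{LD\sqrt{\epsilon/\eta}}$, and substituting this into the bound of \cref{thm:opt_omd_0} produces terms of order $\cvgd LD\,\eta^{-1/2}\sqrt{\epsilon}$ and $c_1\sqrt{L^3 D}\,\eta^{-3/4}\epsilon^{1/4}$, which carry extra negative powers of $\eta$ relative to the stated result. Your claim that these can be ``absorbed into the $O(\cdot)$'' because $\eta\leq 1/(2\beta)$ is not valid: that inequality is an \emph{upper} bound on $\eta$, not a lower bound, so it gives no control whatsoever on $\eta^{-1/2}$ or $\eta^{-1/4}$; in fact it makes them large when $\beta$ is large. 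The paper's proof avoids this by applying \cref{lem:subopt_to_optcond} directly to $\phi_k(x)=\abr{\nabla f(x_k),x}+\tfrac{1}{\eta}B_{R_{x_k}}(x,x_k)$ without rescaling, reading off $\tilde\epsilon = 2LD\sqrt{2\epsilon}$ with no $\eta$-dependence; substituting that into \cref{thm:opt_omd_0} gives the stated bound verbatim.
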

\begin{proof}
    By \cref{lem:subopt_to_optcond} applied with the norm $\norm{\cdot}_{x_k}$ and $\phi(x)=\abr{\nabla f(x_k), x} + \frac1\eta B_{R_{x_k}}(x, x_k)$, we have that
    $x_{k+1}$ from \cref{alg:opt_omd} satisfies:
\begin{align*}
	\forall z\in \cX: 
		\abr{\grad{f}{x_k} + \frac1\eta\nabla B_{R_{x_k}}(x_{k+1}, x_k), z - x_{k+1}} \geq -2 L D\sqrt{2\epsilon}.
\end{align*}
This implies that $x_{k+1} \in \cT_{\eta}^{\tilde \epsilon}(x_k; R_{x_k})$ for $\tilde \epsilon = 2 L D\sqrt{2\epsilon}$.
This proves the claimed result be substituting $\epsilon \to 2 L D\sqrt{2\epsilon}$ in the bound of \cref{thm:opt_omd_0}.
\end{proof}

\section{Proofs for Section 4}
In this section, we provide the proofs of convergence for our algorithms in their idealized versions. 
All proofs build on casting our algorithms as instances of those presented in the pure optimization setup, in \cref{sec:opt}.
We note that we did not make particular effort in optimizing dependence on problem parameters other than $K$, and in some cases intentionally opted for slightly worse dependence in favor of cleaner bounds.
\subsection{Analysis preliminaries}
Given a state probability measure $\mu\in \Delta(\cS)$, and an action space norm $\norm{\cdot}_\circ\colon \R^A \to \R$, we define the induced state-action weighted $L^p$ norm $\norm{\cdot}_{L^p(\mu), \circ}\colon \R^{SA}\to \R$:
\begin{align}
	\norm{u}_{L^p(\mu), \circ}
	&\eqq \br{\E_{s \sim \mu}\norm{u_s}_\circ^p}^{1/p}.
\end{align}
For any norm $\norm{\cdot}$, we let $\norm{\cdot}^*$ denote its dual. When discussing a generic norm and there is no risk of confusion, we may use $\norm{\cdot}_*$ to refer to its dual.
In addition, for $\mu\in \R^S, Q\in \R^{SA}$, we define the state to state-action element-wise product $\mu \circ Q \in \R^{SA}$:
\begin{align}
	\br{\mu \circ Q}_{s, a} \eqq \mu(s)Q_{s, a}.
\end{align}
Below, we collect a number of results that will be used repeatedly in the analyses.
\begin{lemma}[Value difference; \citealp{kakade2002approximately}]\label{lem:value_diff}
    For any $\rho\in \Delta(\cS)$,
    \begin{align*}
        V_\rho\br{\tilde \pi} -  V_\rho\br{ \pi}
        = \frac{1}{1-\gamma}
        \E_{s\sim \mu_\rho^\pi}\abr{Q^{\tilde \pi}_s, \tilde \pi_s - \pi_s}.
    \end{align*}
\end{lemma}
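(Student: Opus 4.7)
The approach is the classical telescoping argument for the performance difference lemma, adapted to the particular asymmetric form stated here (namely, $Q^{\tilde\pi}$ paired with $\mu^\pi$, rather than the more common $Q^\pi$ paired with $\mu^{\tilde\pi}$). The plan is to derive a one-step Bellman-style recursion for the pointwise difference $\delta(s) \eqq V_s(\tilde\pi) - V_s(\pi)$, and then unroll it along trajectories generated by the old policy $\pi$.

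First I would write, for any $s\in\cS$,
\begin{align*}
    V_s(\tilde\pi) - V_s(\pi)
    &= \abr{Q^{\tilde\pi}_s, \tilde\pi_s} - \abr{Q^{\pi}_s, \pi_s} \\
    &= \abr{Q^{\tilde\pi}_s, \tilde\pi_s - \pi_s} + \abr{Q^{\tilde\pi}_s - Q^{\pi}_s, \pi_s},
\end{align*}
by adding and subtracting $\abr{Q^{\tilde\pi}_s, \pi_s}$. The key point of this decomposition is that the first term is exactly the one appearing on the right-hand side of the claimed identity, while the second term is naturally expressed via the \emph{one-step transition under $\pi$}. Indeed, using $Q^{\tilde\pi}_{s,a} - Q^{\pi}_{s,a} = \gamma\,\E_{s'\sim \P_{s,a}}[V_{s'}(\tilde\pi) - V_{s'}(\pi)]$, the second term equals $\gamma\,\E_{a\sim\pi_s,\,s'\sim\P_{s,a}}[\delta(s')]$. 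Hence I obtain the recursion
\begin{align*}
    \delta(s) = \abr{Q^{\tilde\pi}_s, \tilde\pi_s - \pi_s} + \gamma\,\E_{s'\sim \P^{\pi}_s}[\delta(s')],
\end{align*}
where $\P^{\pi}_s$ denotes the state-to-state transition kernel induced by following $\pi$ from $s$.

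Next I would iterate this recursion. Starting from $s_0 \sim \rho$ and rolling out under $\pi$, a straightforward induction yields
\begin{align*}
    \delta(s_0) = \E\sbr{\sum_{t=0}^{\infty} \gamma^t \abr{Q^{\tilde\pi}_{s_t}, \tilde\pi_{s_t} - \pi_{s_t}} \,\Big|\, s_0, \pi}.
\end{align*}
Taking expectation over $s_0 \sim \rho$ and swapping the sum with the expectation, the right-hand side becomes $\sum_{t\ge 0} \gamma^t \E_{s_t \sim d_t^{\pi}}[\abr{Q^{\tilde\pi}_{s_t}, \tilde\pi_{s_t} - \pi_{s_t}}]$ where $d_t^{\pi}$ is the distribution of $s_t$ under $\pi$ starting from $\rho$. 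By definition of the discounted occupancy measure $\mu^{\pi}_\rho = (1-\gamma)\sum_{t\ge 0}\gamma^t d_t^{\pi}$, this sum equals $\frac{1}{1-\gamma}\,\E_{s \sim \mu^{\pi}_\rho}[\abr{Q^{\tilde\pi}_s, \tilde\pi_s - \pi_s}]$, which is precisely the claim.

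There is no real obstacle here; the only care required is choosing the correct decomposition (adding and subtracting $\abr{Q^{\tilde\pi}_s,\pi_s}$ rather than $\abr{Q^{\pi}_s,\tilde\pi_s}$) so that the residual telescopes under $\pi$'s transitions, giving $\mu^\pi$ on the right-hand side. Absolute convergence of the infinite sum follows from $r\in[0,1]$ and $\gamma<1$, so the rearrangement is justified.
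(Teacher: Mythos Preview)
Your proof is correct and is the standard telescoping argument for the performance difference lemma; the decomposition you chose (adding and subtracting $\abr{Q^{\tilde\pi}_s,\pi_s}$) is exactly what is needed to make the residual propagate under $\pi$'s transitions and produce $\mu^\pi_\rho$ rather than $\mu^{\tilde\pi}_\rho$. The paper itself does not supply a proof of this lemma—it simply states it with a citation to \citet{kakade2002approximately}—so there is nothing further to compare.
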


\begin{lemma}[Policy gradient theorem; \citealp{sutton1999policy}]\label{lem:value_pg}
    For any $\rho\in \Delta(\cS)$,
    \begin{align*}
        \br{\nabla V_\rho(\pi)}_{s, a}
        &= \frac{1}{1-\gamma} \mu^\pi_\rho(s) Q^\pi_{s, a},
        \\
        \abr{\nabla V_\rho(\pi), \tilde \pi - \pi}
        &= \frac{1}{1-\gamma} \E_{s\sim \mu^\pi_\rho} 
        \abr{Q^\pi_s, \tilde \pi_s - \pi_s}.
    \end{align*}
\end{lemma}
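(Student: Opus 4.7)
The plan is to compute the partial derivative $\partial V_\rho(\pi)/\partial \pi_{s,a}$ directly from the Bellman-type identity $V_s(\pi) = \sum_a \pi_{s,a}\br{r(s,a) + \gamma \sum_{s'}\P_{s,a}(s') V_{s'}(\pi)}$. Differentiating both sides with respect to an arbitrary coordinate $\pi_{\bar s, \bar a}$ and invoking the product rule yields a recursion in which $\nabla V_s(\pi)$ decomposes as a ``local'' contribution at $s$ of the form $\sum_a [\nabla \pi_{s,a}] Q^\pi_{s,a}$, plus a discounted expectation of $\nabla V_{s'}(\pi)$ under the one-step transition induced by $\pi$.

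Next, I would unroll this recursion $t$ times and pass to the limit, obtaining $\nabla V_s(\pi) = \sum_{t \geq 0} \gamma^t \sum_{s'} \Pr(s_t = s' \mid s_0 = s, \pi) \sum_a [\nabla \pi_{s',a}] Q^\pi_{s',a}$. Averaging over $s \sim \rho$ and recognizing that by definition of the discounted occupancy $(1-\gamma)^{-1}\mu^\pi_\rho(s') = \sum_{t \geq 0}\gamma^t \Pr(s_t = s' \mid s_0 \sim \rho, \pi)$, one arrives at $\nabla V_\rho(\pi) = (1-\gamma)^{-1}\sum_{s'}\mu^\pi_\rho(s')\sum_a [\nabla \pi_{s',a}] Q^\pi_{s',a}$. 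Reading off the coordinate $(\bar s, \bar a)$ derivative, only the single term with $(s', a) = (\bar s, \bar a)$ survives, producing the first stated identity. The second identity then follows immediately by taking the inner product with $\tilde\pi - \pi$ and rewriting the state sum as an expectation under $\mu^\pi_\rho$.

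The only technical subtlety is justifying term-by-term differentiation of the infinite series defining $V_\rho(\pi)$. Since $r$ is bounded in $[0,1]$, $\gamma \in (0,1)$, and each finite-horizon truncation depends polynomially on the coordinates of $\pi$, a Weierstrass $M$-test legitimates the exchange of $\nabla$ with $\sum_t$. No serious obstacle arises beyond this: the result is the classical policy gradient theorem, and the argument amounts to careful bookkeeping around the Bellman recursion.
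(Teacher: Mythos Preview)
Your proposal is correct and follows the standard derivation of the policy gradient theorem. Note, however, that the paper does not give its own proof of this lemma: it is stated with attribution to \citet{sutton1999policy} and used as a black-box identity throughout, so there is no paper-side proof to compare against beyond the citation.
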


\begin{lemma}[\citealp{sherman2025convergence}]\label{lem:value_local_smoothness}
    Let $\pi\colon \cS \to \Delta(\cA)$ be any policy such that $\epsexpl\eqq \min_{s, a}\cbr{\pi_{sa}} > 0$.
    Then, for any $\tilde \pi \in \cS \to \Delta(\cA)$, we have:
    \begin{align*}
        &\av{V(\tilde \pi) - V(\pi) - \abr{\nabla V(\pi), \tilde \pi - \pi}}
        \leq 
        \min\cbr{
        \frac{  H^3}{\sqrt \epsilon}
            \norm{\tilde \pi - \pi}_{L^2(\mu^\pi), 1}^2,
        \frac{ A H^3}{\sqrt \epsilon}
            \norm{\tilde \pi - \pi}_{L^2(\mu^\pi), 2}^2
        }.
    \end{align*}
\end{lemma}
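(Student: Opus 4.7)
The plan is to bound the second-order Taylor remainder of $V$ between $\pi$ and $\tilde \pi$ directly, using two applications of the performance difference identity. First, combine the value difference lemma (\cref{lem:value_diff}) with the policy gradient theorem (\cref{lem:value_pg}) to express the remainder in the symmetric form
\[
V(\tilde \pi) - V(\pi) - \abr{\nabla V(\pi), \tilde \pi - \pi} = H\,\E_{s \sim \mu^\pi}\abr{Q^{\tilde \pi}_s - Q^\pi_s,\; \tilde \pi_s - \pi_s}.
\]
The advantage of this form is that the outer expectation is already against the anchor measure $\mu^\pi$ that appears in the local norm on the right-hand side of the statement.

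Next, I would apply the Bellman recursion $Q^{\tilde\pi}_{s,a} - Q^\pi_{s,a} = \gamma\,\E_{s'\sim P(\cdot|s,a)}[V^{\tilde\pi}(s') - V^\pi(s')]$ together with the value difference lemma a second time to unroll the inner value gap as an expectation of $\abr{Q^\pi_{s''},\tilde\pi_{s''} - \pi_{s''}}$ over a downstream state $s''$ drawn from a measure built on $\mu^{\tilde\pi}_{s'}$. Bounding $\norm{Q^\pi}_\infty \leq H$ twice and applying H\"older state-wise then reduces the remainder to $H^3$ times an expectation of the product $\norm{\tilde\pi_{s_0} - \pi_{s_0}}_1 \cdot \norm{\tilde\pi_{s''} - \pi_{s''}}_1$ under a joint distribution whose first marginal is $\mu^\pi$ and whose second marginal $\bar\eta$ is determined by a single perturbation step from $s_0$ followed by a discounted $\tilde\pi$-trajectory.

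The crucial step is then to apply Cauchy--Schwarz to this product, splitting it into $\sqrt{\E_{s\sim\mu^\pi}\norm{\tilde\pi_s - \pi_s}_1^2} \cdot \sqrt{\E_{s''\sim\bar\eta}\norm{\tilde\pi_{s''} - \pi_{s''}}_1^2}$, and to convert the second factor back to a $\mu^\pi$-expectation using the exploration hypothesis. The componentwise bound $\tilde\pi_{s,a} \leq \pi_{s,a}/\epsexpl$, immediate from $\min_{s,a}\pi_{s,a}\geq \epsexpl$ and $\tilde\pi_{s,a}\leq 1$, lifts to $P^{\tilde\pi} \leq P^\pi/\epsexpl$ entrywise, which propagates through the resolvent $(I - \gamma P^\pi)^{-1}$ to yield the desired downstream density-ratio estimate $\bar\eta/\mu^\pi \leq 1/\epsexpl$. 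This produces $1/\epsexpl$ inside the square root, i.e.,~$1/\sqrt{\epsexpl}$ overall, giving the claimed $H^3/\sqrt{\epsexpl}$ constant. The second variant, with the $L^2(\mu^\pi),2$ action norm and the extra factor $A$, follows immediately from the pointwise inequality $\norm{v}_1 \leq \sqrt{A}\,\norm{v}_2$.

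The main obstacle is this last change-of-measure step. A pointwise bound on $\mu^{\tilde\pi}/\mu^\pi$ does not follow from an action-probability lower bound alone (since state-occupancy lower bounds do not transfer directly from action-probability lower bounds), so the resolvent-lifting argument must be carried out carefully, and the single perturbation step must be arranged to appear in only one of the two Cauchy--Schwarz factors. It is precisely this isolation of the density ratio to one factor—rather than splitting it across both—that produces $\sqrt{\epsexpl}$ instead of the naive $\epsexpl$ dependence, and it constrains the exact order of H\"older, Cauchy--Schwarz, and Jensen in Steps 1--2.
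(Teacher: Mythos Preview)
The paper does not prove this lemma; it is quoted from \cite{sherman2025convergence} without argument. The paper does, however, prove the closely related global version (\cref{lem:value_smoothness_infty_1}, smoothness with respect to $\norm{\cdot}_{\infty,1}$), and your opening moves---writing the Taylor remainder as $H\,\E_{s\sim\mu^\pi}\abr{Q^{\tilde\pi}_s - Q^\pi_s,\tilde\pi_s - \pi_s}$ via \cref{lem:value_diff} and \cref{lem:value_pg}, then unrolling $Q^{\tilde\pi}-Q^\pi$ through the Bellman recursion and a second application of value difference---mirror that proof exactly.

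The gap is the change-of-measure step, which you correctly flag as the crux but do not actually resolve. Your claim that the entrywise bound $P^{\tilde\pi}\le P^\pi/\epsexpl$ ``propagates through the resolvent $(I-\gamma P^\pi)^{-1}$'' to give $\bar\eta/\mu^\pi\le 1/\epsexpl$ does not work as written: if $\bar\eta$ is built from a discounted $\tilde\pi$-trajectory, it involves $\sum_{t\ge 0}\gamma^t((P^{\tilde\pi})^\top)^t$, and bounding each power via $P^{\tilde\pi}\le P^\pi/\epsexpl$ accumulates a factor $\epsexpl^{-t}$ per step, not a single global factor. Your description is also internally inconsistent---you say $\bar\eta$ is ``a single perturbation step followed by a discounted $\tilde\pi$-trajectory,'' yet the density-ratio mechanism you invoke handles only one off-policy step. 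If instead you use the paper's form of \cref{lem:value_diff} (with $\mu^\pi$ and $Q^{\tilde\pi}$) for the inner unrolling, the downstream trajectory is on-policy for $\pi$ and the only off-policy ingredient is the action $a$ inside $\norm{\cdot}_\infty$, which the bound $\max_a f(a)\le \epsexpl^{-1}\E_{a\sim\pi_s}f(a)$ does convert to a single $1/\epsexpl$; but you are then left comparing the composed measure $s\sim\mu^\pi$, $a\sim\pi_s$, $s'\sim P_{s,a}$, $s''\sim\mu^\pi_{s'}$ to $\mu^\pi$, and this ``double-resolvent'' ratio is not bounded by an absolute constant. You have located the obstacle precisely but have not yet supplied an argument that controls it.
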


\begin{lemma}[\citealp{sherman2025convergence}]\label{lem:epsgreedy_vgd}
	Assume $\Pi$ is $(\cvgd, \epsvgd)$-VGD w.r.t.~$\cM$, and consider the $\epsexpl$-greedy exploratory version of $\Pi$, $\Pi^\epsexpl \eqq \cbr{(1-\epsexpl)\pi + \epsexpl u \mid \pi \in \Pi}$, where $u_{s,a} \equiv 1/A$. Then $\Pi^\epsexpl$ is $(\cvgd, \tildepsvgd)$-VGD with $\tildepsvgd \eqq \epsvgd + 12 \cvgd H^2 A \epsexpl $. 
\end{lemma}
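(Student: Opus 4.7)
The plan is to reduce the VGD condition on $\Pi^\epsexpl$ to the VGD condition on $\Pi$ by exploiting the bijective correspondence $\pi \leftrightarrow \pi^\epsexpl \eqq (1-\epsexpl)\pi + \epsexpl u$ between the two classes, paying only small perturbation terms whenever we switch between $\pi$ and $\pi^\epsexpl$ (both in the value and in the gradient). Fix any $\pi^\epsexpl\in \Pi^\epsexpl$ with underlying $\pi\in \Pi$, and let $\pi^\star\in \argmin_{\sigma \in \Pi} V(\sigma)$ and $\sigma^\star\in \argmin_{\sigma \in \Pi^\epsexpl} V(\sigma)$, writing $\sigma^\star = (1-\epsexpl)\pi_0 + \epsexpl u$ for $\pi_0\in \Pi$.

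For the value part, I would decompose
\begin{align*}
V(\pi^\epsexpl) - V(\sigma^\star)
= \underbrace{[V(\pi^\epsexpl) - V(\pi)]}_{(a)} + \underbrace{[V(\pi) - V(\pi^\star)]}_{(b)} + \underbrace{[V(\pi^\star) - V(\pi_0)]}_{(c)} + \underbrace{[V(\pi_0) - V(\sigma^\star)]}_{(d)} ,
\end{align*}
and bound $(a)$ and $(d)$ in absolute value by $O(H^2 \epsexpl)$ using \cref{lem:value_diff} together with $\norm{\pi_s - \pi^\epsexpl_s}_1 \leq 2\epsexpl$ and the uniform bound $\abs{Q} \leq H$; the term $(c)$ is nonpositive by optimality of $\pi^\star$; and $(b)$ will be handled by the $(\cvgd, \epsvgd)$-VGD assumption on $\Pi$, yielding $(b) \leq \cvgd \max_{\tilde \pi \in \Pi}\abr{\nabla V(\pi), \pi - \tilde \pi} + \epsvgd$.

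For the gradient part, I would translate the max over $\Pi$ to a max over $\Pi^\epsexpl$ via the bijection above: for each $\tilde \pi\in \Pi$, pairing with $\tilde \pi^\epsexpl = (1-\epsexpl)\tilde \pi + \epsexpl u$, one has $\pi^\epsexpl - \tilde \pi^\epsexpl = (1-\epsexpl)(\pi - \tilde \pi)$, hence
\begin{align*}
\abr{\nabla V(\pi), \pi - \tilde \pi}
= \tfrac{1}{1-\epsexpl}\abr{\nabla V(\pi^\epsexpl), \pi^\epsexpl - \tilde \pi^\epsexpl} + \abr{\nabla V(\pi) - \nabla V(\pi^\epsexpl), \pi - \tilde \pi}.
\end{align*}
Taking the max over $\tilde \pi\in\Pi$ (which corresponds bijectively to $\tilde \pi^\epsexpl\in \Pi^\epsexpl$), combining with the decomposition above, and absorbing the factor $(1-\epsexpl)^{-1} \leq 1 + 2\epsexpl$ into an error term via a crude $O(H A)$ bound on the gradient max, gives the desired $(\cvgd, \tildepsvgd)$ structure.

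The main obstacle is the perturbation term $\abs{\abr{\nabla V(\pi) - \nabla V(\pi^\epsexpl), \pi - \tilde \pi}}$. I would control it with the policy gradient theorem (\cref{lem:value_pg}) by splitting as $H\sum_{s,a}[(\mu^\pi(s) - \mu^{\pi^\epsexpl}(s))Q^\pi_{s,a} + \mu^{\pi^\epsexpl}(s)(Q^\pi_{s,a} - Q^{\pi^\epsexpl}_{s,a})](\pi_{s,a} - \tilde \pi_{s,a})$, and then invoking standard simulation-lemma-type bounds $\norm{Q^\pi - Q^{\pi^\epsexpl}}_\infty \lesssim H^2\epsexpl$ and $\norm{\mu^\pi - \mu^{\pi^\epsexpl}}_1 \lesssim H\epsexpl$, together with Holder-type bounds involving $\norm{\pi_s - \tilde \pi_s}$ in the appropriate action norm; this is where the $A$ factor will emerge (depending on the norm chosen in the Holder step) and yields the claimed error floor $\epsvgd + 12 \cvgd H^2 A \epsexpl$ after bookkeeping.
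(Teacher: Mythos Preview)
The paper does not actually prove this lemma; it is quoted from \cite{sherman2025convergence} and used as a black box, so there is no in-paper argument to compare against directly.

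That said, your outline is sound and would yield a valid VGD statement for $\Pi^\epsexpl$. The bijection $\pi\leftrightarrow\pi^\epsexpl$, the four-term value decomposition with $(c)\leq 0$, and the identity $\pi^\epsexpl-\tilde\pi^\epsexpl=(1-\epsexpl)(\pi-\tilde\pi)$ are exactly the right moves. Two remarks on the bookkeeping, though. First, your ``crude $O(HA)$ bound on the gradient max'' is looser than necessary: by \cref{lem:value_pg} and an $\ell_\infty/\ell_1$ H\"older step, $\abr{\nabla V(\pi^\epsexpl),\pi^\epsexpl-\tilde\pi^\epsexpl}\leq 2H^2$ with no $A$. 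Second, and more to the point, your gradient-perturbation split with the simulation-lemma bounds $\norm{\mu^\pi-\mu^{\pi^\epsexpl}}_1=O(H\epsexpl)$ and $\norm{Q^\pi-Q^{\pi^\epsexpl}}_\infty=O(H^2\epsexpl)$, combined again with $\ell_\infty/\ell_1$ H\"older over actions, produces an error of order $\cvgd H^3\epsexpl$ rather than $\cvgd H^2 A\epsexpl$; the $A$ factor you anticipate does not arise naturally here. So your approach proves a correct variant of the lemma (with error floor $\epsvgd+O(\cvgd H^3\epsexpl)$), but it will not reproduce the specific constant $12\cvgd H^2 A\epsexpl$ stated. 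Since the paper only \emph{uses} this lemma and the downstream applications always choose $\epsexpl$ to make this term lower order, the discrepancy is immaterial for the paper's results.
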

The next lemmas follow from standard arguments, for proofs refer to \cite{sherman2025convergence}.
\begin{lemma}\label{lem:weighted_norm_dual}
	For any strictly positive measure $\mu\in \R_{++}^S$,
	the dual norm of $\norm{\cdot}_{L^2(\mu),\circ}$ is given by
	\begin{align}
		\norm{z}_{L^2(\mu), \circ}^*
		&= \sqrt{\int \mu(s)^{-1}\br{\norm{z_s}_\circ^*}^2 {\rm d}s}
	\end{align}
\end{lemma}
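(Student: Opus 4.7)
\textbf{Proof proposal for Lemma \ref{lem:weighted_norm_dual}.}
My plan is a standard two-sided argument: upper bound $\abr{z,u}$ by applying duality of $\norm{\cdot}_\circ$ at each state and then Cauchy-Schwarz across states, and then exhibit a near-matching $u$ to show the bound is tight.

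For the upper bound, fix $z\in \R^{SA}$ and any $u$ with $\norm{u}_{L^2(\mu),\circ}\le 1$. By the definition of the dual norm on the action space, for every $s\in \cS$,
\begin{align*}
\abr{z_s,u_s} \le \norm{z_s}_\circ^*\,\norm{u_s}_\circ.
\end{align*}
Writing the global inner product as $\abr{z,u}=\sum_s \abr{z_s,u_s}$ and splitting the state-weight as $1 = \mu(s)^{-1/2}\cdot \mu(s)^{1/2}$, I would then apply Cauchy-Schwarz over $\cS$ to get
\begin{align*}
\abr{z,u} \le \sum_s \bigl(\mu(s)^{-1/2}\norm{z_s}_\circ^*\bigr)\bigl(\mu(s)^{1/2}\norm{u_s}_\circ\bigr)
\le \sqrt{\sum_s \mu(s)^{-1}(\norm{z_s}_\circ^*)^2}\cdot \sqrt{\sum_s \mu(s)\norm{u_s}_\circ^2}.
\end{align*}
The second factor is exactly $\norm{u}_{L^2(\mu),\circ}\le 1$, which proves the $\le$ direction.

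For the matching lower bound, I would construct an almost-optimal $u$ in two steps. First, for each $s$ pick (by definition of the dual norm, possibly with an $\epsilon$-slack if the supremum in the action-level dual is not attained) a vector $v_s\in \R^\cA$ with $\norm{v_s}_\circ = 1$ and $\abr{z_s,v_s}\ge \norm{z_s}_\circ^*$ (up to slack). Second, choose nonnegative scalars $\alpha_s$ to saturate the Cauchy-Schwarz step: set $\alpha_s \propto \mu(s)^{-1}\norm{z_s}_\circ^*$, normalized so that $\sum_s \mu(s)\alpha_s^2 = 1$, and take $u_s = \alpha_s v_s$. A direct calculation then gives $\norm{u}_{L^2(\mu),\circ} = 1$ and $\abr{z,u} = \sqrt{\sum_s \mu(s)^{-1}(\norm{z_s}_\circ^*)^2}$ (up to the slack, which can be taken arbitrarily small). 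Combined with the upper bound this yields the claimed identity.

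The only mild care needed is that $\mu$ is strictly positive (as assumed), so $\mu(s)^{-1}$ is well defined, and that the supremum in the definition of $\norm{z_s}_\circ^*$ may be attained by compactness when $\cA$ is finite — so in the finite-action setting (which is the regime of the paper) no $\epsilon$-slack is actually required. There is no real obstacle; the argument is essentially weighted Cauchy-Schwarz followed by a per-state application of action-space duality, with tightness witnessed by the explicit construction above.
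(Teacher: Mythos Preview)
Your argument is correct and is exactly the standard one: per-state action-norm duality followed by weighted Cauchy--Schwarz across states for the upper bound, and an explicit equality-achieving $u$ for the lower bound. The paper does not actually give a proof of this lemma---it defers to \cite{sherman2025convergence} with the remark that it ``follows from standard arguments''---and your proof is precisely such a standard argument, so there is nothing to compare.
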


\begin{lemma}\label{lem:wnorm_dual_mu}
	Let $\mu\in \Delta(\cS)$, and consider the state-action norm $\norm{\cdot}_{L^2(\mu), \circ}$. For any $W\in \R^{SA}$, we have
	\begin{align*}
		\norm{\mu \circ W}_{L^2(\mu), \circ}^*
		= \sqrt{\E_{s\sim \mu} \br{\norm{W_s}_\circ^*}^2}
	\end{align*}
\end{lemma}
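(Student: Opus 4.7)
The plan is that this identity follows almost directly from the preceding Lemma~\ref{lem:weighted_norm_dual} by a one-line computation using positive homogeneity of the action-space dual norm $\norm{\cdot}_\circ^*$. There is no real technical obstacle; the only mild subtlety is handling states with $\mu(s)=0$, which I address at the end.

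First, I would apply Lemma~\ref{lem:weighted_norm_dual} with the choice $z = \mu \circ W$, which immediately yields
\begin{align*}
    \norm{\mu \circ W}_{L^2(\mu), \circ}^*
    = \sqrt{\sum_{s} \mu(s)^{-1} \br{\norm{(\mu \circ W)_s}_\circ^*}^2}.
\end{align*}
By the definition of the state-to-state-action elementwise product, $(\mu \circ W)_s = \mu(s) W_s$, and since $\mu(s) \geq 0$, positive homogeneity of $\norm{\cdot}_\circ^*$ gives $\norm{(\mu \circ W)_s}_\circ^* = \mu(s)\,\norm{W_s}_\circ^*$.

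Substituting into the display above, the factor $\mu(s)^{-1} \cdot \mu(s)^2 = \mu(s)$ appears inside the square root, so
\begin{align*}
    \norm{\mu \circ W}_{L^2(\mu), \circ}^*
    = \sqrt{\sum_{s} \mu(s) \br{\norm{W_s}_\circ^*}^2}
    = \sqrt{\E_{s \sim \mu}\br{\norm{W_s}_\circ^*}^2},
\end{align*}
which is exactly the claimed identity.

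The only point requiring a brief remark is that Lemma~\ref{lem:weighted_norm_dual} is stated for strictly positive $\mu \in \R_{++}^S$, whereas here $\mu \in \Delta(\cS)$ may have zero entries. Any state $s$ with $\mu(s)=0$ satisfies $(\mu \circ W)_s = 0$, so such states contribute $0$ to the right-hand side of the target identity and also cannot affect the dual norm pairing $\abr{\mu\circ W, u}$ for any $u$ (the pairing only involves coordinates in $\mathrm{supp}(\mu)$). Restricting the analysis to $\mathrm{supp}(\mu)$ reduces us to the strictly positive case covered by Lemma~\ref{lem:weighted_norm_dual}, and the identity is preserved.
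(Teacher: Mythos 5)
Your proof is correct: the paper omits its own proof of this lemma (deferring to the cited reference with the remark that it ``follows from standard arguments''), and your derivation---applying \cref{lem:weighted_norm_dual} to $z=\mu\circ W$ and using positive homogeneity of $\norm{\cdot}_\circ^*$ so that $\mu(s)^{-1}\br{\mu(s)\norm{W_s}_\circ^*}^2=\mu(s)\br{\norm{W_s}_\circ^*}^2$---is exactly the intended standard argument. Your closing remark about restricting to $\mathrm{supp}(\mu)$ correctly handles the only degenerate case, since $\norm{\cdot}_{L^2(\mu),\circ}$ is merely a seminorm when $\mu$ has zero entries but $\mu\circ W$ vanishes on those coordinates.
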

\begin{lemma}\label{lem:value_local_lip}
	For any policy $\pi\in \Delta(\cA)^\cS$ and action norm $\norm{\cdot}_\circ\colon \R^\cA \to \R$, if $\max_{s\in \cS}\norm{Q^\pi_s}_\circ^*\leq G$, then it holds that:
	\begin{aligni*}
            \norm{\nabla V(\pi)}_{L^2(\mu^\pi), \circ}^*
            \leq H G.
        \end{aligni*}
\end{lemma}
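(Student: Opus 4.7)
The plan is to combine the policy gradient theorem with the explicit formula for the dual of the weighted state-action norm, and then take a supremum over the pointwise bound on the $Q$-function. The structure is entirely mechanical; there is no real obstacle beyond unpacking the definitions.

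First, I would invoke Lemma~\ref{lem:value_pg} to write the gradient as a state-action ``weighted'' $Q$-function:
\begin{equation*}
    \nabla V(\pi) \;=\; H \,\bigl(\mu^\pi \circ Q^\pi\bigr),
\end{equation*}
where $(\mu^\pi \circ Q^\pi)_{s,a} = \mu^\pi(s)\, Q^\pi_{s,a}$ and $H = 1/(1-\gamma)$. This reduces computing the dual norm of $\nabla V(\pi)$ to computing the dual norm of the state-action object $\mu^\pi \circ Q^\pi$, up to the scalar factor $H$.

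Next, I would apply Lemma~\ref{lem:wnorm_dual_mu} with $\mu = \mu^\pi$ and $W = Q^\pi$ to obtain the closed form
\begin{equation*}
    \norm{\mu^\pi \circ Q^\pi}_{L^2(\mu^\pi), \circ}^{*}
    \;=\;
    \sqrt{\E_{s\sim \mu^\pi}\,\bigl(\norm{Q^\pi_s}_\circ^{*}\bigr)^{2}}.
\end{equation*}
Using the hypothesis $\max_{s}\norm{Q^\pi_s}_\circ^{*} \le G$, the expectation is trivially bounded by $G^{2}$, so the square root is at most $G$. Multiplying by the prefactor $H$ yields the claim
\begin{equation*}
    \norm{\nabla V(\pi)}_{L^2(\mu^\pi), \circ}^{*}
    \;\le\; H\,G,
\end{equation*}
as required. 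The only ``content'' in the argument is recognizing that the $\mu^\pi$-weighting in the gradient formula is exactly the same $\mu^\pi$-weighting that defines the local norm, which is what allows Lemma~\ref{lem:wnorm_dual_mu} to collapse one factor of $\mu^\pi$ cleanly.
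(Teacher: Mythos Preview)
Your proof is correct and follows exactly the same approach as the paper: invoke the policy gradient theorem to write $\nabla V(\pi) = H(\mu^\pi \circ Q^\pi)$, apply Lemma~\ref{lem:wnorm_dual_mu} to evaluate the dual norm, and bound the resulting expectation using the pointwise hypothesis on $\norm{Q^\pi_s}_\circ^*$. The paper's proof is simply a more compressed version of the same three steps.
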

\begin{proof}
Observe, by \cref{lem:wnorm_dual_mu}:
	\begin{align*}
            \norm{\nabla V(\pi)}_{L^2(\mu^\pi), \circ}^*
            =
            H\norm{\mu^\pi \circ Q^\pi}_{L^2(\mu^\pi), \circ}^*
            =
            H\sqrt{\E_{s \sim \mu^\pi} \br{\norm{Q^\pi_s}_\circ^*}^2}
            \leq H G.
        \end{align*}
\end{proof}

\subsection{SDPO}
\begin{theorem*}[Restatement of \cref{thm:sdpo}]
    Let $\Pi$ be a convex policy class that satisfies $(\cvgd, \epsvgd)$-VGD w.r.t.~$\cM$.
    Suppose that SDPO (\cref{alg:sdpo}) is executed with the $L^1$ action norm $\norm{\cdot}_1$.
    Then, after $K$ iterations, with appropriately tuned $\eta$ and $\epsexpl$, the output of SDPO satisfies:
    \begin{align*}
        V(\hat \pi) - V^\star(\Pi)
        &= O\br{
        \frac{ \cvgd^2 A H^{4}}{ K^{2/3}}
        + \cvgd H^3 \sqrt A K^{1/6}\sqrt{\varepsilon}
        + \epsvgd}.
    \end{align*}
\end{theorem*}
\begin{proof}[Proof of \cref{thm:sdpo}]
	By the policy gradient theorem (\cref{lem:value_pg}), the update step in \cref{alg:sdpo} may be equivalently written as:
	\begin{align}\label{eq:sdpo_ideal}
        \pi^{k+1}
        \in  
        \argmineps_{\pi\in \Pi^\epsexpl}
            \abr{\nabla V(\pi^k), \pi} 
            + \frac1{2\eta} \norm{\pi - \pi^k}_{L^2(\mu^k),1}^2
   \end{align}
   We now verify a number of conditions that place us in the setup of \cref{assm:opt_setup}.
   \begin{itemize}
   	\item 	\textbf{Local smoothness.}
	By \cref{lem:value_local_smoothness} and the definition of $\Pi^\epsexpl$, the value function is $(2\sqrt A H^3/\sqrt{\epsexpl})$ locally smooth w.r.t.~the 
	local norm $\pi \mapsto \norm{\cdot}_{L^2(\mu^\pi),1}$. 
	\item 	\textbf{VGD condition for $\Pi^\epsexpl$.}
	By \cref{lem:epsgreedy_vgd}, we have that $\Pi^\epsexpl$ satisfies $(\cvgd, \tildepsvgd)$ with $\tildepsvgd \eqq \epsvgd + 12 \cvgd H^2 A \epsexpl$.
	\item \textbf{Local Lipschitz property.} By \cref{lem:value_local_lip}, the value function is $H^2$-local Lipschitz w.r.t.~the 
	local norm $\pi \mapsto \norm{\cdot}_{L^2(\mu^\pi),1}$. 
	\item \textbf{Diameter bound.} We have that $\norm{\pi' - \tilde \pi}_{L^2(\mu^\pi),1} \leq \max_{p, q\in \Delta(\cA)}\norm{p-q}_1\leq 2$, for all $\pi, \pi', \tilde\pi\in \Delta(\cA)^\cS$.
   \end{itemize}
   The above imply we are in the setting of \cref{thm:csd} 
   with $\beta=2\sqrt A H^3/\sqrt{\epsexpl}$, $M=H^2, D=2$.
   Thus, setting $\eta=\sqrt{\epsexpl}/(2H^3 \sqrt A)$, 
	ensures that after $K$ iterations of \cref{eq:sdpo_ideal}, it is guaranteed that:
	\begin{align*}
        V(\pi^{K+1}) - V^\star(\Pi)
        &\lesssim
        \frac{\br{ \cvgd H^2 }^{2}}{\eta K}
        + \frac{  \cvgd H^2 }{\sqrt \eta}\sqrt{\epserr}
        + \tildepsvgd
        \\
        &\lesssim
        \frac{ \cvgd^2 \sqrt A H^{7}}{\sqrt{\epsexpl} K}
        + \frac{\cvgd H^5 A^{1/4}}{\epsexpl^{1/4}}\sqrt{\epserr}
        + \nu A H^2 \epsexpl
        +\epsvgd,
    \end{align*}
	where $\lesssim$ hides only universal constant factors.
	Now set $\epsexpl = \frac{ H^{2}}{ K^{2/3}}$, then
	\begin{align*}
	        V(\pi^{K+1}) - V^\star(\Pi)
	        &\lesssim
	        \frac{ \cvgd^2 A H^{6}}{ K^{2/3}}
	        + \frac{\cvgd H^5 A^{1/4}}{\epsexpl^{1/4}}\sqrt{\epserr}
	        +\epsvgd
	        \\
	        &\leq
	        \frac{ \cvgd^2 A H^{6}}{ K^{2/3}}
	        + \cvgd H^5 \sqrt A K^{1/6}\sqrt{\epserr}
	        +\epsvgd,
	\end{align*}
	as required.
\end{proof}

\subsection{CPI and DA-CPI}
\label{sec:proofs_cpi_dacpi}
In this section, we provide the analysis for CPI and DA-CPI. For CPI, we can make an argument using a non-local norm owed to the usage of the exact convex combination policy obtained in the second step if each iteration in \cref{alg:cpi}. Our first lemma below establishes smoothnes of the value function w.r.t.~the global $\norm{\cdot}_{\infty, 1}$ norm.

\begin{lemma}\label{lem:value_smoothness_infty_1}
    The value function is $(2 H^3)$-smooth w.r.t. the $\norm{\cdot}_{\infty, 1}$ norm; for any $\pi, \tilde \pi \in \cS \to \Delta(\cA)$, we have:
    \begin{align*}
	\av{V(\tilde \pi) - V(\pi) - \abr{\nabla V(\pi), \tilde \pi - \pi}}
	&\leq 
        \frac{2H^3}{2} \norm{\tilde \pi - \pi}_{\infty, 1}^2.
    \end{align*}
\end{lemma}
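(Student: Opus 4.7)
The plan is to expand the smoothness gap using the value difference lemma (\cref{lem:value_diff}) together with the policy gradient identity (\cref{lem:value_pg}), and to convert the resulting expression into a difference of Q-functions paired against the policy difference. Concretely, combining the two lemmas gives
\begin{align*}
    V(\tilde\pi) - V(\pi) - \abr{\nabla V(\pi), \tilde\pi - \pi}
    = H\,\E_{s \sim \mu^\pi}\abr{Q^{\tilde\pi}_s - Q^\pi_s,\, \tilde\pi_s - \pi_s}.
\end{align*}
Applying Hölder's inequality with the dual pair $(\norm{\cdot}_\infty, \norm{\cdot}_1)$ at each state bounds the absolute value of this quantity by $H \cdot \norm{Q^{\tilde\pi} - Q^\pi}_{\infty,\infty}\cdot \norm{\tilde\pi - \pi}_{\infty,1}$, so the task reduces to controlling the Q-function perturbation in sup norm.

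For the Q-function gap, I would use the standard one-step expansion $Q^{\tilde\pi}_{s,a} - Q^\pi_{s,a} = \gamma\,\E_{s'\sim \P_{s,a}}[V_{s'}(\tilde\pi) - V_{s'}(\pi)]$, and then apply \cref{lem:value_diff} once more (starting from any fixed state $s'$, with $\rho=\delta_{s'}$) to obtain
\begin{align*}
    |V_{s'}(\tilde\pi) - V_{s'}(\pi)| \leq H \max_{s''} \abs{\abr{Q^{\tilde\pi}_{s''}, \tilde\pi_{s''} - \pi_{s''}}} \leq H\cdot H\cdot \norm{\tilde\pi - \pi}_{\infty,1},
\end{align*}
where the last step uses $\norm{Q^{\tilde\pi}_{s''}}_\infty \leq H$ together with Hölder. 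Hence $\norm{Q^{\tilde\pi} - Q^\pi}_{\infty,\infty} \leq \gamma H^2 \norm{\tilde\pi - \pi}_{\infty,1} \leq H^2\norm{\tilde\pi - \pi}_{\infty,1}$. Plugging this back into the display from the previous paragraph yields the bound $H^3\norm{\tilde\pi - \pi}_{\infty,1}^2 = \tfrac{2H^3}{2}\norm{\tilde\pi - \pi}_{\infty,1}^2$, as required.

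I do not anticipate any genuine obstacle here: the proof is essentially a two-fold invocation of the value difference lemma (once to realize the smoothness gap as a Q-function discrepancy weighted by the policy difference, and once to control that discrepancy itself), glued together by the state-wise Hölder inequality. The only mild care needed is in tracking which occupancy distribution is being used at each step and in confirming that the boundedness of $Q^{\tilde\pi}$ by $H$ gives the right constant; both are routine given the $[0,1]$ reward range and $H\eqq 1/(1-\gamma)$.
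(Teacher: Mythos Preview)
Your proposal is correct and follows essentially the same route as the paper: both expand the smoothness gap via \cref{lem:value_diff} and \cref{lem:value_pg} into $H\,\E_{s\sim\mu^\pi}\abr{Q^{\tilde\pi}_s - Q^\pi_s, \tilde\pi_s - \pi_s}$, apply Hölder state-wise, and then bound $\norm{Q^{\tilde\pi}_s - Q^\pi_s}_\infty$ by a second invocation of the value difference lemma together with $\norm{Q^{\tilde\pi}_{s''}}_\infty \leq H$. The only cosmetic difference is that the paper tracks the mixed occupancy $\mu^\pi_{\P_{s,a}}$ explicitly before passing to the sup, whereas you pass to the sup over $s''$ immediately; the resulting constant $\gamma H^3 \leq H^3$ is identical.
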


\begin{proof}
    We have by value difference \cref{lem:value_diff} and the policy gradient theorem \cref{lem:value_pg}:
    \begin{align}
	\av{V(\tilde \pi) - V(\pi) - \abr{\nabla V(\pi), \tilde \pi - \pi}}
	&= \av{H\E_{s \sim \mu^\pi}\abr{Q^{\tilde \pi}_s, \tilde \pi_s - \pi_s}
	- H\E_{s \sim \mu^\pi}\abr{Q^{\pi}_s, \tilde \pi_s - \pi_s}}
	\nonumber \\
        &= \av{H\E_{s \sim \mu^\pi}\abr{Q^{\tilde \pi}_s - Q^\pi_s, \tilde \pi_s - \pi_s}}
	\nonumber \\
	&\leq
		H\E_{s \sim \mu^\pi}\sbr{\av{\abr{{Q^{\tilde \pi}_s - Q^\pi_s}, 
		\tilde \pi_s - \pi_s}}}
	\nonumber \\
	&\leq
		H\E_{s \sim \mu^\pi}\sbr{
			\norm{Q^{\tilde \pi}_s - Q^\pi_s}_\infty
		\norm{\tilde \pi_s - \pi_s}_1}
        \label{eq:lem_smooth_1}.
    \end{align}
    Further, again by value difference \cref{lem:value_diff}, for any $s, a \in \cS \times \cA$:
    \begin{align*}
	Q_{s, a}^{\tilde \pi} - Q_{s, a}^\pi
	&= \gamma \E_{s' \sim \P_{s, a}}\sbr{V_{s'}(\tilde \pi) - V_{s'}(\pi)}
	\\
	&= \gamma H\E_{s' \sim \P_{s, a}}
            \sbr{\E_{s''\sim \mu^\pi_{s'}}\abr{Q^{\tilde\pi}_{s''}, \tilde \pi_{s''} - \pi_{s''}}}
	\\
	&= \gamma H\sum_{s'} \P(s' |s, a)
		\sum_{s''} \mu^\pi_{s'}(s'')
	\abr{Q^{\tilde\pi}_{s''}, \tilde \pi_{s''} - \pi_{s''}}
	\\
	&= \gamma H\sum_{s''} \sum_{s'} \P(s' |s, a)
            \mu^\pi_{s'}(s'')
	\abr{Q^{\tilde\pi}_{s''}, \tilde \pi_{s''} - \pi_{s''}}
	\\
	&= \gamma H\sum_{s''} \mu^\pi_{\P_{s, a}}(s'')
	\abr{Q^{\tilde\pi}_{s''}, \tilde \pi_{s''} - \pi_{s''}}
        \\
	&= \gamma H\E_{s'' \sim \mu^\pi_{\P_{s, a}}} 
	\abr{Q^{\tilde\pi}_{s''}, \tilde \pi_{s''} - \pi_{s''}}.
    \end{align*}
    This implies that for any $s$,
    \begin{align*}
        \norm{Q^{\tilde \pi}_s - Q^\pi_s}_\infty
        &=
        \gamma H\max_a\av{
            \E_{s' \sim \mu^\pi_{\P_{s, a}}} 
    	\abr{Q^{\tilde\pi}_{s'}, \tilde \pi_{s'} - \pi_{s'}}
        }
        \\
        &\leq
        \gamma H^2 \max_a
            \E_{s' \sim \mu^\pi_{\P_{s, a}}} 
    	\norm{\tilde \pi_{s'} - \pi_{s'}}_1
        \\
        &\leq
        \gamma H^2 \norm{\tilde \pi - \pi}_{\infty,1}
    \end{align*}
    Plugging the above back into \cref{eq:lem_smooth_1}, we obtain
    \begin{align*}
	\av{V^{\tilde \pi} - V^{\pi} - \abr{\nabla V^\pi, \tilde \pi - \pi}}
	&\leq
		\gamma H^3 \norm{\tilde \pi - \pi}_{\infty,1} \E_{s \sim \mu^\pi}
		\norm{\tilde \pi_s - \pi_s}_1
        \\
        &\leq 
        \gamma H^3 \norm{\tilde \pi - \pi}_{\infty, 1}^2,
    \end{align*}
    which completes the proof up to a trivial computation.
\end{proof}
We are now ready to prove the guarantee for CPI (\cref{alg:cpi}), by means of reducing it to a non-Euclidean instance of FW \cref{alg:fw}.
\begin{theorem*}[Restatement of \cref{thm:cpi}]
	Let $\Pi$ be a policy class that satisfies $(\cvgd, \epsvgd)$-VGD w.r.t.~$\cM$.
    Suppose that CPI (\cref{alg:cpi}) is executed with the step size choices $\eta_k=\frac{2\cvgd}{k+2}$ for $k=1, \ldots, K$. Then, we have the guarantee that:
    \begin{align*}
    	V(\pi^K) - V^\star(\Pi)
    	\leq \frac{8(2\cvgd^2 + 1)H^3}{K} + 2\cvgd\varepsilon
        + \epsvgd
    \end{align*}
\end{theorem*}
\begin{proof}[Proof of \cref{thm:cpi}]
	By the policy gradient theorem (\cref{lem:value_pg}), the update step in CPI (\cref{alg:cpi}) may be equivalently written as:
	\begin{align*}
        \pi^{k+1}
        \in  
        \argmineps_{\pi\in \Pi}
            \abr{\nabla V(\pi^k), \pi - \pi^k}.
   \end{align*}
   We now verify conditions that place us in the setting of \cref{thm:fw}.
   \begin{itemize}
   	\item 	\textbf{Global smoothness.}
	By \cref{lem:value_smoothness_infty_1}, the value function is $(2H^3)$-smooth w.r.t.~the 
	$\norm{\cdot}_{\infty,1}$ norm. 
	\item 	\textbf{VGD condition.} By assumption, 
		$\Pi$ satisfies $(\cvgd,\epsvgd)$ -VGD.
	\item \textbf{Diameter bound.} We have that $\norm{\pi - \tilde \pi}_{\infty,1} \leq \max_{p, q\in \Delta(\cA)}\norm{p-q}_1\leq 2$ for all $\pi, \tilde \pi\in \Delta(\cA)^\cS$.
   \end{itemize}
   The above imply we are in the setting of \cref{thm:fw} 
   with $\beta=2 H^3$ and $D=2$.
   Thus, our step size choice ensures that after $K$ iterations, it is guaranteed that:
   \begin{align*}
        V(\pi^{K+1}) - V^\star(\Pi)
        &\leq
        \frac{H + 16 \cvgd^2 H^3}{ K}
        + 2\cvgd\epserr
        +\epsvgd
        \leq
        \frac{(16 \cvgd^2 + 1) H^3}{ K}
        + 2\cvgd\epserr
        +\epsvgd,
    \end{align*}
	as required.
\end{proof}

Next, we provide the proof for the guarantees of DA-CPI, which relies on the use of local norms and is actor-oracle efficient.
\begin{theorem*}[Restatement of \cref{thm:da_cpi}]
    Let $\Pi$ be a convex policy class that satisfies $(\cvgd, \epsvgd)$-VGD w.r.t.~$\cM$.
    Suppose that DA-CPI (\cref{alg:da_cpi}) is executed with the $L^1$ action norm $\norm{\cdot}_1$.
    Then, for an appropriate setting of $\eta_1, \ldots, \eta_K$ and $\epsexpl$, we have that the DA-CPI output satisfies:
    \begin{align*}
        V(\hat \pi) - V^\star(\Pi)
        &= O\br{\cvgd^2 AH^3\br{
	   \frac{1}{K^{2/3}} 
            + \varepsilon^{1/3}
    	+ \varepsilon^{2/3}  K^{2/3}}
        }
    .
    \end{align*}
\end{theorem*}
\begin{proof}[Proof of \cref{thm:da_cpi}]
	By the policy gradient theorem (\cref{lem:value_pg}), the first update step in \cref{alg:da_cpi} may be equivalently written as:
	\begin{align*}
        \pi^{k+1}
        \in  
        \argmineps_{\pi\in \Pi^\epsexpl}
            \abr{\nabla V(\pi^k), \pi - \pi^k}.
   \end{align*}
   We now verify a number of conditions that place us in the setup of \cref{assm:opt_setup}.
   \begin{itemize}
   	\item 	\textbf{Local smoothness.}
	By \cref{lem:value_local_smoothness} and the definition of $\Pi^\epsexpl$, the value function is $(2\sqrt A H^3/\sqrt{\epsexpl})$ locally smooth w.r.t.~the 
	local norm $\pi \mapsto \norm{\cdot}_{L^2(\mu^\pi),1}$. 
	\item 	\textbf{VGD condition for $\Pi^\epsexpl$.}
	By \cref{lem:epsgreedy_vgd}, we have that $\Pi^\epsexpl$ satisfies $(\cvgd, \tildepsvgd)$ with $\tildepsvgd \eqq \epsvgd + 12 \cvgd H^2 A \epsexpl$.
	\item \textbf{Local Lipschitz property.} By \cref{lem:value_local_lip}, the value function is $H^2$-local Lipschitz w.r.t.~the 
	local norm $\pi \mapsto \norm{\cdot}_{L^2(\mu^\pi),1}$. 
	\item \textbf{Diameter bound.} We have that $\norm{\pi' - \tilde \pi}_{L^2(\mu^\pi),1} \leq \max_{p, q\in \Delta(\cA)}\norm{p-q}_1\leq 2$, for all $\pi, \pi', \tilde\pi\in \Delta(\cA)^\cS$.
   \end{itemize}
   The above imply we are in the setting of \cref{thm:da_fw} 
   with $\beta=2\sqrt A H^3/\sqrt{\epsexpl}$, $M=H^2, D=2$, and $\epsilon=\tilde \epsilon=\epserr$.
   Thus, with step sizes set according to the statement of \cref{thm:da_fw},
	we have that after $K$ iterations:
	\begin{align*}
        V(\pi^{K+1}) - V^\star(\Pi)
        &\lesssim
        \frac{\cvgd^2 \sqrt A H^3 }{\sqrt{\epsexpl}K}
        + \frac{\cvgd\sqrt A H^3}{\sqrt{\epsexpl}}
        \sqrt{\epserr} 
        + \frac{\sqrt A H^3}{\sqrt{\epsexpl}}\epserr K 
        + \cvgd A H^2 \epsexpl
        + \epsvgd
        \\
        &\leq
        \frac{\cvgd^2 \sqrt A H^3 }{\sqrt{\epsexpl}}
        \br{\frac1K + \sqrt{\epserr} + \epserr K}
        + \cvgd A H^2 \epsexpl
        + \epsvgd
    \end{align*}
	where $\lesssim$ hides only universal constant factors.
	Now, set $\epsexpl=\br{\frac{1}{K} + \sqrt{\epserr}+ \epserr  K}^{2/3}$, and use the fact that $(a + b)^{2/3} \leq a^{2/3} + b^{2/3}$
to immediately obtain the stated bound.
\end{proof}

\subsection{PMD}
\label{sec:proofs_pmd}
The PMD method \citep{tomar2020mirror,xiao2022convergence,lan2023policy} make use of an action regularizer, and more specifically the Bregman divergence w.r.t.~the chosen regularizer, which we define below.
\begin{definition}[Bregman divergence]\label{def:bregman_divergenvce}
    Given a convex differentiable regularizer $R \colon \R^\cA \to \R$, the Bregman divergence w.r.t.~$R$ is:
    \begin{align*}
        B_R(u, v) \eqq R(u) - R(v) - \abr{\nabla R(v), u -v}.
    \end{align*}
\end{definition}
We will make use of the following elementary lemma, which follows from standard arguments; for proof see \citet{sherman2025convergence}.
\begin{lemma}\label{lem:reg_transform}
	Assume $h\colon \R^A \to \R$ is $1$-strongly convex and has $L$-Lipschitz gradient w.r.t.~$\norm{\cdot}$.
	Let  $\mu\in \Delta(\cS)$, and define $R_\mu(\pi) \eqq \E_{s\sim \mu}[h(\pi_s)]$.
	Then
	\begin{enumerate}
		\item $B_{R_\mu}(\pi, \tilde \pi) = \E_{s \sim \mu} B_R(\pi_s, \tilde \pi_s)$.
		\item $R_\mu$ is $1$-strongly convex and has an $L$-Lipschitz gradient w.r.t. $\norm{\cdot}_{L^2(\mu), \circ}$. 
	\end{enumerate}

\end{lemma}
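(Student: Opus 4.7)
The plan is to establish both claims by direct computation, leveraging the separable structure of $R_\mu$ across states. The key observation is that since $R_\mu(\pi) = \sum_s \mu(s) h(\pi_s)$, its gradient acts state-wise and is reweighted by $\mu(s)$; all per-state facts about $h$ then lift to analogous facts about $R_\mu$ after taking expectation under $\mu$.

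For the first claim, I would start by computing $\nabla R_\mu(\pi)$ explicitly: viewing $\pi\in \R^{SA}$, we have $(\nabla R_\mu(\pi))_{s,a} = \mu(s)\,(\nabla h(\pi_s))_a$. Plugging this into the definition of the Bregman divergence yields
\begin{align*}
B_{R_\mu}(\pi, \tilde\pi)
&= R_\mu(\pi) - R_\mu(\tilde\pi) - \abr{\nabla R_\mu(\tilde\pi), \pi - \tilde\pi} \\
&= \sum_{s} \mu(s)\Bigl[h(\pi_s) - h(\tilde\pi_s) - \abr{\nabla h(\tilde\pi_s), \pi_s - \tilde\pi_s}\Bigr]
= \E_{s \sim \mu} B_h(\pi_s, \tilde\pi_s),
\end{align*}
which is exactly (1) (reading the ``$B_R$'' in the lemma statement as $B_h$).

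For the second claim, I would combine (1) with the pointwise assumptions on $h$ and the definition $\norm{u}_{L^2(\mu),\circ}^2 = \E_{s\sim \mu}\norm{u_s}_\circ^2$. By $1$-strong convexity of $h$ w.r.t.\ $\norm{\cdot}$, we have $B_h(\pi_s, \tilde\pi_s) \geq \tfrac12\norm{\pi_s - \tilde\pi_s}_\circ^2$ for every $s$, so taking expectation under $\mu$ gives $B_{R_\mu}(\pi,\tilde\pi) \geq \tfrac12\norm{\pi - \tilde\pi}_{L^2(\mu),\circ}^2$, establishing strong convexity of $R_\mu$. The $L$-Lipschitz gradient property is equivalent (for a convex differentiable function) to $B_h(\pi_s,\tilde\pi_s) \leq \tfrac{L}{2}\norm{\pi_s - \tilde\pi_s}_\circ^2$, and the identical expectation argument yields $B_{R_\mu}(\pi,\tilde\pi) \leq \tfrac{L}{2}\norm{\pi - \tilde\pi}_{L^2(\mu),\circ}^2$, which is the desired smoothness.

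There is no substantial obstacle in this argument; the only mild subtlety is making sure the two characterizations of $L$-Lipschitz gradient that I use (the gradient bound and the quadratic upper bound on $B_h$) are indeed equivalent under convexity, which is standard. The entire proof is a one-page computation once the formula for $\nabla R_\mu$ and the definition of $\norm{\cdot}_{L^2(\mu),\circ}$ are in hand.
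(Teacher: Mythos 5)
Your proof is correct. The paper itself does not spell out an argument for this lemma (it defers to the cited reference as ``standard arguments''), and what you write is exactly that standard argument: the separable form $(\nabla R_\mu(\pi))_{s,a}=\mu(s)(\nabla h(\pi_s))_a$ makes part (1) an immediate computation, and part (2) follows by lifting the per-state Bregman bounds $\tfrac12\norm{u-v}_\circ^2\le B_h(u,v)\le \tfrac L2\norm{u-v}_\circ^2$ through the expectation, using $\norm{u}_{L^2(\mu),\circ}^2=\E_{s\sim\mu}\norm{u_s}_\circ^2$.

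One remark on the step you flag as a ``mild subtlety'': the implication from the quadratic upper bound $B_{R_\mu}(\pi,\tilde\pi)\le\tfrac L2\norm{\pi-\tilde\pi}_{L^2(\mu),\circ}^2$ back to Lipschitz continuity of $\nabla R_\mu$ in the dual norm does hold for general norms, but only under convexity of $R_\mu$ (which you have, as a nonnegative combination of convex functions), and it is the one place where your route leans on a nontrivial equivalence. An alternative that avoids it entirely is to bound the dual norm directly: by \cref{lem:weighted_norm_dual}, $\norm{\nabla R_\mu(\pi)-\nabla R_\mu(\tilde\pi)}_{L^2(\mu),\circ}^*=\smash{\sqrt{\sum_s\mu(s)^{-1}\mu(s)^2\br{\norm{\nabla h(\pi_s)-\nabla h(\tilde\pi_s)}_\circ^*}^2}}\le L\sqrt{\sum_s\mu(s)\norm{\pi_s-\tilde\pi_s}_\circ^2}=L\norm{\pi-\tilde\pi}_{L^2(\mu),\circ}$, using the $\mu(s)$ reweighting of the gradient and the per-state Lipschitz assumption. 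Either way the conclusion is the same; your version is complete as written.
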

Below we restate and prove the guarantee for the PMD method detailed in \cref{alg:pmd}.

\begin{algorithm}[tb]
   \caption{Policy Mirror Descent (PMD)}
   \label{alg:pmd}
\begin{algorithmic}
   \STATE {\bfseries Input:} 
   $K\geq 1, \eta > 0, \varepsilon > 0, \epsexpl >0, \Pi\in \Delta(\cA)^\cS$, and action regularizer $R\colon\R^\cA \to \R$.
   \STATE Initialize $\pi^1 \in \Pi^\epsexpl$
   \FOR{$k=1$ {\bfseries to} $K$}
   \STATE Update
   \begin{aligni*}
        \pi^{k+1}
        \gets 
        \argmineps_{\pi\in \Pi^\epsexpl}
        \E_{s\sim \mu^k} \sbr{
            H\abr{Q^{k}_{s}, \pi_s} 
            + \frac1{\eta} B_R\br{\pi_s, \pi_s^k}
        }
   \end{aligni*}
   \ENDFOR
   \RETURN $\hat \pi \eqq \pi^{K+1}$
\end{algorithmic}
\end{algorithm}

\begin{theorem*}[Restatement of \cref{thm:pmd}]
    Let $\Pi$ be a convex policy class that satisfies $(\cvgd, \epsvgd)$-VGD w.r.t.~$\cM$.
    Suppose that PMD (see \cref{alg:pmd}) is 
    executed with the $L^2$ action regularizer.
    Then, with an appropriate tuning of  $\eta, \epsexpl$, we have that the output of PMD satisfies:
    \begin{align*}
        V(\hat \pi) - V^\star(\Pi)
        &= O\br{
            \frac{\cvgd^2 A^{3/2} H^3 }{ K^{2/3}}
            + \br{\cvgd + H^2 A K^{1/6}}\varepsilon^{1/4}
            }.
    \end{align*}
\end{theorem*}
\begin{proof}[Proof of \cref{thm:pmd}]
	We now verify a number of conditions that place us in the setup of \cref{assm:opt_setup}.
   \begin{itemize}
   	\item 	\textbf{Local smoothness.}
	By \cref{lem:value_local_smoothness} and the definition of $\Pi^\epsexpl$, the value function is $(2 A^{3/2} H^3/\sqrt{\epsexpl})$ locally smooth w.r.t.~the 
	local norm $\pi \mapsto \norm{\cdot}_{L^2(\mu^\pi),2}$. 
	\item 	\textbf{VGD condition for $\Pi^\epsexpl$.}
	By \cref{lem:epsgreedy_vgd}, we have that $\Pi^\epsexpl$ satisfies $(\cvgd, \tildepsvgd)$ with $\tildepsvgd \eqq \epsvgd + 12 \cvgd H^2 A \epsexpl$.
	\item \textbf{Local Lipschitz property.} By \cref{lem:value_local_lip} and the fact that $\norm{Q^\pi_s}_2\leq \sqrt A H$ for all $\pi, s$, the value function is $(\sqrt A H^2)$-local Lipschitz w.r.t.~the 
	local norm $\pi \mapsto \norm{\cdot}_{L^2(\mu^\pi),2}$. 
	\item \textbf{Diameter bound.} We have that $\norm{\pi' - \tilde \pi}_{L^2(\mu^\pi),2} \leq \max_{p, q\in \Delta(\cA)}\norm{p-q}_2\leq 2$, for all $\pi, \pi', \tilde\pi\in \Delta(\cA)^\cS$.
	\item \textbf{Regularizer smoothness.} The euclidean action norm $R(p)=\frac12\norm{p}_2^2$ is $1$-smooth.
   \end{itemize}
   The above imply we are in the setting of \cref{thm:breg_prox} 
   with $\beta=2A^{3/2} H^3/\sqrt{\epsexpl}$, $M=\sqrt A H^2, D=2, L=1$. 
   Thus, setting $\eta=\sqrt{\epsexpl}/(2H^3 A^{3/2})$, we have $c_1=D + \eta M=O(1)$, and the guarantee that ($\lesssim$ suppresses constant numerical factors):
	\begin{align*}
		V(\pi^{K+1}) - V^\star(\Pi)
		&\lesssim  
		\frac{\cvgd^2 }{\eta K}
        + \cvgd \sqrt{ \epserr }
        + \frac{ \epserr^{1/4}}{\sqrt \eta }
            + \epsvgd
        \\
        &\lesssim 
        \frac{\cvgd^2 A^{3/2} H^3 }{\sqrt\epsexpl K}
            + \cvgd \sqrt{ \epserr }
            + \frac{\sqrt {H^3 A^{3/2}}}{\epsexpl^{1/4}}\epserr^{1/4}
            + \nu A H^2 \epsexpl
            + \epsvgd
        \\
        &\lesssim 
        \frac{\cvgd^2 A^{3/2} H^3 }{\sqrt\epsexpl K}
            + \epserr^{1/4}\br{\cvgd
            + \frac{H^2 A}{\epsexpl^{1/4}}}
            + \nu A H^2 \epsexpl
            + \epsvgd.
	\end{align*}
	Choosing $\epsexpl = K^{-2/3}$, we immediately obtain the stated bound.
\end{proof}

\section{Sample complexity upper bounds}
\label{sec:sc}
In this section, we demonstrate how our iteration complexity upper bounds may be translated to sample complexity upper bounds.
The sampling scheme \cref{alg:sampler} we employ to estimate the full gradient step is based on importance sampling and in itself is fairly standard. A similar algorithm can be found in e.g., \cite{agarwal2021theory}. Throughout this section we adopt the assumption that $\gamma\leq 1/2$ in sake of simplified presentation. In terms of the effective horizon this implies $H\geq 2$, which is the interesting regime.
\begin{algorithm}[tb]
   \caption{Action-value estimation}
   \label{alg:sampler}
\begin{algorithmic}
   \STATE {\bfseries Input: $\pi$} 
   \STATE Begin rollout at $s_0 \sim \rho_0$
   \STATE For each timestep $t=0,\ldots$, act $a_t \sim \pi_{s_t}$, and
   $\begin{cases}
       \text{continue }
       \quad &\text{w.p.~}\gamma
       \\
       \text{accept } s_t
       \quad &\text{w.p.~}1-\gamma
   \end{cases}$
   \STATE After accepting $s_t$, sample $a_t\sim \Unif(\cA)$ and continue the rollout, terminating at each step w.p.~$1-\gamma$.
   \STATE Assume the rollout terminated at iteration $T$. Define $\widehat Q^\pi_{s_t}\in \R^\cA$ by
   \begin{align*}
    \forall a\in \cA: \;
    \widehat Q^\pi_{s_t, a} = 
    \I\cbr{a=a_t}A\sum_{t'=t}^T r(s_{t'}, a_{t'}).
    \end{align*}
   \RETURN $s_t, \widehat Q^\pi_{s_t}$
\end{algorithmic}
\end{algorithm}
The first lemma given below, provides the connection between optimizing the empirical and population objectives. 
\begin{lemma}\label{lem:update_steps_ge}
    Let $\widetilde \Pi$ be a policy class and suppose $\gamma\leq 1/2$.
    Assume $\Div\colon \R^\cA \times \R^\cA \to \R_+$ is a non-negative function that satisfies:
    \begin{itemize}
    	\item Boundedness: $D \geq \Div(p, q)$ for all $p,q\in \Delta(\cA)$.
    	\item Lipschitz continuity w.r.t.~the $1$-norm:
    $\av{\Div(p, p_0) - \Div(q, p_0)}\leq L\norm{p-q}_1$ for all $p, q, p_0\in \Delta(\cA)$.
    \end{itemize}
    Let $\pi^1\in \widetilde \Pi$, suppose $\pi^{k+1}\in \widetilde \Pi$ satisfy for all $k\in [K]$, for a given learning rate $0<\eta\leq 1$,
    \begin{align}\label{eq:concentration_hat_phi_k}
        \pi^{k+1}
        \in 
        \argmineps[\epserm]_{\pi\in \widetilde \Pi}\cbr{
        \widehat \Phi_k(\pi) \eqq
        \frac1N \sum_{s \in \cD_k} {
            \abr{H \widehat Q^{k}_{s}, \pi_s} 
            + \frac1{\eta} \Div\br{\pi_s, \pi_s^k}
        }},
    \end{align}
    where $\cD_k$ are a (state, action-value) datasets of size $N$ obtained by invoking \cref{alg:sampler}.
    Then,
    for any $\delta>0$, 
    w.p.~$\geq 1-\delta$, it holds that for all $k\in [K]$,
    \begin{align}\label{eq:concentration_phi_k}
        \pi^{k+1}
        \in 
        \argmineps_{\pi\in \widetilde \Pi}\cbr{
        \Phi_k(\pi) \eqq
        \E_{s\sim \mu^k}\sbr{
        \abr{H Q^k_s, \pi_s}+ \frac1{\eta} \Div\br{\pi_s, \pi_s^k}
        }
        },
    \end{align}
    where $\epserr = \epserm + \epsgen$,
    \begin{aligni*}
        \epsgen \eqq \frac{\cgen A H^2 D}{\eta}\sqrt{
            \frac{\log\frac{KN\cN(\epsnet, \widetilde \Pi, \norm{\cdot}_{\infty, 1})}{\delta}}{ N}
        },
    \end{aligni*}
    $\cgen>0$ is an absolute numerical constant, 
    and 
    $\epsnet \geq 
    \frac{\cgen AH^2D}{6 \sqrt{ N} (A H^2\log(2 KN/\delta)  + L)}$. Furthermore, the number of time steps of each episode rolled out by \cref{alg:sampler} is $\leq 2 H \log\br{2 K N/\delta} $.
\end{lemma}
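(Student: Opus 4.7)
The plan is to deduce the uniform concentration bound $\sup_{\pi\in \widetilde \Pi}|\widehat\Phi_k(\pi) - \Phi_k(\pi)| \leq \epsgen/2$ holding simultaneously across all $k\in[K]$ with probability $\geq 1-\delta$; once this is in place, every $\epserm$-minimizer of $\widehat\Phi_k$ is automatically an $(\epserm + \epsgen)$-minimizer of $\Phi_k$, giving \cref{eq:concentration_phi_k}.

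First, I would verify two sampling facts about \cref{alg:sampler}: (i) the accepted state $s_t$ is distributed as $\mu^k$, which follows from the standard equivalence between the discount factor $\gamma$ and a geometric termination probability $1-\gamma$; and (ii) conditional on $s_t$, uniform action sampling with importance weight $A$ together with the second geometric termination renders $\widehat Q^k_{s_t}$ an unbiased estimator of $Q^k_{s_t}$. Together these give $\E[\widehat\Phi_k(\pi)\mid \pi^k] = \Phi_k(\pi)$ for every fixed $\pi$. Next, I would obtain a high-probability bound on the total rollout length: since each rollout consists of two geometric phases with parameter $1-\gamma \geq 1/2$ (using $\gamma\leq 1/2$), a union bound yields that on an event $\cE$ of probability $\geq 1 - \delta/2$, every one of the $KN$ episodes terminates within $T_{\max} \eqq 2H\log(2KN/\delta)$ steps. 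Restricted to $\cE$, the estimator satisfies $\norm{\widehat Q^k_s}_\infty \leq AT_{\max}$, so each summand of $\widehat\Phi_k(\pi)$ lies in a bounded interval of size $R = O(HAT_{\max} + D/\eta)$, which is the quantity that drives the Hoeffding bound.

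With boundedness in hand, for any \emph{fixed} $\pi$ and $k$, conditional on $\pi^k$ the dataset $\cD_k$ is i.i.d., and Hoeffding's inequality yields $|\widehat\Phi_k(\pi) - \Phi_k(\pi)| = O(R \sqrt{\log(1/\delta')/N})$ with probability $\geq 1-\delta'$. To upgrade this to a uniform-in-$\pi$ statement I would take an $\epsnet$-cover $\cC$ of $\widetilde\Pi$ in $\norm{\cdot}_{\infty,1}$, apply Hoeffding to each $\pi\in \cC$ with failure probability $\delta'/|\cC|$, and union bound over all $k\in[K]$; this picks up the $\log \cN(\epsnet, \widetilde \Pi, \norm{\cdot}_{\infty,1})$ factor appearing in $\epsgen$. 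The discretization error is controlled through Lipschitz continuity of $\widehat\Phi_k$ in $\pi$: since $\norm{\widehat Q^k_s}_\infty \leq AT_{\max}$ and $\Div(\cdot, p_0)$ is $L$-Lipschitz in $\norm{\cdot}_1$, the mapping $\pi \mapsto \widehat\Phi_k(\pi)$ is $O(HAT_{\max} + L/\eta)$-Lipschitz in $\norm{\cdot}_{\infty,1}$, which is exactly what the lower bound on $\epsnet$ in the statement is calibrated to absorb into the final $\epsgen$.

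The main obstacle, and the main source of technical care, is keeping the probabilistic dependencies straight: the iterate $\pi^k$ is a measurable function of the past datasets $\cD_1,\ldots,\cD_{k-1}$, so I must condition on $\pi^k$ (and hence on $\mu^k$) before invoking Hoeffding on $\cD_k$; furthermore the boundedness of $\widehat Q^k_s$ holds only on the truncation event $\cE$, so the concentration statement technically applies to a truncated estimator whose residual bias must also be shown negligible relative to $\epsgen$. Matching the range $R$, the Lipschitz constant, and the covering radius $\epsnet$ so as to produce the precise constants appearing in $\epsgen$ is a routine but delicate book-keeping exercise.
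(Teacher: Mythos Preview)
Your overall architecture matches the paper's: establish uniform concentration $\sup_{\pi\in\widetilde\Pi}|\widehat\Phi_k(\pi)-\Phi_k(\pi)|\leq\epsgen/2$ via a covering-plus-Lipschitz argument, then do the standard $\argmin$ sandwich. The sampling facts, the episode-length bound, the Lipschitz extension from the net, and the handling of the adaptive dependence of $\pi^k$ on past data are all as in the paper.

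The one genuine difference is the concentration step on net points. You propose truncation-then-Hoeffding: restrict to the event where every episode is short, so that $\|\widehat Q^k_s\|_\infty\leq AT_{\max}$ with $T_{\max}=2H\log(2KN/\delta)$, and then apply Hoeffding with range $R=O(HAT_{\max}+D/\eta)$. The paper instead observes that $\widehat Q^k_{s,a}$ is dominated by $A$ times a geometric random variable, hence has sub-exponential norm $O(AH)$, and applies Bernstein's inequality for sub-exponentials directly to the \emph{untruncated} summands; the truncation event is invoked only afterwards, to bound the Lipschitz constant of $\widehat\Phi_k$ for the net extension. Your route is valid, but it costs an extra $\log(KN/\delta)$ factor: your Hoeffding range $R$ contains $T_{\max}$, so your $\epsgen$ comes out as roughly $(AH^2\log(KN/\delta)+D/\eta)\sqrt{\log(\cdots)/N}$ rather than the stated $(AH^2D/\eta)\sqrt{\log(\cdots)/N}$. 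The sub-exponential approach also sidesteps the truncation-bias bookkeeping you (correctly) flag as a subtlety. For downstream sample complexity the extra log factor is harmless, but if you want to land exactly on the stated $\epsgen$, switch to the $\psi_1$-norm bound on $\widehat Q^k_s$ and Bernstein.
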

We defer the proof of \cref{lem:update_steps_ge} to \cref{sec:update_steps_ge_proof}.
We now turn
to apply the lemma in conjunction with the iteration complexity guarantee of SDPO (\cref{thm:sdpo}) to obtain a sample complexity upper bound for SDPO in the learning setup (\cref{alg:sc_sdpo}).
Afterwards, we present sample complexity upper bounds for the other algorithms in \cref{sec:sc_cpi_da_cpi,sec:sc_pmd}.
We note that there are a number of places where we expect the analysis can be tightened subject to future work; primarily, the greedy exploration required by the current local-smoothness analysis.
Hence, the actual rate obtained is not the primary focus of our work. Furthermore, we did not make a notable effort in obtaining optimal dependence on all problem parameters, and expect these can be easily tightened by a more careful choice analysis and choice of algorithm input-parameters.

\begin{algorithm}[tb]
   \caption{SDPO in the learning letup}
   \label{alg:sc_sdpo}
\begin{algorithmic}
   \STATE {\bfseries Input:} 
   $K\geq 1, N\geq 1, \eta > 0, \epsexpl >0, \epserm>0, \Pi\in \Delta(\cA)^\cS$, and action norm $\norm{\cdot}_\circ\colon\R^\cA \to \R$.
   \STATE Initialize $\pi^1 \in \Pi^\epsexpl$
   \FOR{$k=1$ {\bfseries to} $K$}
   \STATE Rollout $\pi^k$ for $N$ episodes via \cref{alg:sampler}, obtain $\cD_k = \cbr{s_i^k, \widehat Q_{s_i^k}^k}_{i=1}^N$.
   \STATE Update
   \begin{aligni*}
        \pi^{k+1}
        \gets 
        \argmineps[\epserm]_{\pi\in \Pi^\epsexpl}\cbr{
        \widehat \Phi_k(\pi) \eqq
        \frac1N \sum_{s \in \cD_k} {
            \abr{H \widehat Q^{k}_{s}, \pi_s} 
            + \frac1{2\eta} \norm{\pi_s - \pi_s^k}_\circ^2
        }}
   \end{aligni*}
   \ENDFOR
   \RETURN $\hat \pi \eqq \pi^{K+1}$
\end{algorithmic}
\end{algorithm}
\begin{theorem}\label{thm:sc_sdpo}
	Let $\Pi$ be a convex policy class that satisfies $(\cvgd, \epsvgd)$-VGD w.r.t.~$\cM$, and assume $\gamma \leq 1/2$.
	Then for any $n\geq 1$, 
	there exists a choice of parameters $K, N, \eta, \epsexpl$ such that
 \cref{alg:sc_sdpo} executed with the $L^1$ action norm guarantees for any $\delta>0$, that w.p.~$\geq 1-\delta$ the total number of environment time steps $\leq n$, and the output policy satisfies 
	\begin{align*}
        V(\hat \pi) - V^\star(\Pi)
        &= O\br{
        \frac{\cvgd^2 A^2 H^{7} \sqrt{\log\frac{ n \cC(\Pi)}{\delta}}}{ n^{2/15}}
        + \cvgd H^3 \sqrt A n^{1/30}\sqrt{\epserm}
        + \epsvgd},
    \end{align*}
    where  $\cC(\Pi) \eqq \cN(\epsnet, \Pi, \norm{\cdot}_{\infty, 1})$ is the $\epsnet$-covering number of $\Pi$
    and 
   	\begin{aligni*}
    	\epsnet 
    	= \Omega\br{\tfrac{ 1}{ \log(n/\delta) n }}
    	.
    \end{aligni*}
\end{theorem}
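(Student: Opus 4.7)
}
The strategy is to chain two previously established results. First, I would use \cref{lem:update_steps_ge} to show that the empirical update performed by \cref{alg:sc_sdpo} is, with high probability, an $\epserr$-approximate minimizer of the population objective appearing in the idealized SDPO step \cref{def:hat_phi_k}, where $\epserr = \epserm + \epsgen$ and $\epsgen$ is the on-policy concentration error controlled by \cref{lem:update_steps_ge}. Second, I would apply the iteration complexity bound of \cref{thm:sdpo} with this effective optimization error $\epserr$, since the proof of \cref{thm:sdpo} only requires that each iterate be an $\epserr$-approximate minimizer of the population objective (not the empirical one). Finally, I would choose $K$, $N$, $\eta$, and $\epsexpl$ so as to (i) respect the sample budget $n$ accounting for the expected per-episode length $O(H\log(KN/\delta))$ from \cref{alg:sampler}, and (ii) balance the two error terms in the resulting bound.

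To instantiate \cref{lem:update_steps_ge}, take $\widetilde \Pi = \Pi^\epsexpl$ and $\Div(p,q) = \tfrac{1}{2}\norm{p-q}_1^2$, which is bounded by $D = 2$ and Lipschitz with constant $L = 2$ over $\Delta(\cA)$. The covering number of $\Pi^\epsexpl$ w.r.t.~$\norm{\cdot}_{\infty,1}$ is at most that of $\Pi$ (up to an $\epsnet$ rescaling) since $\Pi^\epsexpl$ is an affine image of $\Pi$ under a $(1-\epsexpl)$-contraction. A union bound over $k\in[K]$ then yields, with probability $\geq 1-\delta$, that every iterate $\pi^{k+1}$ satisfies the population $\epserr$-approximation condition with
\begin{align*}
\epsgen = O\br{\tfrac{AH^2}{\eta}\sqrt{\tfrac{\log(KN\cC(\Pi)/\delta)}{N}}},
\end{align*}
for an appropriate $\epsnet = \Omega(1/(\log(n/\delta)\,n))$ matching the statement.

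Plugging this into \cref{thm:sdpo} and using the chosen $\eta = \sqrt{\epsexpl}/(2H^3\sqrt A)$, $\epsexpl = H^2/K^{2/3}$ (so that $1/\eta \lesssim \sqrt A\,H^{2}K^{1/3}$), one finds
\begin{align*}
V(\hat\pi) - V^\star(\Pi)
\lesssim
\frac{\cvgd^2 AH^4}{K^{2/3}}
+ \cvgd H^3 \sqrt A\, K^{1/6}\sqrt{\epserm}
+ \cvgd H^3 \sqrt A\, K^{1/6}\sqrt{\epsgen}
+ \epsvgd,
\end{align*}
where the $\sqrt{\epsgen}$ term expands to $\cvgd A^{5/4}H^{21/4}K^{7/12}N^{-1/4}\log^{1/4}(\cdot)$. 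Subject to the budget constraint $KN\cdot 2H\log(2KN/\delta)\leq n$, I would take $K\asymp n^{1/5}$ and $N\asymp n^{4/5}/H$ (up to logarithmic factors); a direct calculation shows both the optimization term $\cvgd^2 AH^4/K^{2/3}$ and the generalization term simplify to $O(n^{-2/15})$ scale, each bounded by $\cvgd^2 A^2 H^7 \sqrt{\log(n\cC(\Pi)/\delta)} \cdot n^{-2/15}$, while the remaining $\epserm$ term contributes $\cvgd H^3 \sqrt A\, n^{1/30}\sqrt{\epserm}$ as stated. The main technical nuisance is the bookkeeping in this parameter balance: the iteration count enters $\epsgen$ both through $1/\eta$ (which grows with $K$ via $\epsexpl$) and through the outer $K^{1/6}$ factor multiplying $\sqrt{\epsgen}$, and one must verify that for the chosen $K,N$ all prerequisites of \cref{thm:sdpo} (namely $\eta \leq 1/\beta$ with $\beta \asymp \sqrt A\,H^3/\sqrt{\epsexpl}$) and of \cref{lem:update_steps_ge} (the lower bound on $\epsnet$) are simultaneously satisfied.
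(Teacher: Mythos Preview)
Your plan is correct and follows essentially the same route as the paper: verify the $L^1$ squared divergence satisfies the hypotheses of \cref{lem:update_steps_ge} with $D=L=2$, invoke that lemma to turn the empirical SDPO step into an $\epserr$-approximate population step, feed the resulting $\epserr=\epserm+\epsgen$ into \cref{thm:sdpo}, and then balance $K$ and $N$ under the budget $KN\cdot O(H\log(KN/\delta))\le n$ with the paper's choice $N=K^4$ (equivalently $K\asymp n^{1/5}$, $N\asymp n^{4/5}$ up to logs). One arithmetic slip to watch when you fill in the details: with $1/\eta\lesssim \sqrt{A}\,H^2 K^{1/3}$ the $\sqrt{\epsgen}$ contribution is $\cvgd A^{5/4}H^{5}K^{1/3}N^{-1/4}\log^{1/4}(\cdot)$, not $K^{7/12}$ or $H^{21/4}$; with the correct $K^{1/3}$ exponent the choice $N=K^4$ exactly balances it against $K^{-2/3}$, which is what gives the $n^{-2/15}$ rate.
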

\begin{proof}
	Note that for any $p,q\in \Delta(\cA)$, $\frac12\norm{p-q}_1^2\leq 2$, and
	\begin{align*}
		\frac12\norm{p-p_0}_1^2
		- \frac12\norm{q-p_0}_1^2
		&=
		\frac12\br{\norm{p-p_0}_1 + \norm{q-p_0}_1}
		\br{\norm{p-p_0}_1 - \norm{q-p_0}_1}
		\\
		&\leq2\br{\norm{p-p_0}_1 - \norm{q-p_0}_1}
		\\
		&\leq2\norm{p-q}_1.
	\end{align*}
	Thus, when executing \cref{alg:sc_sdpo} over $K$ iterations, we are in the setting of \cref{lem:update_steps_ge} with $D=2, L=2$.
	Suppose we run the algorithm for $K$ iterations with $\eta=\sqrt{\epsexpl}/(2H^3 \sqrt A)$, $\epsexpl=H^2/K^{2/3}$, and $N$ (and $K$) to be chosen later on.
	For any $\delta>0$, we have by \cref{lem:update_steps_ge} that w.p.~$\geq 1-\delta$, for all $k\in [K]$ it holds that:
	\begin{align}\label{eq:sdpo_sc_proof_1}
        \pi^{k+1}
        \in 
        \argmineps_{\pi\in \Pi^\epsexpl}\cbr{
        \Phi_k(\pi) \eqq
        \E_{s\sim \mu^k}\sbr{
        \abr{H Q^k_s, \pi_s}+ \frac1{2\eta} \norm{\pi_s - \pi_s^k}_1^2
        }
        },
    \end{align}
    with $\epserr = \epserm + \epsgen$, where
    \begin{align*}
        \epsgen =O\br{ \frac{A H^2}{\eta}\sqrt{
            \frac{\log\frac{ K N \cC(\Pi)}{\delta}}{ N}
        }},
    \end{align*}
	$\cC(\Pi) \eqq \cN(\epsnet, \Pi, \norm{\cdot}_{\infty, 1})$,
    and 
    \begin{aligni*}
    	\epsnet 
    	= \Omega\br{\tfrac{ 1}{ \log(KN/\delta) \sqrt{ N} }}.
    \end{aligni*}
    Now, \cref{eq:sdpo_sc_proof_1} implies \cref{alg:sc_sdpo} is an instance of the idealized SDPO \cref{alg:sdpo} with the error $\epserr$ defined above.
    Hence, by \cref{thm:sdpo}, we have that
    \begin{align}\label{eq:sc_sdop_1}
        V(\hat \pi) - V^\star(\Pi)
        &= O\br{
        \frac{ \cvgd^2 A H^{4}}{ K^{2/3}}
        + \cvgd H^3 \sqrt A K^{1/6}\sqrt{\epserr}
        + \epsvgd}
        \nonumber
        \\
        &= O\br{
        \frac{ \cvgd^2 A H^{4}}{ K^{2/3}}
        + \cvgd H^3 \sqrt A K^{1/6}\sqrt{\epsgen}
        + \cvgd H^3 \sqrt A K^{1/6}\sqrt{\epserm}
        + \epsvgd}.
    \end{align}
    We focus on the first two terms to choose $K,N$ as a function of $n$.
    Observe:
    \begin{align*}
        \frac{\cvgd^2 A H^{4}}{ K^{2/3}}
        + \cvgd H^3 \sqrt A K^{1/6}\sqrt{\epsgen}
        &\approx 
        \frac{\cvgd^2 A H^{4}}{ K^{2/3}}
        + \cvgd H^3 \sqrt A K^{1/6}
        	\frac{\sqrt A H}{\sqrt{\eta}}
            \frac{\log^{1/4}\frac{ K N \cC(\Pi)}{\delta}}{ N^{1/4}}
       \\
       &=
       \frac{\cvgd^2 A H^{4}}{ K^{2/3}}
        + \frac{\cvgd A H^4 K^{1/6} \iota}{\sqrt{\eta} N^{1/4}}
    \end{align*}
    with $\iota \eqq \log^{1/4}\frac{ K N \cC(\Pi)}{\delta}$.
    Further, by our choice of $\eta, \epsexpl$, $\eta=H/(2H^3 \sqrt A K^{1/3})$, hence
    \begin{align*}
    	\frac{\cvgd^2 A H^{4}}{ K^{2/3}}
        + \frac{\cvgd A H^4 K^{1/6} \iota}{\sqrt{\eta} N^{1/4}}
        &\approx
            	\frac{\cvgd^2 A H^{4}}{ K^{2/3}}
        + \frac{\cvgd A H^4 K^{1/6} \iota \sqrt{H^3 \sqrt A K^{1/3}}}{\sqrt{H} N^{1/4}}
        \\
        &\leq
         \frac{\cvgd^2 A H^{6}}{ K^{2/3}}
        + \frac{\cvgd A^2 H^5 \iota }{N^{1/4}}K^{1/3}.
    \end{align*}
    Choosing $N=K^{4}$ gives
    \begin{align*}
    	\frac{\cvgd^2 A H^{6}}{ K^{2/3}}
        + \frac{\cvgd A^2 H^5 \iota }{N^{1/4}}K^{1/3}
        &\lesssim
        \frac{\cvgd^2 A^2 H^{6} \iota}{ K^{2/3}},
    \end{align*}
    with $n \leq KN \widetilde H = K^5 \widetilde H$ with $\widetilde H= H \log(2KN/\delta)$ by \cref{lem:update_steps_ge}.
    Hence $K\geq n^{1/5}/\widetilde H^{1/5}$, and
    \begin{align*}
    	\frac{\cvgd^2 A^2 H^{6} \iota}{ K^{2/3}}
    	&\leq 
    	\frac{\cvgd^2 A^2 H^{6} \widetilde H^{2/15}\iota}{ n^{2/15}}.
    \end{align*}
	Now substitute $\widetilde H^{2/15} \lesssim H^{1/2} \log^{1/4} (n/\delta)$,
	and $\iota=\log^{1/4}\frac{ K N \cC(\Pi)}{\delta} 
	\lesssim \log^{1/4}\frac{ n \cC(\Pi)}{\delta}$
	to obtain 
	\begin{align*}
    	\frac{\cvgd^2 A^2 H^{6} \widetilde H^{2/15}\iota}{ n^{2/15}}
    	\lesssim
    	\frac{\cvgd^2 A^2 H^{7} \sqrt{\log\frac{ n \cC(\Pi)}{\delta}}}{ n^{2/15}}.
    \end{align*}
    Finally, using that and plugging our upper bound back into \cref{eq:sc_sdop_1} and using that $K \leq n^{1/5}$:
    \begin{align*}
        V(\hat \pi) - V^\star(\Pi)
        &= O\br{
        \frac{\cvgd^2 A^2 H^{7} \sqrt{\log\frac{ n \cC(\Pi)}{\delta}}}{ n^{2/15}}
        + \cvgd H^3 \sqrt A K^{1/6}\sqrt{\epserm}
        + \epsvgd}
        \\
        &= O\br{
        \frac{\cvgd^2 A^2 H^{7} \sqrt{\log\frac{ n \cC(\Pi)}{\delta}}}{ n^{2/15}}
        + \cvgd H^3 \sqrt A n^{1/30}\sqrt{\epserm}
        + \epsvgd},
    \end{align*}
    with 
   	\begin{aligni*}
    	\epsnet 
    	= \Omega\br{\tfrac{1}{ \log(KN/\delta) \sqrt{ N} }}
    	= \Omega\br{\tfrac{ 1}{ \log(n/\delta) \sqrt{ N}}}
    	= \Omega\br{\tfrac{ 1}{ \log(n/\delta) n^{2/5} }}
    	.
    \end{aligni*}
\end{proof}

\subsection{CPI and DA-CPI}
\label{sec:sc_cpi_da_cpi}
In this section we present the learning versions of CPI (\cref{alg:sc_cpi}) and DA-CPI (\cref{alg:sc_da_cpi}) and their sample complexity guarantees.
We first state two lemmas which play the same role of \cref{lem:update_steps_ge}. The proofs follow from identical arguments as those of \cref{lem:update_steps_ge}, and are thus omitted.
\begin{algorithm}[ht!]
    \caption{CPI in the learning setup} 
    \label{alg:sc_cpi}
	\begin{algorithmic}
	    \STATE \textbf{input:} Initial policy $\pi^1\in\Pi$, 
	    $\cbr{\eta_k}_{k=1}^K, N\geq 1, \epserm>0$
           \FOR{$k=1,2, \ldots,K$}
           \STATE Rollout $\pi^k$ for $N$ episodes via \cref{alg:sampler}, obtain $\cD_k = \cbr{s_i^k, \widehat Q_{s_i^k}^k}_{i=1}^N$.
           \STATE Update
           \begin{aligni*}
                \tilde \pi^{k+1}
                \gets 
                \argmineps[\epserm]_{\pi\in \Pi}
                \cbr{
                	\widehat \Phi_k(\pi)
                	\eqq \frac1N \sum_{s\in \cD_k}
                	\abr{H \widehat Q^{k}_{s}, \pi_s - \pi^k_s} 
                }
           \end{aligni*}
			\STATE Set
                $\pi^{k+1}  = (1-\eta_k) \pi^k + \eta_k \tilde \pi^{k+1}$
            \ENDFOR
            \RETURN $\hat \pi = \pi^{K+1}$
	\end{algorithmic}
\end{algorithm}

\begin{algorithm}[ht!]
    \caption{DA-CPI in the learning setup} 
    \label{alg:sc_da_cpi}
	\begin{algorithmic}
	    \STATE \textbf{input:} $
	    	\eta_1, \ldots, \eta_K > 0
	    $; $N \geq 1, \epsexpl>0$, $\epserm>0$, action norm $\norm{\cdot}_\circ$.
            \FOR{$k=1, \ldots, K$}
            \STATE Rollout $\pi^k$ for $N$ episodes via \cref{alg:sampler}, obtain $\cD_k = \cbr{s_i^k, \widehat Q_{s_i^k}^k}_{i=1}^N$.
           \STATE Update
           \begin{aligni*}
                \tilde \pi^{k+1}
                \gets 
                \argmineps[\epserm]_{\pi\in \Pi^\epsexpl}\cbr{
                \widehat \Phi_k(\pi) \eqq
                \frac1N \sum_{s \in \cD_k} {
                    \abr{\widehat Q^{k}_{s}, \pi_s} 
                }}
           \end{aligni*}
           \STATE Rollout $\pi^k$ for another $N$ episodes via \cref{alg:sampler}, obtain $\widetilde \cD_k$.
                \STATE Update
                $\pi^{k+1} \gets 
                \argmineps[\epserm]_{\pi\in \Pi^\epsexpl} \cbr{
                    \widehat \psi_k(\pi) \eqq \frac1N \sum_{s \in \widetilde \cD_k}
                    \norm{\pi_s - ((1-\eta_k) \pi^k_s + \eta_k \tilde \pi^{k+1}_s)}_\circ^2)
                }
                $
            \ENDFOR
            \RETURN $\hat \pi = \pi^{K+1}$
	\end{algorithmic}
\end{algorithm}

\begin{lemma}\label{lem:update_steps_ge_cpi}
    For $\gamma\leq 1/2$, upon execution of \cref{alg:sc_cpi}, 
    for any $\delta>0$, 
    w.p.~$\geq 1-\delta$, it holds that for all $k\in [K]$,
    \begin{align*}
        \tilde \pi^{k+1}
        &\in 
        \argmineps_{\pi\in \Pi}{
        \E_{s\sim \mu^k}\sbr{
        \abr{H Q^k_s, \pi_s - \pi^k_s}
        }},
    \end{align*}
    where $\epserr = \epserm + \epsgen$,
    \begin{aligni*}
        \epsgen \eqq \cgen A H^2 D\sqrt{
            \frac{\log\frac{KN\cN(\epsnet, \widetilde \Pi, \norm{\cdot}_{\infty, 1})}{\delta}}{ N}
        },
    \end{aligni*}
    $\cgen>0$ is an absolute numerical constant, 
    and 
    $\epsnet \geq 
    \frac{\cgen }{6 \sqrt{ N} (\log(2 KN/\delta)}$. Furthermore, the number of time steps of each episode rolled out by \cref{alg:sampler} is $\leq 2 H \log\br{2 K N/\delta} $.
\end{lemma}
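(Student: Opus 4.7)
The plan is to follow the template of \cref{lem:update_steps_ge} in a simplified form, since the CPI empirical objective $\widehat \Phi_k(\pi) = \tfrac{1}{N}\sum_{s\in \cD_k}\abr{H\widehat Q^k_s, \pi_s - \pi^k_s}$ contains only a linear term and no divergence regularizer (in particular, there is no $1/\eta$ factor inflating the concentration error, and no Lipschitz/boundedness bookkeeping for a $\Div$ term). The two workhorses are unchanged: first, \cref{alg:sampler} produces a state $s$ distributed exactly as $\mu^k$ together with an importance-weighted, unbiased estimator $\widehat Q^k_s$ of $Q^k_s$; second, a covering-based uniform concentration argument over $\Pi$ w.r.t.~$\norm{\cdot}_{\infty,1}$ upgrades per-$\pi$ concentration to a bound that holds simultaneously for every $\pi \in \Pi$ and every $k \in [K]$.

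First I would truncate the rollouts. Since each step continues with probability $\gamma \leq 1/2$, a standard geometric tail bound gives that any single episode terminates within $T^\star \eqq 2H\log(2KN/\delta)$ steps with probability at least $1 - \delta/(2KN)$; a union bound over all $KN$ episodes controls every rollout length by $T^\star$ on an event $\cE$ of probability at least $1-\delta/2$. On $\cE$, each coordinate of $\widehat Q^k_s$ is bounded by $AT^\star$, so for any fixed $\pi \in \Pi$ the per-sample term $\abr{H\widehat Q^k_s, \pi_s - \pi^k_s}$ is bounded in magnitude by $2AHT^\star$ and has expectation $\E_{s\sim \mu^k}[\abr{HQ^k_s, \pi_s - \pi^k_s}] = \Phi_k(\pi)$ by unbiasedness of the sampler and the policy gradient identity in \cref{lem:value_pg}.

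Next, I would apply Hoeffding's inequality for each fixed $\pi$ and $k$, then upgrade to uniform control. Let $\cN \eqq \cN(\epsnet, \Pi, \norm{\cdot}_{\infty,1})$, and let $\Pi_\epsnet$ be a corresponding $\epsnet$-net. For any $\pi, \pi' \in \Pi$ with $\norm{\pi-\pi'}_{\infty,1}\leq \epsnet$, both $\widehat \Phi_k$ and $\Phi_k$ differ by at most $AHT^\star \epsnet$ on $\cE$, so taking $\epsnet$ of order $1/(\sqrt{N}\log(KN/\delta))$ forces the discretization error below the stochastic Hoeffding term. A union bound over $\Pi_\epsnet$ and over $k\in[K]$ with per-event failure probability $\delta/(2K\cN)$ then yields, with probability at least $1-\delta$, the uniform bound $\sup_{\pi\in \Pi}|\widehat \Phi_k(\pi) - \Phi_k(\pi)| \leq \epsgen$ for all $k$, where $\epsgen = O\!\left(AH^2\sqrt{\log(KN\cN/\delta)/N}\right)$, matching the claim up to absolute constants.

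The conclusion is then immediate: if $\tilde\pi^{k+1}$ is an $\epserm$-minimizer of $\widehat \Phi_k$, then for any $\pi' \in \Pi$,
\[
\Phi_k(\tilde \pi^{k+1}) \leq \widehat \Phi_k(\tilde \pi^{k+1}) + \epsgen \leq \widehat \Phi_k(\pi') + \epserm + \epsgen \leq \Phi_k(\pi') + \epserm + 2\epsgen,
\]
so $\tilde \pi^{k+1}$ is an $(\epserm + 2\epsgen)$-minimizer of $\Phi_k$ (the factor $2$ absorbed into the constant $\cgen$). There is no essential obstacle here; the only mildly delicate bookkeeping is to thread the logarithmic inflation coming from $T^\star$ and from the net cardinality $\cN$ consistently through the final bound so that all logarithms collapse into the single factor $\sqrt{\log(KN\cN/\delta)/N}$. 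Because the CPI surrogate drops the $\Div$-term entirely, the argument is strictly simpler than that of \cref{lem:update_steps_ge}, which is presumably why the authors deemed it safe to omit.
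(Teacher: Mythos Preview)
Your proposal is correct and follows the same template the paper invokes: the paper omits this proof, stating only that it follows from arguments identical to those of \cref{lem:update_steps_ge} with the divergence term removed, which is exactly what you outline. The one minor deviation is that you obtain per-policy concentration via truncation-then-Hoeffding, whereas the paper's general lemma (through \cref{lem:phi_k_conc}) bounds the sub-exponential norm of the estimator and applies a Bernstein-type inequality directly; both routes yield the same $O\!\bigl(AH^2\sqrt{\log(KN\cN/\delta)/N}\bigr)$ generalization error, so this is an implementation detail rather than a substantive difference.
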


\begin{lemma}\label{lem:update_steps_ge_da_cpi}
	For $\gamma\leq 1/2$, 
    upon execution of \cref{alg:sc_da_cpi}, 
    for any $\delta>0$, 
    w.p.~$\geq 1-\delta$, it holds that for all $k\in [K]$,
    \begin{align*}
        \tilde \pi^{k+1}
        &\in 
        \argmineps_{\pi\in  \Pi^\epsexpl}{
        \E_{s\sim \mu^k}\sbr{
        \abr{H Q^k_s, \pi_s - \pi^k_s}
        }}
        \\
        \pi^{k+1} &\in  
                \argmineps_{\pi\in \Pi^\epsexpl} 
                    \E_{s \sim \mu^k} 
                    \norm{\pi_s - ((1-\eta_k) \pi^k_s + \eta_k \tilde \pi^{k+1}_s)}_1^2
       ,
    \end{align*}
    where $\epserr = \epserm + \epsgen$,
    \begin{aligni*}
        \epsgen \eqq \cgen A H^2 D\sqrt{
            \frac{\log\frac{KN\cN(\epsnet, \widetilde \Pi, \norm{\cdot}_{\infty, 1})}{\delta}}{ N}
        },
    \end{aligni*}
    $\cgen>0$ is an absolute numerical constant, 
    and 
    $\epsnet \geq 
    \frac{\cgen }{6 \sqrt{ N} (\log(2 KN/\delta)}$. Furthermore, the number of time steps of each episode rolled out by \cref{alg:sampler} is $\leq 2 H \log\br{2 K N/\delta} $.
\end{lemma}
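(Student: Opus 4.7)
The plan is to follow the template of \cref{lem:update_steps_ge} (whose proof is deferred to \cref{sec:update_steps_ge_proof}), applying it twice per outer iteration, once to each empirical minimization step in DA-CPI. First, I would truncate each rollout produced by \cref{alg:sampler} at length $T_{\max} \eqq 2H\log(2KN/\delta)$; by the geometric tail of the termination rule and a union bound over the $2KN$ episodes (across $K$ iterations and both datasets $\cD_k,\widetilde\cD_k$), the probability that any episode exceeds $T_{\max}$ is at most $\delta/3$. Conditional on no overrun, each entry of $\widehat Q^k_s$ is bounded by $\mathrm{O}(A T_{\max})$ and, for each accepted state, $s\sim \mu^k$ with $\E[\widehat Q^k_s \mid s]=Q^k_s$.

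For the first step (producing $\tilde\pi^{k+1}$), the empirical objective is linear in $\pi$, so an $\epsnet$-cover of $\Pi^\epsexpl$ in $\norm{\cdot}_{\infty,1}$ combined with Hoeffding's inequality yields uniform concentration $\sup_{\pi\in\Pi^\epsexpl}|\widehat\Phi_k(\pi)-\Phi_k(\pi)|\le\epsgen/2$ per iteration, with failure probability at most $\delta/(3K)$. This is structurally identical to the CPI step handled in \cref{lem:update_steps_ge_cpi}, so the parameters of $\epsgen$ and $\epsnet$ carry over verbatim.

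For the second step (the approximate projection onto $y^k \eqq (1-\eta_k)\pi^k + \eta_k\tilde\pi^{k+1}$), the integrand $s\mapsto\norm{\pi_s - y^k_s}_1^2$ is bounded by $4$ and $\mathrm{O}(1)$-Lipschitz in both arguments on the simplex. The subtlety here is that $y^k$ depends on the data-dependent $\tilde\pi^{k+1}$, so uniform concentration over $\pi$ alone does not suffice. I would resolve this by the following observation: $\widetilde\cD_k$ is sampled from $\pi^k$ \emph{after} $\tilde\pi^{k+1}$ is computed, so conditional on $\pi^k$ the states in $\widetilde\cD_k$ are i.i.d.~$\mu^k$ and independent of $\tilde\pi^{k+1}$. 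Since $\tilde\pi^{k+1}\in\Pi^\epsexpl$, I can cover $\Pi^\epsexpl$ in $\norm{\cdot}_{\infty,1}$ once for $\pi$ and once for $\tilde\pi^{k+1}$; this squares the cover but only doubles its logarithm, and Hoeffding then gives $\sup_{\pi,\tilde\pi\in\Pi^\epsexpl}|\widehat\psi_k(\pi;\tilde\pi)-\psi_k(\pi;\tilde\pi)|\le\epsgen/2$ with failure probability at most $\delta/(3K)$.

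A final union bound over the $K$ iterations and both steps yields the high-probability claim, and the two-sided uniform deviation converts approximate minimizers of the empirical objective into approximate minimizers of the population objective, preserving the optimization tolerance $\epserm$ into the stated $\epserr=\epserm+\epsgen$. The main obstacle I anticipate is exactly the adaptive dependence in the second step; handling it by covering over pairs (rather than a martingale/Freedman argument) is the simplest route and yields the same rate up to constants, since only $\log\cN(\epsnet,\Pi^\epsexpl,\norm{\cdot}_{\infty,1})$ enters $\epsgen$.
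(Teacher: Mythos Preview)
Your proposal is correct and follows the same template the paper invokes (the paper omits this proof, citing ``identical arguments'' to \cref{lem:update_steps_ge}). Two minor deviations are worth flagging. First, the paper's template handles the unbounded $\widehat Q$-estimates via a sub-exponential (Bernstein) bound in \cref{lem:phi_k_conc} rather than by truncating and applying Hoeffding; both routes yield the stated $\epsgen$. Second, for the projection step your pair-covering over $(\pi,\tilde\pi)$ is unnecessary: having already observed that $\widetilde\cD_k$ is independent of $\tilde\pi^{k+1}$ given $\pi^k$, you can simply condition on $(\pi^k,\tilde\pi^{k+1})$, apply Hoeffding plus a cover over $\pi$ alone, and note that the resulting conditional probability bound is a constant not depending on the conditioning---so it holds unconditionally without the doubled log-covering term. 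This is exactly how the paper's template treats the dependence on the (also data-dependent) $\pi^k$ in \cref{lem:phi_k_conc}.
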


We now give the sample complexity guarantees of CPI and DA-CPI; for simplicity, we present the bounds assuming $\epserm=0$.
\begin{theorem}
	Let $\Pi$ be a policy class that satisfies $(\cvgd, \epsvgd)$-VGD w.r.t.~$\cM$, and assume $\gamma \leq 1/2$.
	Then for any $n\geq 1$, 
	there exists a choice of parameters $K, N, \cbr{\eta_k}$ such that
 \cref{alg:sc_cpi} executed with $\epserm=0$ and the $L^1$ action norm, guarantees for any $\delta>0$, that w.p.~$\geq 1-\delta$ the total number of environment time steps $\leq n$, and the output policy satisfies 
	\begin{align*}
        V(\hat \pi) - V^\star(\Pi)
        &= O\br{\frac{\cvgd^2 A H^4 \log(n\cC(\Pi)/\delta)}{n^{1/3}}
        + \epsvgd},
    \end{align*}
    where  $\cC(\Pi) \eqq \cN(\epsnet, \Pi, \norm{\cdot}_{\infty, 1})$ is the $\epsnet$-covering number of $\Pi$
    and $\epsnet = \Omega\br{\tfrac{ 1}{ \log(n/\delta) n}}$
    	.
\end{theorem}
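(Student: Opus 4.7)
The plan is to closely mirror the argument used to establish Theorem \ref{thm:sc_sdpo}, reducing the analysis to (i) the generic empirical-to-population concentration of Lemma \ref{lem:update_steps_ge_cpi} and (ii) the idealized CPI iteration complexity given by Theorem \ref{thm:cpi}. The key simplification relative to SDPO is that CPI's inner objective is linear in $\pi$ (no proximity term), so there is no exploratory version of $\Pi$ to regularize and the dependence on $\epsexpl$ vanishes entirely; this is what ultimately lets us obtain a cleaner $n^{-1/3}$ rate instead of $n^{-2/15}$.

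Concretely, I would first fix the CPI step sizes $\eta_k = \tfrac{2\cvgd}{k+2}$ so that Theorem \ref{thm:cpi} applies. Invoking Lemma \ref{lem:update_steps_ge_cpi} with a union bound across the $K$ rounds then yields, with probability at least $1-\delta$, that each iterate $\tilde \pi^{k+1}$ produced in Algorithm \ref{alg:sc_cpi} is an $\epsgen$-approximate minimizer of the population objective $\E_{s\sim \mu^k}[\langle H Q^k_s, \pi_s - \pi^k_s\rangle]$ with
\begin{align*}
\epsgen = O\!\left(AH^2\sqrt{\tfrac{\log(KN\cC(\Pi)/\delta)}{N}}\right),
\end{align*}
where the cover radius $\epsnet$ is chosen as prescribed by the lemma. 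Since the algorithm is run with $\epserm=0$, this exhibits Algorithm \ref{alg:sc_cpi} as a realization of the idealized CPI of Algorithm \ref{alg:cpi} with oracle tolerance $\varepsilon = \epsgen$. Plugging $\varepsilon=\epsgen$ into the guarantee of Theorem \ref{thm:cpi} then gives
\begin{align*}
V(\hat\pi) - V^\star(\Pi)
= O\!\left(\frac{\cvgd^2 H^3}{K} + \cvgd \epsgen + \epsvgd\right).
\end{align*}

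The remaining step is to balance $K$ and $N$ against the sample budget. From Lemma \ref{lem:update_steps_ge_cpi}, the total number of environment interactions is at most $n \lesssim KN\cdot H\log(KN/\delta)$. Setting $K \asymp \bigl(n/(H\log(n/\delta))\bigr)^{1/3}$ and $N \asymp \bigl(n/(H\log(n/\delta))\bigr)^{2/3}$ equates the two non-floor terms, both of order $1/n^{1/3}$ up to polynomial factors in $\cvgd, A, H$ and the logarithm $\log(n\cC(\Pi)/\delta)$. Routine algebra then yields the stated rate, with $\epsnet = \Omega(1/(n\log(n/\delta)))$ inheriting the scaling prescribed by Lemma \ref{lem:update_steps_ge_cpi} at the chosen $N$.

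I expect the main obstacle to be purely notational bookkeeping in aligning the constants between Lemma \ref{lem:update_steps_ge_cpi}'s dependence on the bound $D$ of the (absent) divergence term and the actual linear CPI objective, as well as in verifying that the covering radius $\epsnet$ can indeed be taken as small as claimed without affecting $\epsgen$ beyond absolute constants. Unlike the SDPO analysis, there is no need to trade off an $\epsexpl$-greedy bias, which makes the balancing step essentially a direct two-term optimization; consequently the bulk of the proof is mechanical once Lemma \ref{lem:update_steps_ge_cpi} is applied in conjunction with Theorem \ref{thm:cpi}.
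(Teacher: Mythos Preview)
Your proposal is correct and follows essentially the same route as the paper: invoke Lemma~\ref{lem:update_steps_ge_cpi} to reduce the learning version to idealized CPI with tolerance $\varepsilon=\epsgen$, apply Theorem~\ref{thm:cpi}, and balance $K$ against $N$ via $N\asymp K^2$ (equivalently your $K\asymp n^{1/3}$, $N\asymp n^{2/3}$ up to log factors). One small remark: the union bound over $k\in[K]$ is already built into Lemma~\ref{lem:update_steps_ge_cpi}, so you need not apply it separately.
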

\begin{proof}
	By \cref{lem:update_steps_ge_cpi},
	we have that w.p.~$\geq 1-\delta$, for all $k\in [K]$ it holds that:
	\begin{align*}
        \pi^{k+1}
        \in 
        \argmineps_{\pi\in \Pi}\cbr{
        \Phi_k(\pi) \eqq
        \E_{s\sim \mu^k}\sbr{
        \abr{H Q^k_s, \pi_s - \pi^k_s}
        }
        },
    \end{align*}
    with $\epserr = \epserm + \epsgen$, where
    \begin{align*}
        \epsgen =O\br{ A H^2\sqrt{
            \frac{\log\frac{ K N \cC(\Pi)}{\delta}}{ N}
        }},
    \end{align*}
	$\cC(\Pi) \eqq \cN(\epsnet, \Pi, \norm{\cdot}_{\infty, 1})$,
    and 
    \begin{aligni*}
    	\epsnet 
    	= \Omega(\tfrac{ 1}{ \log(KN/\delta) \sqrt{ N} }).
    \end{aligni*}
    Hence, \cref{alg:sc_cpi} is an instance of the idealized CPI \cref{alg:cpi} with the error $\epserr$ defined above.
    Now, by \cref{thm:cpi}, we have that
    \begin{align*}
    	V(\hat \pi) - V^\star(\Pi)
    	&= O\br{
    	\frac{\cvgd^2 H^3}{K} + \cvgd\varepsilon
        + \epsvgd
    	}
    	\\
    	&= O\br{
    	\frac{\cvgd^2 H^3}{K} + \cvgd A H^2\sqrt{
            \frac{\log\frac{ K N \cC(\Pi)}{\delta}}{ N}
        }
        + \epsvgd
    	}.
    \end{align*}
	Choosing $N = K^2$, and noting that $n\lesssim N^{3/2} H \log(n/\delta)$, we obtain
	\begin{align*}
    	V(\hat \pi) - V^\star(\Pi)
    	&= O\br{
    	\frac{\cvgd^2 A H^4 \log(n\cC(\Pi)/\delta)}{n^{1/3}}
        + \epsvgd
    	},
    \end{align*}
	which completes the proof.
\end{proof}

\begin{theorem}
	Let $\Pi$ be a policy class that satisfies $(\cvgd, \epsvgd)$-VGD w.r.t.~$\cM$, and assume $\gamma \leq 1/2$.
	Then for any $n\geq 1$, 
	there exists a choice of parameters $K, N, \epsexpl, \cbr{\eta_k}$ such that
 \cref{alg:sc_da_cpi} executed with with $\epserm=0$ and the $L^1$ action norm, guarantees for any $\delta>0$, that w.p.~$\geq 1-\delta$ the total number of environment time steps $\leq n$, and the output policy satisfies 
	\begin{align*}
        V(\hat \pi) - V^\star(\Pi)
        &= O\br{
			\frac{ \cvgd^2 A^2H^5 \sqrt{\log(n\cC(\Pi)/\delta)}}{n^{2/15} } 
        +\epsvgd
        },
    \end{align*}
    where  $\cC(\Pi) \eqq \cN(\epsnet, \Pi, \norm{\cdot}_{\infty, 1})$ is the $\epsnet$-covering number of $\Pi$
    and $\epsnet = \Omega\br{\tfrac{ 1}{ \log(n/\delta) n}}$
    	.
\end{theorem}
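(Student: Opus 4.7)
The plan is to follow the exact template used in the proof of \cref{thm:sc_sdpo}, substituting the concentration lemma and idealized iteration bound that are specific to DA-CPI. Concretely, I will first use \cref{lem:update_steps_ge_da_cpi} to translate the two empirical oracle calls inside each iteration of \cref{alg:sc_da_cpi} into approximate population minimizers, then invoke \cref{thm:da_cpi} to obtain an iteration-complexity bound in terms of the generalization error, and finally balance the per-iteration sample count $N$ and the number of iterations $K$ against the total sample budget $n$.

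In more detail, the first step applies \cref{lem:update_steps_ge_da_cpi} with $\epserm = 0$, which shows that on an event of probability at least $1 - \delta$, both oracle calls in every iteration $k \in [K]$ are $\epsgen$-approximate minimizers of their population analogues, with
\[
\epsgen \;=\; O\!\left(A H^2 \sqrt{\tfrac{\iota}{N}}\right), \qquad \iota \eqq \log\!\tfrac{KN\,\cC(\Pi)}{\delta},
\]
provided $\epsnet = \Omega(1/(\sqrt{N}\,\log(KN/\delta)))$. This places \cref{alg:sc_da_cpi} squarely within the setting of idealized DA-CPI (\cref{alg:da_cpi}) executed with the $L^1$ action norm and error parameter $\varepsilon = \epsgen$, so \cref{thm:da_cpi} directly gives
\[
V(\hat\pi) - V^\star(\Pi) \;=\; O\!\left(\cvgd^2 A H^3\!\left(\tfrac{1}{K^{2/3}} + \epsgen^{1/3} + \epsgen^{2/3} K^{2/3}\right) + \epsvgd\right).
\]

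Next, I will balance parameters so that the dominant generalization contribution $\epsgen^{2/3} K^{2/3}$ matches the optimization term $1/K^{2/3}$; substituting $\epsgen \asymp A H^2 \sqrt{\iota/N}$ this forces $N \asymp (AH^2)^2 \iota K^4$, at which point every term inside the brackets is of order $1/K^{2/3}$ up to factors polynomial in $A,H$ and logarithmic in $\iota$. The total sample budget, accounting for the $O(H \log(KN/\delta))$ episode length bound from \cref{alg:sampler} and the two rollout datasets per iteration, is $n \gtrsim KN \cdot H \log(KN/\delta)$; with $N \asymp K^4$ this yields $K \asymp n^{1/5}$ up to polylog factors and hence $1/K^{2/3} \asymp n^{-2/15}$. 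Collecting the remaining polynomial factors in $A,H$ and absorbing the logarithmic terms into a single $\sqrt{\log(n\cC(\Pi)/\delta)}$ factor recovers the claimed rate, while unfolding $\epsnet = \Omega(1/(\sqrt N \log(KN/\delta)))$ at $N \asymp n^{4/5}$ gives the stated $\epsnet = \Omega(1/(n \log(n/\delta)))$.

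The main obstacle is the $\epsgen^{2/3} K^{2/3}$ term in the DA-CPI iteration bound, arising from the second (projection) oracle call which has no counterpart in vanilla CPI. This term forces $N$ to scale as $K^4$ rather than the $K^2$ which sufficed in the analysis of exact CPI, and is ultimately what degrades the rate from $n^{-1/3}$ to $n^{-2/15}$; beyond this, the remaining arithmetic is routine bookkeeping analogous to the proof of \cref{thm:sc_sdpo}.
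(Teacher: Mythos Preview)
Your proposal is correct and follows essentially the same approach as the paper: apply \cref{lem:update_steps_ge_da_cpi} to reduce to the idealized DA-CPI setting, invoke \cref{thm:da_cpi}, then set $N \asymp K^4$ so that the error terms are balanced at order $K^{-2/3}$, and finally convert $K \asymp n^{1/5}$ via the episode-length bound. The only cosmetic difference is that the paper balances $1/K^{2/3}$ against $\varepsilon^{1/3}$ (noting the third term is then of the same order), whereas you balance $\varepsilon^{2/3}K^{2/3}$ against $1/K^{2/3}$; both yield the same choice of $N$.
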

\begin{proof}
	By \cref{lem:update_steps_ge_da_cpi},
	we have that w.p.~$\geq 1-\delta$, sub-optimality $\epserr$ holds for all $k\in [K]$ and for both update steps
	with $\epserr = \epserm + \epsgen$, where
    \begin{align*}
        \epsgen =O\br{ A H^2\sqrt{
            \frac{\log\frac{ K N \cC(\Pi)}{\delta}}{ N}
        }},
    \end{align*}
	$\cC(\Pi) \eqq \cN(\epsnet, \Pi, \norm{\cdot}_{\infty, 1})$,
    and 
    \begin{aligni*}
    	\epsnet 
    	= \Omega(\tfrac{ 1}{ \log(KN/\delta) \sqrt{ N} }).
    \end{aligni*}
    Hence, \cref{alg:sc_da_cpi} is an instance of the idealized DA-CPI \cref{alg:cpi} with the error $\epserr$ defined above.
    Now, by \cref{thm:cpi}, we have that
    \begin{align*}
        V(\hat \pi) - V^\star(\Pi)
        &= O\br{\cvgd^2 AH^3\br{
	   \frac{1}{K^{2/3}} 
            + \varepsilon^{1/3}
    	+ \varepsilon^{2/3}  K^{2/3}}
    	+\epsvgd
        }
    .
    \end{align*}
    We focus on the first two terms; when balancing them, the third will be of the same order.
    Let $\iota\eqq\sqrt{\log(KN\cC(\Pi)/\delta)}$, and we have:
    \begin{align*}
	   \frac{1}{K^{2/3}} 
            + \varepsilon^{1/3}
       &\lesssim 
       \frac{1}{K^{2/3}} 
            + \frac{(AH^2\iota)^{1/3}}{N^{1/6}}
       \\
       &\lesssim 
       \frac{(AH^2\iota)^{1/3}}{K^{2/3}} \tag{$N=K^4$}
       \\
       &\lesssim 
       \frac{AH^2 \iota}{n^{2/15}} \tag{$n\lesssim K^5 H \log(KN/\delta)$}.
    \end{align*}
    This implies
    \begin{align*}
        V(\hat \pi) - V^\star(\Pi)
        &= O\br{\frac{ \cvgd^2 A^2H^5 \iota}{n^{2/15} } 
        +\epsvgd
        }
    ,
    \end{align*}
	which completes the proof.
\end{proof}

\subsection{PMD}
\label{sec:sc_pmd}
PMD in the learning setup is given in \cref{alg:sc_pmd}.
Since $L^2$-PMD and $L^2$-SDPO coincide (they are the exact same algorithm),
the sample complexity of PMD with the Euclidean action regularizer follows form arguments identical to those given for SDPO in the proof of \cref{thm:sc_sdpo}, but with the $L^2$-action norm instead of the $L^1$-norm. This leads to slightly worse dependence on the action set cardinality $A$, but otherwise to the same guarantee.
\begin{theorem}\label{thm:sc_pmd}
	Let $\Pi$ be a convex policy class that satisfies $(\cvgd, \epsvgd)$-VGD w.r.t.~$\cM$, and assume $\gamma \leq 1/2$.
	Then for any $n\geq 1$, 
	there exists a choice of parameters $K, N, \eta, \epsexpl$ such that
 \cref{alg:sc_pmd} executed with $\epserm=0$ and the $L^2$ action regularizer guarantees for any $\delta>0$, that w.p.~$\geq 1-\delta$ the total number of environment time steps $\leq n$, and the output policy satisfies 
	\begin{align*}
        V(\hat \pi) - V^\star(\Pi)
        &= O\br{
        \frac{\cvgd^2 A^3 H^{7} \sqrt{\log\frac{ n \cC(\Pi)}{\delta}}}{ n^{2/15}}
        + \epsvgd},
    \end{align*}
    where  $\cC(\Pi) \eqq \cN(\epsnet, \Pi, \norm{\cdot}_{\infty, 1})$ is the $\epsnet$-covering number of $\Pi$
    and 
   	\begin{aligni*}
    	\epsnet 
    	= \Omega\br{\tfrac{ 1}{ \log(n/\delta) n }}
    	.
    \end{aligni*}
\end{theorem}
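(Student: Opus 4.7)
The proof will closely parallel that of \cref{thm:sc_sdpo}, exploiting the fact that $L^2$-PMD and $L^2$-SDPO are the same algorithm. Indeed, the Bregman divergence induced by the regularizer $R(p) = \tfrac12 \norm{p}_2^2$ is $B_R(p,q) = \tfrac12\norm{p-q}_2^2$, so \cref{alg:sc_pmd} with the $L^2$ regularizer and \cref{alg:sc_sdpo} with the $L^2$ action norm define identical updates. Hence the plan is to repeat the sample-complexity template of \cref{thm:sc_sdpo} but instantiated with the $L^2$ action norm rather than the $L^1$ one.

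First, I will verify the preconditions of \cref{lem:update_steps_ge} for $\Div(p,q) = \tfrac12\norm{p-q}_2^2$: boundedness $D = O(1)$ follows from $\max_{p,q\in\Delta(\cA)}\tfrac12\norm{p-q}_2^2 \leq 1$, and Lipschitz continuity w.r.t.~$\norm{\cdot}_1$ with constant $L = O(1)$ follows from the same telescoping computation used in the $L^1$ case, combined with $\norm{\cdot}_2 \leq \norm{\cdot}_1$. Thus \cref{lem:update_steps_ge} yields, with probability $\geq 1-\delta$ and for all $k$,
\begin{align*}
\pi^{k+1} \in \argmineps[\epserr]_{\pi\in\Pi^\epsexpl}\!\cbr{\E_{s\sim\mu^k}\!\sbr{\abr{HQ^k_s,\pi_s} + \tfrac{1}{2\eta}\norm{\pi_s - \pi_s^k}_2^2}}, \qquad \epserr = \epserm + \epsgen,
\end{align*}
with $\epsgen = O\!\br{\tfrac{AH^2}{\eta}\sqrt{\log(KN\cC(\Pi)/\delta)/N}}$, where $\cC(\Pi) = \cN(\epsnet,\Pi,\norm{\cdot}_{\infty,1})$ and each episode has length $\tildeH \eqq O(H\log(KN/\delta))$.

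Next, I will invoke \cref{thm:csd} with the local norm $\pi\mapsto\norm{\cdot}_{L^2(\mu^\pi),2}$. The preconditions to check are the ones already verified in the proof of \cref{thm:pmd}: $\beta = O(A^{3/2}H^3/\sqrt{\epsexpl})$ from \cref{lem:value_local_smoothness}, $M = O(\sqrt{A}H^2)$ from \cref{lem:value_local_lip} with $\norm{Q^\pi_s}_2 \leq \sqrt{A}H$, $D = O(1)$ from the $L^2$ diameter of the simplex, and the VGD parameters on $\Pi^\epsexpl$ from \cref{lem:epsgreedy_vgd}. Setting $\eta = 1/\beta = \Theta(\sqrt{\epsexpl}/(A^{3/2}H^3))$ gives
\begin{align*}
V(\pi^{K+1}) - V^\star(\Pi) \lesssim \frac{\cvgd^2 A^{5/2} H^7}{\sqrt{\epsexpl}\,K} + \frac{\cvgd A^{5/4}H^{7/2}}{\epsexpl^{1/4}}\sqrt{\epserr} + \cvgd A H^2\epsexpl + \epsvgd.
\end{align*}

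Finally, I will balance parameters exactly as in the proof of \cref{thm:sc_sdpo}: take $\epsexpl = \Theta(1/K^{2/3})$ to equalize the first and third terms, substitute $\epserr \approx \epsgen$ since $\epserm = 0$, and then choose $N = K^4$ to make the generalization term scale like $1/K^{2/3}$ after the $K^{1/6}$ multiplier. The constraint $n \geq KN\tildeH$ becomes $K = \Theta((n/\tildeH)^{1/5})$, hence $K^{-2/3} = O(\tildeH^{2/15}/n^{2/15})$. Absorbing $\tildeH^{2/15}$ into a $\sqrt{H\log(n/\delta)}$ factor and combining all polynomial $A,H$ dependencies yields the claimed bound. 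The main routine step will be the arithmetic in this final balancing; no new conceptual ingredient is required beyond those already deployed in \cref{thm:sc_sdpo}, and the final covering-net radius $\epsnet = \Omega(1/(\log(n/\delta)\,n))$ is inherited verbatim from $\epsnet = \Omega(1/(\log(KN/\delta)\sqrt{N}))$ with $N = K^4$ and $K = n^{1/5}$.
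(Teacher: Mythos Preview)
Your proposal is correct and follows exactly the approach indicated by the paper: the paper's entire argument for this theorem is the remark that $L^2$-PMD and $L^2$-SDPO coincide, so one repeats the proof of \cref{thm:sc_sdpo} verbatim with the $L^2$ action norm in place of the $L^1$ norm, picking up the $L^2$ smoothness and Lipschitz constants already computed in the proof of \cref{thm:pmd}. Your write-up supplies more of the routine detail (the Lipschitz check for $\Div(p,q)=\tfrac12\norm{p-q}_2^2$, the explicit parameter balancing) than the paper does, but the route is identical.
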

Notably, the analysis of $L^2$-PMD through the SDPO perspective gives better sample complexity than we would have obtained through the Bregman-proximal method analysis \cref{thm:breg_prox}; roughly speaking this is the case because analysis based on the Bregman divergence hinges on approximate optimality conditions rather than sub-optimality in function values. This leads to dependence of $\epsgen^{1/4}$ rather than $\sqrt \epsgen$, which further leads to inferior sample complexity.
For action regularizers other than $L^2$, sample complexity of PMD may be derived again using similar arguments to those of \cref{thm:sc_sdpo} but now combined with \cref{thm:breg_prox}. A little more care is needed in the choice of parameters, as smoothness of the action regularizer (needed both in \cref{thm:breg_prox} and in \cref{lem:update_steps_ge}) is commonly inversely related to $\epsexpl$. As a result, with the currently known techniques for the iteration complexity upper bound, the sample complexity upper bounds for essentially any action regularizer other than $L^2$, will be worse than those of the $L^2$ case w.r.t.~dependence on both $n$ and $A$.

\begin{algorithm}[tb]
   \caption{PMD in the learning setup}
   \label{alg:sc_pmd}
\begin{algorithmic}
   \STATE {\bfseries Input:} 
   $K\geq 1, N\geq 1, \eta > 0, \epsexpl >0, \epserm>0, \Pi\in \Delta(\cA)^\cS$, and action regularizer $R \colon\R^\cA \to \R$.
   \STATE Initialize $\pi^1 \in \Pi^\epsexpl$
   \FOR{$k=1$ {\bfseries to} $K$}
   \STATE Rollout $\pi^k$ for $N$ episodes via \cref{alg:sampler}, obtain $\cD_k = \cbr{s_i^k, \widehat Q_{s_i^k}^k}_{i=1}^N$.
   \STATE Update
   \begin{aligni*}
        \pi^{k+1}
        \gets 
        \argmineps[\epserm]_{\pi\in \Pi^\epsexpl}\cbr{
        \widehat \Phi_k(\pi) \eqq
        \frac1N \sum_{s \in \cD_k} {
            \abr{H \widehat Q^{k}_{s}, \pi_s} 
            + \frac1{\eta} B_R{\pi_s, \pi_s^k}
        }}
   \end{aligni*}
   \ENDFOR
   \RETURN $\hat \pi \eqq \pi^{K+1}$
\end{algorithmic}
\end{algorithm}

%

\subsection[Proof of good event lemma]{Proof of \cref{lem:update_steps_ge}}
\label{sec:update_steps_ge_proof}
\begin{lemma}\label{lem:sampler}
    \cref{alg:sampler}
    returns $s_t, \widehat Q^\pi_{s_t}$ that satisfy
    (i) $s_t\sim \mu^\pi$; 
    (ii) $\E \sbr[b]{ \widehat Q^\pi_{s_t} \mid s_t} = Q^\pi_{s_t}$.
    Furthermore, for any $\delta>0$, w.p.~$\geq 1-\delta$, 
    the algorithm terminates after no more than $\frac{2}{1-\gamma}\log \frac2{\delta}$ time steps and it holds that
    \begin{aligni*}
    \av{\widehat Q^\pi_{s_t, a_t}} \leq HA\log(2/\delta).
    \end{aligni*}
\end{lemma}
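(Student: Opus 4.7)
The plan is to verify each of the four claims in turn by direct, elementary computation; the argument is essentially bookkeeping around two independent geometric waiting times, and none of the steps should pose real difficulty.

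For claim (i), that $s_t \sim \mu^\pi$, I would unpack the acceptance mechanism in \cref{alg:sampler}. The rollout reaches time $t$ without accepting with probability $\gamma^t$, is in state $s$ at that moment with probability $\Pr(s_t = s \mid s_0 \sim \rho_0, \pi)$, and then accepts independently with probability $1-\gamma$. Marginalizing over the acceptance time $t$ yields
\[
\Pr(\text{accepted state} = s) = (1-\gamma) \sum_{t=0}^\infty \gamma^t \Pr(s_t = s \mid s_0 \sim \rho_0, \pi) = \mu^\pi(s),
\]
by the definition of the discounted occupancy measure.

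For claim (ii), that $\widehat Q^\pi_{s_t}$ is an unbiased estimate of $Q^\pi_{s_t}$, I would condition on the accepted pair $(s_t, a_t)$ and observe that the post-acceptance rollout, terminated at each step independently with probability $1-\gamma$, visits step $t' \geq t$ with probability exactly $\gamma^{t'-t}$. Therefore
\[
\E\!\left[ \sum_{t'=t}^T r(s_{t'}, a_{t'}) \,\Big|\, s_t, a_t \right] = \sum_{t' \geq t} \gamma^{t'-t}\, \E[r(s_{t'}, a_{t'}) \mid s_t, a_t, \pi] = Q^\pi_{s_t, a_t}.
\]
Then taking an outer expectation over $a_t \sim \mathrm{Unif}(\cA)$ and using $\Pr(a_t = a) = 1/A$ gives $\E[\widehat Q^\pi_{s_t, a} \mid s_t] = A \cdot (1/A) \cdot Q^\pi_{s_t, a} = Q^\pi_{s_t, a}$, as required.

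For the high-probability bounds, let $T_1$ denote the number of pre-acceptance steps and $T_2$ the number of post-acceptance steps before termination; both are geometric with parameter $1-\gamma$. Using the elementary inequality $\log(1/\gamma) \geq 1-\gamma = 1/H$, we have $\Pr(T_i > c H \log(1/\delta')) \leq \delta'$ for any $\delta' > 0$ and a suitable absolute constant $c$. Applying this with $\delta' = \delta/2$ for $i=1,2$ and union-bounding shows that the total episode length $T_1 + T_2$ is $O(H \log(1/\delta))$ with probability $\geq 1-\delta$, which, after adjusting universal constants, matches the stated $\tfrac{2}{1-\gamma}\log(2/\delta)$ bound. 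On this same event, since $r(\cdot,\cdot) \in [0,1]$, we get $|\widehat Q^\pi_{s_t, a_t}| = A \sum_{t'=t}^T r(s_{t'}, a_{t'}) \leq A \cdot T_2 \leq A H \log(2/\delta)$. The only real ``obstacle'' is keeping constants consistent with the statement while splitting $\delta$ between the two geometric tails; there is no substantive mathematical difficulty beyond (a) the definition of $\mu^\pi$, (b) iterated expectation combined with geometric truncation, and (c) a standard geometric tail bound.
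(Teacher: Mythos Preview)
Your proposal is correct and follows essentially the same approach as the paper: both establish (i) by summing the acceptance probability over $t$ to recover the definition of $\mu^\pi$, establish (ii) by conditioning on $(s_t,a_t)$ and using that the geometric termination reproduces the discounted sum, and obtain the high-probability claims by bounding the two geometric tails (acceptance and termination) with $\delta/2$ each via $\gamma^t \le e^{-(1-\gamma)t}$. The only cosmetic difference is that you hedge with ``a suitable absolute constant $c$'' and ``after adjusting universal constants,'' whereas the paper's computation yields the stated constants exactly (with $c=1$); tightening this would make your write-up match the statement without any adjustment.
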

\begin{proof}[Proof of \cref{lem:sampler}]
	First, note that (i) follows directly from the definition of the discounted occupancy measure \cref{def:occmeas_rho0}. 
    Indeed, by definition of \cref{alg:sampler}, for any $s\in \cS$:
    \begin{align*}
        \Pr(s \text{ is accepted by \cref{alg:sampler} on step } t)
        &= \gamma^{t}(1-\gamma)\Pr(s_t = s \mid \pi, s_0\sim \rho_0)
        \\
        \implies
        \Pr(s \text{ is accepted by \cref{alg:sampler}})
        &= (1-\gamma)\sum_{t=0}^\infty\gamma^{t}
        \Pr(s_t = s \mid \pi, s_0\sim \rho_0)
        =\mu^\pi(s).
    \end{align*}
    Further, let $\mathbf {alg}$ denote \cref{alg:sampler}, and then 
    \begin{align*}
    \E_{\mathbf {alg}}\sbr{\widehat Q^\pi_{s_t, a_t} \mid s_t, a_t}
    &= 
    A\E_{\mathbf {alg}}\sbr{\sum_{t'=t}^T r(s_{t'}, a_{t'}) \mid s_t, a_t}
    \\
    &= A\sum_{t'=t}^\infty\gamma^{t-t'} 
    \E_\pi \sbr{r(s_{t'}, a_{t'}) \mid s_t, a_t}
    \\
    &= A\sum_{t'=t}^\infty\gamma^{t-t'} 
    \E_\pi \sbr{r(x_h, u_h) \mid x_0=s_t, u_0=a_t},
    \end{align*}
    where in the last expression, expectation is w.r.t.~$x_{h+1}\sim \P(\cdot|x_h, u_h)$ for $h\geq1$, and $u_h \sim \pi(\cdot|x_h)$ for $h\geq 2$.
    Now,
    \begin{align*}
        \sum_{t'=t}^\infty\gamma^{t-t'} 
    \E_\pi \sbr{r(x_h, u_h) \mid x_0=s_t, u_0=a_t}
    &=\sum_{h=0}^\infty\gamma^{h} 
    \E_\pi \sbr{r(x_h, u_h) \mid x_0=s_t, u_0=a_t}
    \\
    &=\E_\pi \sbr{\sum_{h=0}^\infty\gamma^{h} 
        r(x_h, u_h) \mid x_0=s_t, u_0=a_t}
    \\
    &= Q_{s_t, a_t}^\pi,
    \end{align*}
    hence for all $a\in \cA$,
    \begin{align*}
    \E_{\mathbf {alg}}\sbr{\widehat Q^\pi_{s_t, a} \mid s_t}
    =
    \Pr(a_t = a)\E_{\mathbf {alg}}\sbr{\widehat Q^\pi_{s_t, a_t} \mid s_t, a_t=a}
    = \Pr(a_t = a) A Q_{s_t, a}^\pi
    =
    Q_{s_t, a}^\pi,
    \end{align*}
    which proves (ii).

	For the second part, observe that the
    acceptance event occurs w.p.$1-\gamma$ at each time step, therefore the probability for acceptance to not occur in the first $t$ times steps is $\gamma^t$, and hence probability of acceptance by time $t$ is $1-\gamma^t$. 
    We have $\gamma^t \leq e^{-(1-\gamma)t}\leq \delta$
    for $t\geq \frac{1}{1-\gamma}\log \frac1\delta$,
    hence, w.p.~$\geq 1-\delta/2$ acceptance occurs before time step $t_\delta \eqq \frac{1}{1-\gamma}\log \frac2{\delta}$.
    Same goes for the termination event, hence, the episode terminates after $2t_\delta$ time steps w.p. $\geq 1-\delta$.
	
	Finally, the bound on $\av[b]{\widehat Q^\pi_{s_t, a_t}}$ follows from the termination event and definition of $\widehat Q^\pi_{s_t, a_t}$ in the algorithm.
\end{proof}
Our proof makes use of the notion of sub-exponential norm \citep{vershynin2018high} of a random variable $X$:
\begin{align}
	\norm{X}_{\psi_1} \eqq \inf\cbr{\alpha > 0 : \E e^{|X|/\alpha}\leq 2}.
\end{align}

\begin{lemma}\label{lem:geom_subexp}
Assume $T$ is a geometric random variable, $T \sim {\rm Geom(p})$, i.e., $\Pr(T = t) = (1-p)^{t-1}p$ for $1\leq t\in \N$.
Then, for $q\eqq\max\cbr{p, 1-p}$:
$$
\norm{T}_{\psi_1}\leq 1/\ln\br{1+\tfrac{1-q}{4q}}.
$$	
\end{lemma}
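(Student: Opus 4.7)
The plan is to reduce everything to a direct moment generating function calculation. By definition of the sub-exponential norm, it suffices to exhibit an explicit $\alpha>0$ such that $\mathbb{E}\, e^{T/\alpha}\leq 2$ and show that any $\alpha\geq 1/\ln(1+(1-q)/(4q))$ works. Since $T$ is geometric, its MGF has a simple closed form: for $\lambda < \ln(1/(1-p))$,
\[
\mathbb{E}\, e^{\lambda T} \;=\; \sum_{t=1}^\infty e^{\lambda t}(1-p)^{t-1}p \;=\; \frac{p\, e^{\lambda}}{1-(1-p)e^{\lambda}}.
\]
Setting $\lambda = 1/\alpha$ and requiring the right-hand side to be $\leq 2$, elementary algebra (clear denominators, combine like terms) gives the clean criterion $e^{1/\alpha}\leq 2/(2-p)$, equivalently
\[
\alpha \;\geq\; \frac{1}{\ln\!\bigl(1+\tfrac{p}{2-p}\bigr)}.
\]

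It remains to show that the stated bound $1/\ln(1+(1-q)/(4q))$ is at least this sharp threshold, i.e.,
\[
\frac{1-q}{4q} \;\leq\; \frac{p}{2-p}.
\]
Here I would split on $p\leq 1/2$ versus $p>1/2$, since $q=\max\{p,1-p\}$.  In the first case $q=1-p$, so $(1-q)/(4q)=p/(4(1-p))$ and the desired inequality reduces to $2-p\leq 4(1-p)$, i.e.\ $3p\leq 2$, which holds for all $p\leq 1/2$. In the second case $q=p$, so $(1-q)/(4q)=(1-p)/(4p)$; cross-multiplying gives the quadratic $3p^2+3p-2\geq 0$, whose positive root is $(\sqrt{33}-3)/6\approx 0.457$, and this is comfortably below $1/2$, so the inequality holds throughout $p>1/2$.

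I don't expect any serious obstacle: the MGF of a geometric is standard, and the only subtlety is that the stated bound is given in terms of the symmetrized parameter $q$ rather than directly in terms of $p$, which forces the two-case verification above. The mild slack between $(1-q)/(4q)$ and $p/(2-p)$ is what allows the single clean formula in $q$ to cover both regimes simultaneously.
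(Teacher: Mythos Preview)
Your proposal is correct and follows essentially the same approach as the paper: both compute $\mathbb{E}\,e^{T/\alpha}$ via the geometric-series MGF and verify it is at most $2$ at the stated $\alpha$. The only difference is cosmetic: the paper plugs in $\alpha=1/\ln(1+(1-q)/(4q))$ directly and bounds the series term-by-term (using $q\ge 1-p$ to get $(1-p)e^{1/\alpha}\le 1-3p/4$, then $(4/3)(1+(1-q)/(4q))\le 5/3$), whereas you first solve for the sharp threshold $e^{1/\alpha}\le 2/(2-p)$ and then check via a two-case argument that the symmetrized quantity $(1-q)/(4q)$ sits below $p/(2-p)$. Your route yields the exact $\psi_1$ threshold as a byproduct, which is a mild bonus.
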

\begin{proof}
	We have:
\begin{align*}
	\E e^{T/\alpha}
	&= \sum_{t=1}^\infty (1-p)^{t-1}p e^{t/\alpha}
	\\
	&= pe^{1/\alpha}\sum_{t=1}^\infty (1-p)^{t-1} e^{(t-1)/\alpha}
	\\
	&= pe^{1/\alpha}\sum_{t=0}^\infty \br{(1-p)e^{1/\alpha}}^t
\end{align*}
Let $q\eqq\max\cbr{p, 1-p}$, and set
$\alpha=1/\ln\br{1+\tfrac{1-q}{4q}}$, then,
\begin{align*}
	e^{1/\alpha} = 1+\frac{1-q}{4q}.
\end{align*}
Now,
\begin{align*}
	(1-p)e^{1/\alpha}
	= 1-p+(1-p)\frac{1-q}{4q}
	\leq 1-p+\frac{1-q}{4}
	\leq 1-p+\frac{p}{4},
\end{align*}
hence
\begin{align*}
	\sum_{t=0}^\infty \br{(1-p)e^{1/\alpha}}^t
	\leq \frac{1}{1-\br{1-p+\frac{p}{4}}}
	= \frac{4}{3 p}.
\end{align*}
Combining with our previous derivation, we obtain
\begin{align*}
	\E e^{T/\alpha}
	&\leq pe^{1/\alpha}\frac{4}{3 p}
	=\frac43\br{1+\frac{1-q}{4q}}
	\leq \frac43\br{1+\frac{1}{4}}
	=\frac53 \leq 2.
\end{align*}
\end{proof}

\begin{lemma}\label{lem:subexp_concentration}
	Assume $X$ is a random variable that satisfies $\norm{X}_{\psi_1}\leq R$. Then for $N$ independent samples of $X_1, \ldots, X_N$, we have for any $\epsilon\leq R$:
\begin{align*}
	\Pr\br{\av{\frac{1}{N}\sum_{i=1}^N T_i - \E T }\geq \epsilon}
	&\leq2e^{-c \frac{N \epsilon^2}{R^2} },
\end{align*}
where $c>0$ is an absolute numerical constant.
\end{lemma}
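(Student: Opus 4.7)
This is the standard sub-exponential Bernstein-type inequality, so the plan is to follow the classical MGF / Chernoff route, only being careful about the regime $\epsilon \leq R$ that is built into the statement. The first step will be to convert the $\psi_1$ norm bound into a usable MGF bound on the centered variable $Y \eqq X - \E X$. Concretely, I would first show $\|Y\|_{\psi_1} \leq 2R$ (centering doubles the $\psi_1$ norm at worst, via the triangle inequality together with $\|\E X\|_{\psi_1} \leq \|X\|_{\psi_1}$), and then use the standard equivalence for sub-exponential variables to conclude that there exist absolute constants $c_1, c_2 > 0$ such that
\begin{equation*}
    \E\!\left[e^{\lambda Y}\right] \leq e^{c_1 R^2 \lambda^2}
    \qquad \text{for all } |\lambda| \leq c_2/R .
\end{equation*}
This equivalence is routine: bound $e^{\lambda Y}$ by its Taylor series, use $\E |Y|^k \leq k!\, (2R)^k$ (which follows from $\E e^{|Y|/(2R)} \leq 2$), and sum the resulting geometric-like tail, valid precisely for $|\lambda|$ small enough.

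The second step is the Chernoff bound. For any $\lambda \in (0, c_2/R]$ and $\epsilon > 0$,
\begin{equation*}
    \Pr\!\left( \tfrac{1}{N}\sum_{i=1}^N Y_i \geq \epsilon \right)
    \leq e^{-\lambda N \epsilon}\, \prod_{i=1}^N \E e^{\lambda Y_i}
    \leq \exp\!\left( -\lambda N \epsilon + c_1 N R^2 \lambda^2 \right) .
\end{equation*}
The unconstrained optimizer is $\lambda^\star = \epsilon/(2c_1 R^2)$. Here the assumption $\epsilon \leq R$ pays off: it implies $\lambda^\star \leq 1/(2c_1 R) \leq c_2/R$ provided we choose $c_2$ large enough (or, equivalently, the hypothesis $\epsilon \leq R$ ensures $\lambda^\star$ lies in the admissible range without entering the ``linear'' regime of Bernstein's inequality). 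Plugging $\lambda^\star$ back in yields $\exp(-N \epsilon^2 / (4 c_1 R^2))$, i.e., the claimed bound for a single tail with $c = 1/(4c_1)$.

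The third and final step is simply to repeat the argument with $-Y_i$ in place of $Y_i$ (the $\psi_1$ norm is symmetric, so the same MGF bound holds), and union bound the two tails to pick up the factor of $2$ in front of the exponential. I do not expect any real obstacle here; the only subtlety is verifying that $\epsilon \leq R$ keeps us in the sub-Gaussian regime of the sub-exponential tail (so the exponent is quadratic in $\epsilon$ rather than linear), which is exactly why the hypothesis is stated this way. All constants can be absorbed into the single absolute constant $c$ appearing in the statement.
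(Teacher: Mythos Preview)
Your proposal is correct and follows essentially the same approach as the paper: center the variable (losing only a constant factor on the $\psi_1$ norm), apply the sub-exponential Bernstein inequality, and use the hypothesis $\epsilon \leq R$ to land in the sub-Gaussian (quadratic) regime of the tail. The only difference is that the paper cites Bernstein's inequality directly from \citet{vershynin2018high} as a black box, whereas you re-derive it via the MGF/Chernoff route; this is a stylistic choice rather than a different argument.
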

Centering only costs an absolute constant factor \citep{vershynin2018high}, thus
$\norm{X-\E x}_{\psi_1} \leq \tilde R$
for $\tilde R= c_0 R$. Now, the Bernstein inequality for sums of independent sub-exponential random variables (Theorem 2.8.1 of \citealp{vershynin2018high}) yields:
\begin{align*}
	\Pr\br{\av{\frac{1}{N}\sum_{i=1}^N X_i - \E X }\geq \epsilon}
	&\leq 
	2\exp\sbr{-c_1\min\br{
		\frac{N^2\epsilon^2}{N \tilde R^2}, \frac{N \epsilon}{\tilde R}}
	}
	\\
	&=2\exp\sbr{-c_1 N \min\br{
		\frac{\epsilon^2}{ \tilde R^2}, \frac{ \epsilon}{\tilde R}}
	}
	\\
	&=2\exp\sbr{-(c_1/c_0) \frac{\epsilon^2 N }{ R^2}}
\end{align*}
where we use the assumption that $\epsilon\leq R \leq \tilde R$.

\begin{lemma}[Empirical objective concentration]
\label{lem:phi_k_conc}
	For a given fixed $k$ and a given fixed policy $\pi$, 
	we have that for any $\delta'>0$, w.p.~$\geq1-\delta'$ it holds that:
    \begin{align*}
        \av{\Phi_k(\pi) - \widehat \Phi_k(\pi)}
        \leq \frac{C A H^2 D}{\eta}\sqrt{
        \frac{ \log\frac{2}{\delta'}}{N}
        }
        ,
    \end{align*}
    where $C>0$ is a universal numerical constant and $\Phi_k, \widehat \Phi_k$ are defined in \cref{eq:concentration_hat_phi_k,eq:concentration_phi_k} of \cref{lem:sampler}.
\end{lemma}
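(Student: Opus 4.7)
The plan is to express $\widehat \Phi_k(\pi)$ as an empirical mean of i.i.d.\ random variables and apply the sub-exponential Bernstein bound (\cref{lem:subexp_concentration}). Let $\cD_k = \{(s_i, \widehat Q^k_{s_i})\}_{i=1}^N$ denote the independent samples returned by $N$ invocations of \cref{alg:sampler}, and define
\begin{align*}
X_i \eqq \abr{H \widehat Q^k_{s_i}, \pi_{s_i}} + \frac{1}{\eta}\Div(\pi_{s_i}, \pi^k_{s_i}),
\end{align*}
so that $\widehat \Phi_k(\pi) = \frac{1}{N}\sum_{i=1}^N X_i$. By \cref{lem:sampler}, $s_i \sim \mu^k$ marginally and $\E[\widehat Q^k_{s_i}\mid s_i] = Q^k_{s_i}$; hence, by the tower property, $\E[X_i] = \Phi_k(\pi)$, so that the $X_i$'s are i.i.d.\ unbiased estimates of $\Phi_k(\pi)$.

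The main technical step is controlling the sub-exponential norm $\norm{X_i}_{\psi_1}$. The regularizer contribution $\frac{1}{\eta}\Div(\pi_{s_i}, \pi^k_{s_i})$ is bounded deterministically by $D/\eta$. For the value term, recall that by construction of \cref{alg:sampler}, only the entry at the uniformly drawn action $a_t$ is non-zero in $\widehat Q^k_{s_i}$, with magnitude $\av{\widehat Q^k_{s_i, a_t}} = \av{A\sum_{t'=t}^T r(s_{t'}, a_{t'})} \leq A\tau$, where $\tau \eqq T - t + 1$. Since each step after acceptance terminates with probability $1 - \gamma = 1/H$, $\tau$ is geometric with success probability $1/H$, so \cref{lem:geom_subexp} yields $\norm{\tau}_{\psi_1} = O(H)$. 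Combining this with $\pi_{s_i, a_t} \leq 1$,
\begin{align*}
\av{\abr{H \widehat Q^k_{s_i}, \pi_{s_i}}} = H\pi_{s_i, a_t}\av{\widehat Q^k_{s_i, a_t}} \leq HA\tau, \qquad \text{so} \qquad \norm{\abr{H \widehat Q^k_{s_i}, \pi_{s_i}}}_{\psi_1} \lesssim AH^2.
\end{align*}
Under the standing assumptions $D \geq 1$ and $\eta \leq 1$ (together with $AH^2 \geq 1$), both components are upper bounded by $AH^2 D / \eta$, giving $\norm{X_i}_{\psi_1} \lesssim AH^2 D / \eta$.

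With this bound in hand, \cref{lem:subexp_concentration} applied to $X_1, \ldots, X_N$ with $R \eqq C \cdot AH^2 D / \eta$ yields, for any $\epsilon \leq R$,
\begin{align*}
\Pr\br{\av{\widehat \Phi_k(\pi) - \Phi_k(\pi)} \geq \epsilon} \leq 2 \exp\br{-c N \epsilon^2 / R^2}.
\end{align*}
Setting the right-hand side equal to $\delta'$ and solving for $\epsilon$ produces $\epsilon = O\br{R \sqrt{\log(2/\delta') / N}}$, which matches the claimed bound. The main obstacle is the sub-exponential norm computation for the $Q$-term: because the trajectory length in \cref{alg:sampler} is a priori unbounded, any worst-case deterministic bound would be far too loose, and one must instead exploit the geometric tail of the rollout length via \cref{lem:geom_subexp} to obtain the right $O(H)$ scaling rather than a polylog-in-$\delta$ dependence.
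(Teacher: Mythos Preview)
Your proposal is correct and follows essentially the same approach as the paper: write $\widehat\Phi_k(\pi)$ as an i.i.d.\ mean, bound the $\psi_1$-norm of each summand by splitting into the bounded divergence term and the $Q$-term (controlled via the geometric tail of the post-acceptance rollout length through \cref{lem:geom_subexp}), and then invoke the sub-exponential Bernstein inequality \cref{lem:subexp_concentration}. The only cosmetic difference is that the paper combines the two contributions as $D/\eta + 8AH^2$ and absorbs them into $AH^2D/\eta$ inside the exponent, whereas you invoke $D\geq 1,\ \eta\leq 1$ up front to reach the same bound.
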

\begin{proof}
    Denote:
    \begin{align*}
        \hat \ell^k(\pi; s) \eqq 
            H\abr[b]{\widehat Q^k_s, \pi_s} + \frac1{\eta}\Div\br{\pi_s, \pi^k_s},
    \end{align*}
    and note that by \cref{lem:sampler}, we have
    \begin{align*}
        \Phi_k(\pi) = 
        \E_{(s, \widehat Q_s^k) \sim {\rm sampler}(\pi^k)}
        \sbr{
            \hat \ell^k(\pi; s)
        },
    \end{align*}
    where ``${\rm sampler}$'' denotes \cref{alg:sampler}.
    Now, for $(s, \widehat Q_s^k) \sim {\rm sampler}(\pi^k)$, we consider the RV
    $$
    X = \abr{\widehat Q^k_s, \pi_s} + \frac1\eta \Div(\pi_s, \pi^k_s).
    $$
    The divergence term is bounded by $D/\eta$, hence $\norm{\frac1\eta \Div(\pi_s, \pi^k_s)}_{\psi_1}\leq D/\eta$ follows immediately.
    Further, the RV $\frac1A\widehat Q_{s, a}^k$ for the action $a$ accepted in \cref{alg:sampler} is dominated by a geometric RV $T\sim {\rm Geom(p=1-\gamma)}$, therefore $\norm{H \widehat Q_{s, a}^k}_{\psi_1}
    	\leq H A \norm{T}_{\psi_1}$.
    By \cref{lem:geom_subexp} and our assumption that $\gamma \leq 1/2$,
    \begin{align*}
    	\norm{T}_{\psi_1} \leq \frac1{\ln\br{1+\tfrac{1-\gamma}{4\gamma}}}
    	\leq \frac1{\tfrac{1-\gamma}{8\gamma}}
    	\leq \frac{8}{1-\gamma} = 8 H.
    \end{align*}
	Now, again using that $\norm{\cdot}_{\psi_1}$ is a norm, we obtain:
	\begin{align*}
			\norm{H\abr{\widehat Q^k_s, \pi_s } + \frac1\eta\Div(\pi_s, \pi^k_s)}_{\psi_1}
			\leq D/\eta + 8H^2A.
	\end{align*}
	Now, by \cref{lem:subexp_concentration}, we have
	\begin{align*}
        \Pr\br{\av{\widehat \Phi_k(\pi)
        - \Phi_k(\pi)}\geq \epsilon}
        \leq 
        2 e^{- c' \frac{N \epsilon^2}{A^2 H^4 + D^2/\eta^2}}
        \leq
        2 e^{- c' \frac{N \epsilon^2 \eta^2}{ A^2 H^4 D^2}}.
    \end{align*}
    for an appropriate universal constant $c'$.
    Letting $\delta=2 e^{- c' \frac{N \epsilon^2 \eta^2}{ A^2 H^4 D^2}}$, the result follows.
\end{proof}
We are now ready for the proof of \cref{lem:update_steps_ge}.
\begin{proof}[Proof of \cref{lem:update_steps_ge}]
    Consider the ``good event'' described next.
    Let $\delta > 0$ and denote
    \begin{align*}
        \widetilde H &\eqq  H\log(2 K N/\delta),
        \\
        \epsnet &\eqq \frac{C AH^2D}{\sqrt{ N} (A \widetilde H H  + L/\eta)},
    \end{align*}
    where $C$ is specified by \cref{lem:phi_k_conc}.
    Further, let ${\rm Net}(\epsnet, \widetilde \Pi) \eqq {\rm Net}(\epsnet, \widetilde \Pi, \norm{\cdot}_{\infty, 1})$ be an $\epsnet$-cover of $\widetilde \Pi$ w.r.t.~$\norm{\cdot}_{\infty, 1}$ of size $\cN(\epsnet, \widetilde \Pi) \eqq \cN(\epsnet, \widetilde \Pi, \norm{\cdot}_{\infty, 1})$.
    Consider the following events:
    \begin{enumerate}
        \item For all $k\in [K], i\in [N]:$ $\widehat Q^k_{s_i^k, a_i^k} 
        \leq \widetilde H A
        $, and the corresponding  episode length $\leq \widetilde H$
        \item For all $k\in [K]$:
        \begin{align*}
            \forall \pi\in {\rm Net}(\epsnet, \widetilde \Pi):
    	\quad 
        |\widehat \Phi_k(\pi) - \Phi_k(\pi)| 
        \leq \frac{C A H^2 D}{\eta} \sqrt{
            \frac{\log\frac{2KN\cN(\epsnet, \widetilde \Pi)}{\delta}}{ N}
        }
        \end{align*}
    \end{enumerate}
    By \cref{lem:sampler} and the union bound, event (1) holds w.p.~$\geq1-\delta/2$, and by \cref{lem:phi_k_conc} and the union bound, event (2) holds w.p.~$\geq 1-\delta/2$.
    Hence, the good event holds w.p.~$\geq 1-\delta$.
    Proceeding, we assume the good event holds.
    We have that for any $\pi, \pi'$,
    \begin{align*}
        \av{\Phi_k(\pi) - \Phi_k(\pi')}
        &= \av{
        \E_{s\sim \mu^{\pi^k}}\sbr{
            H\abr[b]{Q^k_s, \pi_s - \pi_s'} 
            + \frac1{\eta}\br{
                \Div(\pi_s, \pi^k_s) - \Div(\pi_s', \pi^k_s)
            }
        }}
        \\
        &\leq \E_{s\sim \mu^k}\sbr{\br{H^2
        + L/\eta}\norm{\pi_s - \pi_s'}_1}
        \\
        &=
        \br{H^2 + L/\eta}\norm{\pi - \pi'}_{L^1(\mu^k),1}.
    \end{align*}
    By a similar argument, using the good event (1):
    \begin{align*}
        \av{\widehat \Phi_k(\pi) - \widehat \Phi_k(\pi')}
        &\leq 
        \br{A \widetilde H H  + L/\eta}\norm{\pi - \pi'}_{L^1(\mu^k),1}.
    \end{align*}
    Further,
    \begin{align*}
        \norm{\pi - \pi'}_{L^1(\mu^k),1}
        \leq 
        \norm{\pi - \pi'}_{\infty,1},
    \end{align*}
    hence, we have that for any $\pi\in \widetilde \Pi$, there exists $\pi'\in {\rm Net}(\epsgen, \widetilde \Pi)$ such that:
    \begin{align*}
        \av{\Phi_k(\pi) - \widehat \Phi_k(\pi)}
        &\leq 
        \av{\Phi_k(\pi) - \Phi_k(\pi')}
        +
        \av{\Phi_k(\pi') - \widehat \Phi_k(\pi')}
        +
        \av{\widehat \Phi_k(\pi') - \widehat \Phi_k(\pi)}
        \\
        &\leq 
        \frac{C A H^2 D}{ \sqrt{ N}}
        +
        \av{\Phi_k(\pi') - \widehat \Phi_k(\pi')}
        +
        \frac{C A H^2 D}{ \sqrt{ N}}
        \\
        &\leq 
        \frac{3 C A H^2 D}{\eta} \sqrt{
            \frac{\log\frac{2KN\cN(\epsilon, \widetilde \Pi)}{\delta}}{ N}
        }=:
        \epsgen/2.
    \end{align*}
    Now, let 
    $\hat \pi_\star^{k+1}=\argmin_{\pi\in \widetilde \Pi}\widehat \Phi_k(\pi)$ and 
    $\pi_\star^{k+1}=\argmin_{\pi\in \widetilde \Pi}\Phi_k(\pi)$, then we have:
    \begin{align*}
        \Phi_k(\pi^{k+1})
        &\leq \widehat \Phi_k(\pi^{k+1}) + \epsgen/2
        \\
        &\leq \widehat \Phi_k(\hat \pi^{k+1}_\star) + \epsgen/2
        + \epserm
        \\
        &\leq \widehat \Phi_k(\pi^{k+1}_\star) + \epsgen/2
        + \epserm
        \\
        &\leq \Phi_k(\pi^{k+1}_\star) + \epsgen
        + \epserm
        \\
        &= \min_{\pi\in \widetilde \Pi}\Phi_k(\pi) + \epsgen
        + \epserm
        .
    \end{align*}
    This completes the proof.
\end{proof}

\section{Experiments implementation details}
\label{sec:vgd_experiments_details}
In this section, we provide further details on the experimental setup where we evaluate the VGD condition parameters. The pseudocode used for the experiments is give in \cref{alg:vgd_eval}.
The environments we tested on consider rewards and not cost functions, therefore the code and discussion below should be understood as having the negative reward as the cost function (we opt to maintain the cost formulation here to better align with our original setup). Additionally, the environments considered are finite-horizon and the objective is undiscounted.
\begin{algorithm}[t]
   \caption{Pseudocode for VGD parameter evaluation}
   \label{alg:vgd_eval}
\begin{algorithmic}
   \STATE Initialize two actor neural networks $\Pi_{\rm actor}, \Pi_{\rm vgd}$
   \STATE Initialize $\pi^1 \in \Pi_{\rm actor}$
   \FOR{$k=1$ {\bfseries to} $K$}
   \STATE {\color{gray} // Gradient estimation phase:}
   \STATE Rollout $\pi^k$ to collect $N$ environment timesteps $\cD^k=\cbr{s^k_i}_{i=1}^N$
   
   \FOR{$s\in \cD^k, a\in \cA$}
   \STATE Rollout $n_{\rm rep}$ episodes of $\pi^k$ starting from $s, a$.
   \STATE Set $\widehat Q_{s, a}^k \gets $ average of returns from the previous step.
   \ENDFOR
   \STATE {\color{gray} // (Note: we treat each state as if it were an independent sample from $\mu^k$, even though it is not.)}
   \STATE
   \STATE {\color{gray} // Update actor:}
   
   \STATE Train $\pi^{k+1}$ for $n_{\rm epochs}$ epochs, with $n_{\rm mbs}$ mini-batches in each epoch:
   \begin{align*}
        \pi^{k+1}
        \approx 
        \argmin_{\pi\in \Pi_{\rm actor}}
        \frac1N \sum_{s\in \cD^k}
            \abr{\widehat Q^{k}_{s}, \pi_s - \pi^k_s} 
            + \frac1{2\eta} \norm{\pi_s - \pi_s^k}_2^2
   \end{align*}

    \STATE {\color{gray} // Evaluate VGD:}
   
   \STATE Initialize $\tilde \pi^{k+1} \gets \pi^{k+1}$ 
   \STATE for $n_{\rm epochs}^{\rm vgd}$ epochs, with $n_{\rm mbs}^{\rm vgd}$ mini-batches in each epoch:
   \begin{align*}
        \tilde \pi^{k+1}
        \approx 
        \argmax_{\pi\in \Pi_{\rm vgd}}
        \frac1N \sum_{s\in \cD^k}
            \abr{\widehat Q^{k}_{s}, \pi^k_s - \pi_s} 
   \end{align*}
   \STATE Estimate $\widehat H^k$ the average episode length of $\pi^k$
   \STATE Report ${\rm Grad VGD}^k$ $= \frac1N \sum_{s\in \cD^k}
            \abr{\widehat H^k\widehat Q^{k}_{s}, 
                \pi^k_s  -\tilde \pi_s^{k+1}} 
                \eqqcolon \abr{\widehat \nabla V(\pi^k), 
                \pi^k  -\tilde \pi^{k+1}}$
   \ENDFOR
   \RETURN $\hat \pi \eqq \pi^{K+1}$
\end{algorithmic}
\end{algorithm}
\paragraph{Evaluating ``Sub Optimality'' at iteration $k$.}
For each (seed, environment) combination,
after the execution was concluded, we take the maximum value attained by the actor iterates, $\widehat V^\star = \min_{k\in K} \widehat V(\pi^k)$, where $\widehat V(\pi^k)$ is estimated using rollouts during experiment execution. We made sure to run the experiments for long enough (i.e., for large enough $K$) so that the algorithm converges.
Sub optimality of iteration $k$ is then given by $\widehat V(k) - \widehat V^\star$.

\paragraph{Evaluating ``VGD Ratio'' at iteration $k$; $\cvgd_k$.}
Given the sub-optimality evaluated as described in the previous paragraph, we report $\cvgd_k$ by computing the following:
\begin{align*}
    \cvgd_k \eqq
    \frac{\widehat V(\pi^k) - \widehat V^\star}
    {\abr{\widehat \nabla V(\pi^k), \pi^k - \tilde \pi^{k+1}}}
    .
\end{align*}

\paragraph{Local optima.}
As mentioned in \cref{sec:vgd_experiments}, local optima was only an issue (i.e., the case that $\widehat V^\star \neq V^\star(\Pi_{\rm actor})$) in the MinAtar environments. We note that our analysis holds just the same under the assumption the VGD condition is satisfied for $(\cvgd, 0)$ w.r.t.~a given target value $\widehat V^\star$ which is not necessarily the in-class optimal one. This may be interpreted as an execution specific value of $\epsvgd= \widehat V^\star -V^\star(\Pi_{\rm actor})$.
Thus, while the environments in question may not satisfy VGD globally, it seems convergence behavior may nonetheless be governed by the effective VGD parameters encountered during execution.

\paragraph{Hyperparameters.}
For the VGD actor, we used the AdamW optimizer with stepsize $5e-4$, $n_{\rm epochs}^{\rm vgd}=100, n_{\rm mbs}^{\rm vgd}=4$ accross all experiments.
Below, $N=N_{\rm envs}\times N_{\rm steps}$ means $N_{\rm steps}$ where executed in $N_{\rm envs}$ in parallel.
For the actor optimizer we used the Adam optimizer with step size $\eta_{\rm opt}$ (this is the step size of the ``inner'' optimization).
\begin{itemize}
    \item \textbf{Cartpole:}
    $K=40, N=4\times500, n_{\rm epochs}=100, n_{\rm mbs}=4, n_{\rm rep}=5, \eta_{\rm opt}=2e-4, \eta=0.01$.

    \item \textbf{Acrobot:}
    $K=100, N=8\times500, n_{\rm epochs}=100, n_{\rm mbs}=4, n_{\rm rep}=20, \eta_{\rm opt}=4e-4$ with linear annealing, $\eta=0.1$.

    \item \textbf{SpaceInvaders-MinAtar:}
    $K=600, N=16\times1000, n_{\rm epochs}=4, n_{\rm mbs}=8, n_{\rm rep}=5, \eta_{\rm opt}=5e-3$ with linear annealing, $\eta=0.1$.

    \item \textbf{Breakout-MinAtar:}
    $K=200, N=16\times1000, n_{\rm epochs}=100, n_{\rm mbs}=8, n_{\rm rep}=5, \eta_{\rm opt}=5e-3$ with linear annealing, $\eta=0.1$.
    
\end{itemize}
\paragraph{Additional comments.}
\begin{itemize}
    \item The architecture of both actor models is identical to that of the purejaxrl implementation \citep{lu2022discovered}.
    \item For Breakout-MinAtar and SpaceInvaders-MinAtar, execution took approximately 90-120 minutes per seed, on an NVIDIA-RTX-A5000 GPU.
    The experiments for Cartpole and Acrobot were run on similar hardware and took under 20 minutes for all 10 seeds.
\end{itemize}

\end{document}